\documentclass{article}
\usepackage{PRIMEarxiv}
\usepackage[utf8]{inputenc}
\usepackage[T1]{fontenc}
\usepackage{hyperref}   
\usepackage{url}        
\usepackage{booktabs}   
\usepackage{amsfonts}   
\usepackage{nicefrac}   
\usepackage{microtype}  
\usepackage{lipsum}
\usepackage{fancyhdr}   
\usepackage{graphicx}   
\title{ Chain-of-Thought Shows the Path to a Tree: Realizing Branching Complexity }

\author{
  Debanjan Dutta \\
  Indian Statistical Institute \\
  Kolkata, India \\
   \And
  Anish Chakrabarty\\
  LTCI, Télécom Paris\\
  Palaiseau, France
   \And
  Swagatam Das\textsuperscript{$\dagger$} \\
  Indian Statistical Institute \\
  Kolkata, India
}
\usepackage{natbib}

\hypersetup{colorlinks=true,
}
\usepackage{amsmath, amsthm, amsfonts, amssymb, bbm, pifont, pdfpages, hyperref, cleveref, enumitem, wrapfig, mathtools}
\usepackage[dvipsnames]{xcolor}

\newtheorem{theorem}{Theorem} 

\usepackage{aliascnt}
\newaliascnt{lemma}{theorem}
\newtheorem{lemma}[lemma]{Lemma}
\aliascntresetthe{lemma}

\newaliascnt{proposition}{theorem}
\newtheorem{proposition}[proposition]{Proposition}
\aliascntresetthe{proposition}

\theoremstyle{remark}
\newtheorem{remark}{Remark}[theorem]

\makeatletter
\def\th@remark{
  \thm@headfont{\bfseries\itshape} 
  \thm@notefont{\normalfont\itshape}         
  \normalfont                      
}
\makeatletter
\def\th@observation{
  \thm@headfont{\bfseries\itshape} 
  \thm@notefont{\normalfont\itshape} 
  \normalfont                           
}
\makeatother
\theoremstyle{observation}
\newtheorem{observation}{Observation}

\makeatletter
\def\th@fact{
  \thm@headfont{\bfseries\itshape} 
  \thm@notefont{\normalfont\itshape} 
  \normalfont                           
}
\makeatother
\theoremstyle{fact}
\newtheorem{fact}{Fact}

\theoremstyle{definition}
\newtheorem{definition}{Definition}[theorem]

\newtheorem*{reptheorem}{\autoref{\currentreflabel}}
\newenvironment{restate}[1]
  {\def\currentreflabel{#1}\begin{reptheorem}}
  {\end{reptheorem}}
\newtheorem*{replemma}{\autoref{\currentreflabel}}

\newcommand{\R}{\mathbb{R}}
\newcommand{\Z}{\mathbb{Z}}
\newcommand{\dij}{\mathsf{Dkst}}
\newcommand{\dfs}{\mathsf{dfs}}
\newcommand{\Dfs}{\mathsf{DFS}}
\newcommand{\dis}{\mathsf{Dist}}
\newcommand{\nei}{\mathcal{N}}
\DeclareMathOperator{\argmin}{argmin}

\DeclareMathOperator{\relu}{ReLU}
\DeclareMathOperator{\ind}{\mathbbm{1}}

\DeclareMathOperator{\hgt}{ht}
\DeclareMathOperator{\wid}{wd}
\DeclareMathOperator{\str}{\mathsf{st}}
\newcommand{\dep}{\mathrm{depth}}
\newcommand{\UU}{\mathtt{U}}
\newcommand{\DD}{\mathtt{D}}

\usepackage[ruled,linesnumbered]{algorithm2e}
\makeatletter
\renewcommand{\algocf@float}{false}
\makeatother

\usepackage{etoolbox}
\makeatletter
\patchcmd{\@algocf@start}{-1.5em}{0em}{}{}
\makeatother
\makeatletter
\SetAlgoCaptionSeparator{.}

\newcommand{\thmautoref}[1]{\hyperref[#1]{Thm.~\ref*{#1}}}

\usepackage{tikz}
\usetikzlibrary{arrows.meta}
\usetikzlibrary{arrows.meta, calc}

\definecolor{colA}{HTML}{EF4F4F}   
\definecolor{colB}{HTML}{2F8FEF}   
\definecolor{colC}{HTML}{1FAF8F}   
\definecolor{colD}{HTML}{E8820C}   
\definecolor{colE}{HTML}{8B5CF6}   
\definecolor{axpink}{HTML}{D4537E}
\definecolor{pathteal}{HTML}{1D9E75}
\definecolor{darkblue}{rgb}{0, 0, 0.5}

\tikzset{tnode/.style={circle, inner sep=0pt, minimum size=4.5pt}}

\hypersetup{
        pdfinfo={
            Author={Debanjan Dutta, Anish Chakrabarty, Swagatam Das},
            Title={ Chain-of-Thought Shows the Path to a Tree: Realizing Branching Complexity },
            Subject={arXiving 2026 Work},
            Keywords={CoT, Branching Complexity, Strahler Number},
            CreationDate={10/08/2026--01:34:00},
            ModDate={\the\day/\the\month/\the\year\space},
            Creator={PdfLaTeX v3.14},
            Producer={LaTeX},
        },
        colorlinks=true,
        allcolors=darkblue,
        citecolor=Aquamarine,
    }

\begin{document}
\maketitle
{%
  \renewcommand{\thefootnote}{}%
  \footnotetext{\textsuperscript{$\dagger$}Corresponding author email: \href{mailto:swagatam.das@isical.ac.in?subject=[From arXiv] CoTRank Paper}{\texttt{swagatam.das@isical.ac.in}}}%
}

\begin{abstract}
    Chain of Thought (CoT) lifts the expressive ceiling of bounded-depth Transformers, with characterizations tying the number of CoT steps to circuit complexity classes. What remains largely missing are concrete instantiations with explicit, depth-bounded constructions, and the traversal procedures such characterizations presuppose. We close this gap for branching complexity. We give CoT realizations of depth-first search (DFS) and of Dijkstra algorithm, the latter subsuming breadth-first search, by unique hard-attention decoders of at most two layers, and use them as a shared computational substrate: reusing the DFS decoder yields the Strahler number of an $n$-vertex tree in $2n-1$ steps with four layers, and reusing the Dijkstra decoder yields its width in $n-1$ steps with three. Since computing the Strahler number of a binary tree given as a term is \textsf{NC\textsuperscript{1}}-complete, and our constructions handle arbitrary $n$-ary trees without layer normalization or positional encodings, this is a non-trivial witness for the linear-step regime of the CoT hierarchy. Exploiting the classical bijection between ordered trees and Dyck paths, itself realized by our DFS construction, which emits the path as it traverses, we give independent constructions for both measures on the path representation.
\end{abstract}

\section{Introduction}
\label{sec:intro}
    Chain-of-thought (CoT) prompting has turned autoregressive Transformers into systems that appear to execute procedures rather than merely predict tokens \citep{wei2022chain}. Going beyond empirical curiosity, \citet{merrill2024expressive} show that CoT lets bounded-depth Transformers escape the complexity-class ceiling of a single forward pass, and \citet{barcelo2025ehrenfeuchthaussler} go further, giving an exact fixed-step characterization in terms of the Ehrenfeucht–Haussler (EH) rank of a function. Such \textit{possibility} results still allow for explicit, minimal-depth CoT construction solving non-trivial problems exhibiting the traversal or recursion, which are only known to exist. \citet{barcelo2025ehrenfeuchthaussler} themselves note that their bound is achieved through exhaustive tree traversal, but they do not construct the traversal. \citet{zhu2026emergence} study whether continuous CoT can solve reachability, but only empirically. Our work begins with this fact in mind.

    Branching complexity of a tree \textemdash measured classically by the Strahler number \citep{strahler1957quantitative} and, independently, by the width of a tree \textemdash is one of the oldest quantitative questions about hierarchical structure, with consequences ranging from the minimum number of registers needed to evaluate an arithmetic expression \citep{flajolet1979number} to bounds on random tree shape \citep{addario2013subgausian} to the size of the largest antichain \citep{dilworth1950decomposition}. Moreover, it turns out to be readily tied to the CoT lore. The Strahler number of binary trees and the EH rank on binary decision trees satisfy the same recursive update \citep{dahiya2021onsimple}, and \citet{ganardi2026complexity} have just shown that computing the Strahler number of a binary tree is \textsf{NC\textsuperscript{1}}-complete {--} precisely the class \citet{merrill2024expressive} associate with a \textit{linear} number of CoT steps. In simpler terms, we now know of a rank measure known to sit at the boundary of what constant (or, linear) CoT depth can express, and a general theorem stating that such measures should be expressible with the right traversal machinery. In this work, we introduce the machinery.

    We give explicit constructions of depth-first search (DFS), breadth-first search (BFS, as a special case of \citeauthor{dijkstra1959note}'s algorithm), and Dijkstra, realized by Transformer decoders with two layers and two attention heads under CoT. The findings do not emerge as isolated expressivity results but are based on a \textit{computational substrate} on which branching complexity becomes CoT-accessible. For example, reusing the same DFS realization that produces the traversal order, we compute the Strahler number in $2n-1$ steps for an $n$-vertex tree (\autoref{thm:cottreestrahler}), and reusing the Dijkstra realization, we compute the width of a tree in $n-1$ steps (\autoref{thm:cotwidoftree}). Because computing the Strahler number of a general $n$-ary tree is a strict generalization of the binary, tree setting in which the \textsf{NC\textsuperscript{1}}-completeness result was established, this is a non-trivial concrete witness for the \citeauthor{merrill2024expressive} framework, obtained without appealing to auxiliary primitives such as layer normalization that prior CoT constructions have relied on.

    The traversal machinery gives us a second, independent route to the same measures. The classical bijection between ordered trees and Dyck paths \citep{goldman1992lattice} allows us to flatten a tree onto a one-dimensional lattice path while providing an algorithmic witness of the bijection. The fact that the number of full binary trees $G_T$ with $n$ internal vertices and Strahler number $\str(G_T)$ turns out to be the same as the number of Dyck paths of length $2n$ whose height $h$ satisfies $\lfloor \log_2(1+h)\rfloor = \str(G_T)$ \citep{flajolet1979number,addarioberry2024refinedhortonstrahlernumbersi} becomes crucial in context. The DFS realization of \autoref{thm:cotdfs} \emph{is} the map from tree to path, since a traverse step is exactly an up-step and a backtrack step is exactly a down-step. This raises a natural question that, to our knowledge, has not been asked in the CoT-expressivity literature: given two representations of the same combinatorial object related by an explicit, algorithmically realized bijection, do CoT realizations of a derived quantity (here, Strahler number and width) transfer across the bijection, or must they be constructed independently on each side? We answer this by proving identifiability results and associated constructions for both representations (Theorems~\ref{thm:cottreestrahler}, \ref{thm:cotpathstrahler} for Strahler number; Theorems~\ref{thm:cotwidoftree}, \ref{thm:cotwidofpath} for width). The constructions, following \citet{rizvi2024simulating}, involve bilinear maps. While we are interested in proving existence, an interesting open question emerges out of our inquiry: Is CoT realizability closed under bijective changes of representation, and hence, in general, under composition?

    \begin{wrapfigure}[24]{r}{0.45\textwidth}
        \centering
        \resizebox{\linewidth}{!}{
            \begin{tikzpicture}
                \input{Diagrams/flow}
            \end{tikzpicture}
        }
        \caption{Overview of the paper's constructions and their relationships. The unfolding, $\phi$, maps a tree $G_{T}$ to its Dyck path by recording the DFS traversal used in \autoref{thm:cotdfs}, and the reconstruction, $\psi$, is realized via \autoref{algo:pathtotree} (\autoref{thm:cotpathtotree}). Besides identifying $\str$ and $\wid$ across this bijection, we provide standalone CoT-realization pathways for each directly on both representations: $G_{T}$ (Theorems \ref{thm:cottreestrahler} and \ref{thm:cotwidoftree}, respectively) and paths (Theorems \ref{thm:cotpathstrahler} and \ref{thm:cotwidofpath}).
        }
        \label{fig:contributionflow}
    \end{wrapfigure}
    Below we summarize our main \textbf{contributions}.
    \begin{itemize}[leftmargin=*, itemsep=0pt, wide=0pt]
        \item We give the first CoT realizations of general graph traversal (DFS and Dijkstra) by hard-attention Transformer decoders, each requiring at most two layers and two heads, and show that Dijkstra realization yields a step-count advantage over comparison-based implementations by replacing the search for a minimum-distance vertex with a single constant-time attention operation.
        \item We use these traversal realizations as a shared computational substrate to give explicit, small-depth CoT constructions for two independent notions of branching complexity, the Strahler number and width of a tree, providing a concrete, non-trivial instantiation of the CoT-rank framework after \citet{barcelo2025ehrenfeuchthaussler} on a measure already known to be \textsf{NC\textsuperscript{1}}-complete in the binary case \citep{ganardi2026complexity}, generalized here to arbitrary ordered trees.
        \item Exploiting the classical bijection between trees and Dyck paths, realized algorithmically by our DFS construction, we give independent CoT constructions for both measures on the path representation, and show that the resulting constructions do not transfer readily across the bijection without change in mechanism or layer count.
    \end{itemize}

\noindent
\textbf{Organization.} \autoref{sec:prelim} introduces the necessary preliminaries on graph traversal and branching complexity. \autoref{sec:trans} formalizes the CoT realization framework used throughout. \autoref{sec:results} contains the main results: traversal constructions (DFS and Dijkstra) (\autoref{ssec:foundresults}) and realization of the Strahler number and tree width on both trees and Dyck paths along with the path-reconstruction algorithm (\autoref{ssec:centralresults}). \autoref{sec:conclude} concludes with a comprehensive discussion. Full proofs are deferred to the Appendix.

\noindent\textbf{Related Works.}
    The expressivity of Transformers is well studied, from single-pass characterizations of encoders and decoders \citep{strobl2024formal} to step-indexed accounts of chain-of-thought decoders placing logarithmic and linear reasoning budgets in \textsf{L} and \textsf{NC\textsuperscript{1}} respectively \citep{merrill2024expressive, li2024chain}. In parallel, graph-algorithmic constructions realize DFS, BFS, and Dijkstra via looped Transformers \citep{giannou2023looped} with graph-interacting heads \citep{de2024simulation, sanford2024understanding}, and empirical CoT studies address reachability without materializing traversal \citep{zhu2026emergence}; each lies outside the autoregressive step-counted CoT setting, leaving traversal yet to be explicitly realized as CoT steps. The concrete problems that inhabit the linear-step regime remain few. The Strahler–EH correspondence \citep{dahiya2021onsimple}, together with the \textsf{NC\textsuperscript{1}} membership of Strahler number computation \citep{ganardi2026complexity}, motivates our focus on branching complexity. \autoref{app:relworks} presents a comprehensive discussion.
    
\section{Branching Complexity and Graphs}
\label{sec:prelim}
    \subsection{Graphs and their traversal techniques}
    \label{ssec:prelimgraphs}
        Let $G = (V, E, A)$ be a simple connected directed graph with vertex set $V = \{v_0, \ldots, v_{n-1}\}$, edge set $E \subseteq V \times V$, and adjacency matrix $A: V \times V \to \mathsf{A}$ over an arbitrary domain $\mathsf{A}$, where $a_{ij} = A(v_i, v_j)$ denotes the weight of edge $e_{ij} = (v_i, v_j)$.
    
        \textbf{Dijkstra Algorithm.} Restricting to $\mathsf{A} = \mathbb{R}_{> 0}$ and fixing a source $s = v_p \in V$, the \citeauthor{dijkstra1959note} algorithm computes single-source shortest paths. The instantaneous description of the dynamic at step $t$ is the tuple $\dij^{(t)} = (\dis^{(t)}, cur^{(t)}, vst^{(t)})$, with $\dis^{(t)} \in (\R_{> 0} \cup \{\infty\})^n$ being the tentative distance vector, $cur^{(t)} \in V$ the current vertex, and $vst^{(t)}: V \to \{0,1\}$ the visited indicator. The system is initialized by $cur^{(0)} = v_p$, $\dis^{(0)}[p] = 0$, $\dis^{(0)}[i] = \infty$ for all $v_i \neq v_p$, and $vst^{(0)}(v_p) = 1$ with $vst^{(0)}(v_i) = 0$ otherwise. At each step, given $cur^{(t)} = v_i$, the transition relaxes all outgoing edges by setting $\dis^{(t+1)}[j] = \min(\dis^{(t)}[j], \dis^{(t)}[i] + a_{ij})$ for every $v_j \in V$, selects the next vertex $cur^{(t+1)} = v_{j^\prime}$ where $j^\prime = \argmin_{j}\{\dis^{(t+1)}[j] \mid vst^{(t)}(v_j) = 0\}$, and marks $vst^{(t+1)}(v_{j^\prime}) = 1$ while preserving all prior visits. At termination, $\dis^{(n-1)}[i] = \min_{P \in \mathcal{P}(s,v_i)} w(P)$ for every $v_i \in V$, where $\mathcal{P}(s, v_i)$ denotes the set of directed $(s, v_i)$-paths and $w(P) = \sum_{(v_k, v_l) \in P} a_{kl}$.
    
        Observe that Dijkstra algorithm can be interpreted as a weighted generalization of the BFS traversal. In this context, we redefine $\mathsf{A} = \{1, \infty\}$ to indicate the presence or absence of an edge $e_{ij}$, where a value $1$ denotes that the edge $e_{ij}$ exists and $\infty$ implies that it does not. 
    
        \textbf{Depth-First Traversal.} For DFS traversal \citep{aho1974design} $\Dfs$, we choose $\mathsf{A} = \{0,1\}$. Then, $\nei(v_i) = \{v_j \mid a_{ij} = 1\}$ and let $n = |V|$. The instantaneous description of the dynamic $\Dfs^{(t)}$ is denoted by the tuple $(\dfs^{(t)}, par^{(t)}, vst^{(t)})$, such that $\dfs^{(t)}: V \to V$, $par^{(t)}: V \to V \cup \{\phi\}$ and $vst^{(t)}: V \to \{0,1\}$ returns an spanning tree $T$ from any basic diagraph \citep{gessel1979dfs} $G$ in $O(|V| + |E|)$ moves.  At $t=0$,  given a starting vertex $s =v_p \in V$, $vst^{(0)}(s) = 1$ and $vst^{(0)}(v_i) = 0$ for all $v_i \in V \setminus \{s\}$ and $par^{(0)}(v_i) = \phi$ for all $v_i$. Then the $t$\textsuperscript{th}  dynamic is given by
        \begin{equation}
        \label{eq:dfsdynamic}
            \begin{aligned}
                \dfs^{(t)}(s) &= \begin{cases}
                    v_j \quad \text{ if } v_j \in \nei(\dfs^{(t-1)}(s)) & \text{with } vst^{(t-1)}(v_j) = 0,\\
                    par^{(t-1)}(\dfs^{(t-1)}(s)), & \text{otherwise.}
                \end{cases}
            \end{aligned}
        \end{equation}
        In the former case, \emph{traverse}, $par^{(t)}(v_j) = \dfs^{(t-1)}(s)$ and $vst^{(t)}(v_j) = 1$; all remaining entries of $par^{(t)}$ and $vst^{(t)}$ are inherited from step $t-1$. In the latter case, \emph{backtrack}, the mappings remain unchanged. The algorithm terminates after backtracking from $s$.

        Throughout the remainder, we write $n = |V|$ unless otherwise stated, with $n$ also denoting the length of a lattice path where relevant. Furthermore, whenever the edge set $A$ is omitted in the definition of a graph, we refer to its unweighted variant.

    \subsection{Trees, Paths, and Branching Complexity}
    \label{ssec:prelimtrees}
        If in a graph $G_T = (V, E)$ there is exactly one path between any two distinct vertices $v_i, v_j \in V$, then $G_T$ is a tree, and $|E| = |V| - 1$. For a finite rooted tree $G_T$ \citep{diestel2017graph}, let $\dep(v)$ denote the number of edges on the unique path from the root $r$ to vertex $v$. The height of $G_T$ is defined as $\hgt(G_T):= \max_{v \in V} \dep(v)$, and width as
        \begin{align*}
            \wid(G_T) &:= \max_{k \geq 0}|\{v \in V : \dep(v) = k\}|.
        \end{align*}
        \begin{observation}[Determining $\wid$ while performing BFS]
        \label{obs:dynamicwidbfs}
            To algorithmically determine $\wid(G_T)$, the width of tree $G_T$, we employ BFS as a special version of the Dijkstra algorithm. Let $v_{j}$ and $v_{j^\prime}$ be the vertices such that $vst^{(t)}(v_{j}) = vst^{(t)}(v_{j^\prime})=0$; $vst^{(t+1)}(v_{j}) = 1, vst^{(t+1)}(v_{j^\prime}) = 0$; and $vst^{(t+2)}(v_{j}) = vst^{(t+2)}(v_{j^\prime}) = 1$. Note that, $\dis^{(t+2)}[j^\prime] - \dis^{(t+2)}[j] \in \{0, 1\}$. Clearly, when $\dis^{(t+2)}[j^\prime] = \dis^{(t+1)}[j]$, $\dep(v^\prime) = \dep(v)$; and $\dep(v^\prime) = \dep(v) + 1$, otherwise. To capture the width during $\dij^{(t)}$, we augment it with scalars $twd^{(t)}$ and $mwd^{(t)} \in \Z$ so that the former stores $|\{v_i : vst^{(t)}(i) = 1$ and $\nexists j \ (\dis^{(t)}[j] \ne \infty \land \dis^{(t)}[j] > \dis^{(t)}[i])\}|$ while the latter stores $\max_{t^\prime < t}twd^{(t^\prime)}$. Unless $G_T$ is a null graph, $twd^{(t)}$ and $mwd^{(t)}$ for all $t$ must be at least $1$. Then initialized by $twd^{(0)} = mwd^{(0)} = 1$ indicating the root is at depth $1$, $mwd^{(n-1)} = \wid(G_T)$.
        \end{observation}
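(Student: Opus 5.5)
We prove the Observation by induction over the steps of the BFS instance of Dijkstra, with edge weights in $\{1,\infty\}$ and source the root $r$. Two standard facts come first. In a tree, relaxing all outgoing edges of the current vertex $v_i$ sets $\dis[j]=\dis[i]+1=\dep(v_j)$ for each child $v_j$, and no later relaxation can lower it, because each vertex has a unique parent. Hence every finite tentative distance equals the true depth. We also show that the sequence of extracted vertices has nondecreasing depth, and that all finite distances differ by at most one. The reason is that the frontier only ever holds vertices at depth $d$ and $d+1$ when the current vertex has depth $d$; this is the usual BFS layering invariant, proved by induction on $t$ from the argmin selection rule. Together these give the claim $\dis^{(t+2)}[j']-\dis^{(t+2)}[j]\in\{0,1\}$ for consecutively extracted $v_j,v_{j'}$. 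They also give the dichotomy: $\dep(v_{j'})=\dep(v_j)$ when the distances agree, and $\dep(v_{j'})=\dep(v_j)+1$ otherwise.

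Next we fix the meaning of $twd^{(t)}$ and state the update rule. Read through the layering invariant, $twd^{(t)}$ counts the visited vertices whose depth equals the current maximal depth among visited vertices. The wording "$\nexists j$ with a finite larger distance" is meant to range over visited $j$; otherwise frontier children would spoil it. The rule is then: when the newly extracted vertex $v_{j'}$ has the same distance as its predecessor $v_j$, set $twd^{(t+1)}=twd^{(t)}+1$; when its distance is larger by one, set $twd^{(t+1)}=1$. In both cases $mwd^{(t+1)}=\max(mwd^{(t)},twd^{(t+1)})$. The invariant says that $twd^{(t)}$ equals the number of visited vertices at depth $\dep(cur^{(t)})$, and that $mwd^{(t)}$ equals the maximum level count over the levels visited so far. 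It holds at $t=0$, since only the root is visited and $twd^{(0)}=mwd^{(0)}=1$. The inductive step follows from the dichotomy in the previous paragraph.

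Termination is the final step. A connected tree with $n$ vertices has all of them extracted after $n-1$ steps. At that point every level $k$ has been completely counted: because of the monotone extraction order, the counter for level $k$ reaches $|\{v:\dep(v)=k\}|$ at the last extraction of that level, before it resets. So $mwd^{(n-1)}=\max_k|\{v:\dep(v)=k\}|=\wid(G_T)$. The lower bound $\ge1$ for a non-null tree is immediate from the initialization.

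We expect the main obstacle to be the layering/monotonicity invariant together with the tie-breaking in the argmin. We must argue that ties among vertices at equal distance never cause a deeper vertex to be extracted before a shallower one is exhausted. Our first attempt is a direct induction showing that the unvisited finite-distance set always has distances in $\{d,d+1\}$, where $d=\dep(cur^{(t)})$. That would make the argmin, whatever its tie-breaking, pick depth $d$ whenever such a vertex remains.
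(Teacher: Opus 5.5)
Your argument is correct and follows the paper's own route, with the missing steps filled in. The paper gives no separate proof of this observation: it only asserts the $\{0,1\}$ gap between consecutively extracted vertices. \autoref{lem:widsupffn} (used in \autoref{thm:cotwidoftree}) then implements exactly your rule: increment $twd$ when the distances are equal, reset it to $1$ on a unit jump, and update $mwd$ by a maximum. Your layering invariant (finite distances equal depths, frontier distances lie in $\{d,d+1\}$, extraction order is monotone) is the justification the paper leaves out.

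State your readings of the definitions explicitly rather than silently:
\begin{itemize}
\item Restricting $j$ in $twd^{(t)}$ to visited vertices is necessary, as you note.
\item Your rule computes $\max_{t'\le t} twd^{(t')}$, not the stated $\max_{t'<t} twd^{(t')}$. With the strict maximum the final identity already fails for a root with two leaf children: $mwd^{(2)}=1\neq 2=\wid(G_T)$.
\item ``All finite distances differ by at most one'' should say ``all frontier distances,'' since the root sits at distance $0$.
\end{itemize}
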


        \begin{figure*}
            \centering
            \resizebox{\linewidth}{!}{
                \begin{tikzpicture}
                    \input{Diagrams/pathfolding}
                \end{tikzpicture}
            }
            \caption{Following the $\Dfs$ dynamic, the map $\phi$ on the tree $G_T$ (left) generates the Dyck path (middle). \iffalse Specifically, a \emph{traverse} step on $G_T$ corresponds to an up step in the Dyck path, while a \emph{backtrack} step corresponds to a down step.\fi We define the reconstruction $\psi$ of the tree from the Dyck word (or path) via \autoref{algo:pathtotree}, with a demonstrative (partial) reconstruction illustrated on the right. The edge pairs $(11,3), (12,4)$ and $(12,4), (13,3)$ are folded into a single tree edge $(v_1, v_2)$. The \colorbox{colC!20}{highlighted region} on the left is formed by folding the region between $(10,2)$ and $(16,2)$. }
            \label{fig:pathfolding}
        \end{figure*}

        Although this is a very natural notion of branching complexity, the \citeauthor{strahler1957quantitative} number $\str$, on the other hand, is calculated recursively. For leaf node $v$, $\str(v) = 0$ and a node with subtrees $\tau_1, \ldots, \tau_k$, let $M = \max_i\,\str(\tau_i)$. Then,
        \[
            \str(G_T) = \begin{cases} 
                M + 1 & \text{if } c=|\{i : \str(\tau_i) = M\}| \geq 2 \\
                M & \text{otherwise.} 
            \end{cases}
        \]
        For binary trees, this reduces to $\str(G_T) = \max(\min(\str(\tau_1), \str(\tau_2))+1, \max(\str(\tau_1), \str(\tau_2)))$, which, interestingly, coincides with the expression of the Ehrenfeucht-Haussler rank for binary decision trees.
        \begin{observation}[Determining $M$ and $c$ whiler performing $\Dfs$]
        \label{obs:dynamicstrdfs}
            We reuse the dynamic formulation \eqref{eq:dfsdynamic}, which is evaluated in a bottom-up manner. Specifically, when the dynamic process is in a \emph{traverse} step, we assign $t_{\dfs^{(t)}(s)}$, the tree rooted at the vertex $\dfs^{(t)}(s)$, the value $0$. Note that in this situation, $\tau_{\dfs^{(t)}(s)}$ consists of a single vertex. During a \emph{backtrack} step, let $v = \dfs^{(t)}(s)$ and $u = \dfs^{(t+1)}(s)$. At this point, we record $\str(\tau_u)$ for the tree rooted at the vertex $u$ that has been explored up to step $t$. We let $M_u^{(t)}$ and $c_u^{(t)}$ denote, respectively, the maximum Strahler number observed among the already explored children of $u$ other than $v$\footnote{or one more than it, in a boundary case when $D=0$ is observed a priori.}, and the number of children attaining this maximum value. Analogously, $M_v^{(t)}$ and $c_v^{(t)}$ denote the maximum Strahler number observed among the children of $v$ and the corresponding multiplicity of this maximum.Then $\str(\tau_v) = M_v^{(t)} + \ind[c_v^{(t)}\ge2]$. Let $D = \str(v) - M_u^{(t)}$.  
            \begin{equation}
            \label{eq:dynamicstrdfs}
                \begin{aligned}
                    D & \begin{cases}
                        < 0 & M_{v}^{(t+1)} = M_u^{(t)}, c_v^{(t+1)} = c_u^{(t)}\\
                        = 0 & \begin{cases}
                                M_{v}^{(t+1)} &= M_u^{(t)} + \ind[c_u^{(t)} \ge 1]\\
                                c_v^{(t+1)} &= \ind[c_u^{(t)} +1 < 2]
                            \end{cases}\\
                        > 0 & M_{v}^{(t+1)} = \str(\tau_v), c_v^{(t+1)} = 1,
                    \end{cases}
                \end{aligned}
            \end{equation}
            which reduces to 
            \begin{equation*}
                \begin{aligned}
                    M_{v}^{(t+1)} &= \max(\str(\tau_v), M_u^{(t)}) + \ind[D=0] \ind[c_u^{(t)} > 0]\\
                    c_v^{(t+1)} &= \ind[D<0]c_u^{(t)} + \ind[D=0]\ind[c_u^{(t)}=0] + \ind[D>0]1.
                \end{aligned}
            \end{equation*}
        \end{observation}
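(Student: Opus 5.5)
The plan is to prove the update rule \eqref{eq:dynamicstrdfs} by induction on the $\Dfs$ steps of \eqref{eq:dfsdynamic}. The induction carries an invariant on the pair $(M_u^{(t)}, c_u^{(t)})$ attached to every vertex $u$ whose subtree is partially explored. The key observation is that the pair is not literally ``maximum and multiplicity''. The footnote's boundary case shows that once two children attain the maximum, the pair is promoted to $(M+1, 0)$.

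I would state the invariant as follows. Let $\tau_u^{(t)}$ be the subtree of $u$ restricted to its children already backtracked from by step $t$. Then $M_u^{(t)} = \str(\tau_u^{(t)})$, with the convention that a vertex with no explored children has value $0$. The flag $c_u^{(t)} \in \{0,1\}$ equals $1$ iff exactly one explored child $w$ satisfies $\str(\tau_w) = M_u^{(t)}$. Otherwise $c_u^{(t)} = 0$, which covers the case of no children, the case where the value was produced by a promotion, and the case where every explored child has Strahler number below $M_u^{(t)}$. With $c\le 1$, the formula $\str(\tau_v) = M_v^{(t)} + \ind[c_v^{(t)}\ge 2]$ collapses to $\str(\tau_v) = M_v^{(t)}$. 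This is consistent with the invariant.

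\textbf{Base case.} On a traverse step the new vertex is a leaf, initialized to $(0,0)$, and $\str = 0$ as required.

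\textbf{Inductive step.} On a backtrack from $v$ to $u$, the subtree $\tau_v$ is complete, so $\str(\tau_v)$ is known by the induction hypothesis. Adding $v$ to $u$'s explored children changes $\str(\tau_u^{(t)})$ according to the recursive definition of $\str$. I would check this case by case on $D = \str(\tau_v) - M_u^{(t)}$, writing $(M',c')$ for the updated pair stored for $u$ (the statement's $(M_v^{(t+1)}, c_v^{(t+1)})$).
\begin{itemize}
\item If $D<0$, then $v$ cannot affect the maximum or the relevant count, so the pair is unchanged.
\item If $D>0$, then $v$ is the unique child of maximal rank, so $(M',c') = (\str(\tau_v),1)$.
\item If $D=0$ and $c_u^{(t)}=1$, then two children now share the maximum. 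The Strahler number becomes $M_u^{(t)}+1$, and no explored child attains it, so $(M',c') = (M_u^{(t)}+1, 0)$.
\item If $D=0$ and $c_u^{(t)}=0$, then either $u$ had no explored children, or the value $M_u^{(t)}$ came from a promotion with all children strictly below it. In both cases $v$ becomes the unique child at rank $M_u^{(t)}$, so the value stays the same and the flag becomes $1$.
\end{itemize}
The last two cases are exactly $M' = M_u^{(t)} + \ind[c_u^{(t)}\ge1]$ and $c' = \ind[c_u^{(t)}+1<2]$. In the promoted-$(M,0)$ sub-case, a subsequent child of rank $M$ must yield rank $M$ with flag $1$, and a second such child must yield $M+1$. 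Both outcomes follow from the recursive definition, since the top rank then has multiplicity one and then two.

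The main obstacle is pinning down this invariant correctly, because it is not the naive reading of ``$c$ = multiplicity'' and the footnote is terse. In particular, I must make sure the value $0$ of $c$ is never read ambiguously. For the recursion this ambiguity is harmless: in the no-children state and in the promoted state, the next update depends only on $D$ in the same way.

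Finally, the reduced closed form follows by verifying the three cases against the indicator expressions. For $D<0$, $\max(\str(\tau_v),M_u^{(t)}) = M_u^{(t)}$ with no correction term. For $D>0$, the max is $\str(\tau_v)$ and $c'=1$. For $D=0$, the max equals $M_u^{(t)}$ plus $\ind[c_u^{(t)}>0]$, and $c' = \ind[c_u^{(t)}=0]$, which matches $\ind[c_u^{(t)}+1<2]$ because $c_u^{(t)}\in\{0,1\}$.
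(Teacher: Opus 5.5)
Your proof is correct and takes essentially the paper's route. The paper offers no argument beyond the observation's own case table: a split on the sign of $D$ and, when $D=0$, on $c_u^{(t)}$, read through the recursive definition of $\str$ and the footnote's promotion to one more than the maximum. Your induction verifies exactly that table. Your explicit invariant---$M$ is the Strahler number of the already-explored part of the subtree, and $c\in\{0,1\}$ flags a unique explored child attaining it---is a faithful sharpening of that footnote. It also shows that the correction $\ind[c_v^{(t)}\ge 2]$, and the range $c\in\{0,1,2\}$ assumed in the later tree construction, are never actually triggered.
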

        
        A lattice path of length $n$ with step set $\mathcal{S} \subseteq \mathbb{Z}$ is a sequence $(0,s_0), (1,s_1), \ldots, (n,s_n)$ with $s_0=0$ and $s_i - s_{i-1} \in \mathcal{S}$ for all $i \geq 1$; it is an \emph{excursion} \cite{flajolet2009analytic} if $s_i \geq 0$ throughout and additionally $s_n=0$. An excursion with step-set $\mathcal{S}=\{+1,-1\}$ is famously known as a Dyck path. By writing $\UU$ for $+1$ and $\DD$ for $-1$, Dyck paths are presented as Dyck words over $\{\UU,\DD\}$. The bijection between trees and these paths is well known \cite{goldman1992lattice}. Given $G_T$, we run a DFS starting from root $r$ and record a $\UU$ each time the first case of the dynamic $\dfs$ occurs, and a $\DD$, otherwise. Since every one of the $n-1$ edges is traversed once in each direction, the resulting word $\phi(G_T)$ has length $2(n-1)$. Going the other way, given an excursion  $w$ of length $2m$, we build the tree $\psi(w)$: a $\UU$ creates a new node attached as the next ordered child of the current node and moves the pointer down to it, while a $\DD$ returns to the parent (see \autoref{fig:pathfolding}). Therefore,

        \begin{lemma}
        \label{lem:bijectionpathtree}
            There exists maps $\phi$ and $\psi$, mutually inverse bijections between the set $\mathcal{T}_n$ of all finite ordered trees and the set $\mathcal{D}_{n-1}$ of Dyck words of length $2(n-1)$.
        \end{lemma}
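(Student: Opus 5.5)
We take $\phi$ and $\psi$ to be the maps described just before the statement. Here $\mathcal{T}_n$ is the set of ordered (plane) rooted trees on $n$ vertices: each vertex's children carry a linear order, and the $\Dfs$ dynamic \eqref{eq:dfsdynamic} visits them in that order. The plan has three steps: check that both maps land in the right sets, then show $\psi\circ\phi=\mathrm{id}$ and $\phi\circ\psi=\mathrm{id}$. All arguments are by induction on $n$ (equivalently on the word length $2(n-1)$), using the standard first-return decomposition of Dyck words.

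\emph{Well-definedness.} For $\phi$, each of the $n-1$ edges produces exactly one traverse step, recorded as $\UU$, and exactly one backtrack step, recorded as $\DD$, so the word has length $2(n-1)$. We show the height after each step equals the current depth $\dep(\dfs^{(t)}(s))$. Since depth is nonnegative and returns to $0$ at termination, the word is a Dyck word.

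For $\psi$, we keep a pointer and maintain the invariant that after reading a prefix its depth equals the prefix height. Because every prefix has height at least $0$, a $\DD$ is never read while the pointer is at the root, so the parent is always defined. Each $\UU$ creates exactly one new vertex, giving $1+(n-1)=n$ vertices. Children are ordered by creation time, so $\psi(w)$ is an ordered tree.

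\emph{Inverse relations via the decomposition.} Every nonempty Dyck word factors uniquely as $w=\UU\,w_1\,\DD\,w_2$ with $w_1,w_2$ Dyck words, where the displayed $\DD$ is the first return to height $0$. Every ordered tree with at least two vertices factors uniquely as its root's first subtree $\tau_1$ together with the tree $\tau'$ formed by the root and its remaining subtrees. We then check two things:
\begin{itemize}
\item $\phi(G_T)=\UU\,\phi(\tau_1)\,\DD\,\phi(\tau')$. DFS enters the first child, explores $\tau_1$ completely, returns to the root, and then proceeds exactly as it would on $\tau'$.
\item $\psi(\UU w_1\DD w_2)$ is the tree whose first root subtree is $\psi(w_1)$ and whose remaining part is $\psi(w_2)$. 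Reading $w_1$ only ever moves the pointer inside the subtree below the first child, and $w_2$ starts again from the root.
\end{itemize}
Induction then gives $\psi\phi=\mathrm{id}$ and $\phi\psi=\mathrm{id}$. The base case is the single-vertex tree paired with the empty word.

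\emph{Main obstacle.} The delicate step is the claim that the $\Dfs$ dynamic \eqref{eq:dfsdynamic}, which is written for general graphs, splits as $\UU\,\phi(\tau_1)\,\DD\,\phi(\tau')$ on trees. Two points need care. First, we must fix that the neighbour $v_j$ in \eqref{eq:dfsdynamic} is chosen as the least unvisited child in the tree's order; otherwise $\phi$ is not a function of the ordered tree. Second, we must confirm that on a tree oriented away from the root, backtracking from a vertex happens exactly once all of its children are visited, so the run restricted to the subtree below the first child coincides with the run of $\Dfs$ on $\tau_1$. We would prove this second point as a separate claim by induction on the subtree size, tracking $vst^{(t)}$ and $par^{(t)}$.
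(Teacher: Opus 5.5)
Your proposal is correct and follows essentially the same route as the paper. Both argue by structural induction on the DFS word, with the single-vertex tree and the empty word as base case. The paper splits at all root children, $\phi(G_T)=\UU\phi(t_1)\DD\cdots\UU\phi(t_k)\DD$, while you use the first-return factorization $\UU w_1\DD w_2$.

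Your version is somewhat more careful in two respects. The binary factorization makes the direction $\phi\circ\psi=\mathrm{id}$ explicit, where the paper only asserts it ``by the same argument''. You also fix the child-order convention that $\phi$ needs in order to be a function of the ordered tree.
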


        The bijection $\phi$ reduces the combinatorial structure of $G_T$ to the geometry of $\phi(G_T)$.  We make this precise for both $\hgt$ and $\wid$.  Throughout, for $G_T \in \mathcal{T}_n$ we let $(s_0, s_1, \ldots, s_{2(n-1)})$ denote the ordinate sequence of the lattice path induced in $\phi(G_T)$: concretely, $s_0 = 0$ and $s_i - s_{i-1} = +1$ or $-1$ according to the $i$-th letter of $\phi(G_T)$ being $\UU$ or $\DD$, so that $s_i = \sum_{j=1}^{i} q_j$ where $q_j \in \{+1, -1\}$ is the signed value of the $j$-th step.

        \begin{proposition}
        \label{prop:heightpathtree}
            For every $G_T \in \mathcal{T}_n$, $\hgt(G_T) = \max_{0 \le i \le 2(n-1)} s_i$.
        \end{proposition}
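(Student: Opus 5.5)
The plan is to prove a stronger invariant for the DFS dynamic \eqref{eq:dfsdynamic} run from the root $r$: after the $i$-th step of $\phi(G_T)$, the current vertex $\dfs^{(i)}(r)$ satisfies $\dep(\dfs^{(i)}(r)) = s_i$. The proposition then reduces to two inclusions. First, $\max_i s_i = \max_i \dep(\dfs^{(i)}(r)) \le \hgt(G_T)$. Second, every vertex is visited at some step, so $\hgt(G_T) \le \max_i s_i$.

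\textbf{Step 1 (invariant, induction on $i$).} For $i=0$ the pointer is at $r$ and $s_0 = 0 = \dep(r)$. Assume the invariant holds at step $i-1$. If step $i$ is a \emph{traverse}, it moves to an unvisited $v_j \in \nei(\dfs^{(i-1)}(r))$. Because $G_T$ is a tree explored from the root, with edges oriented parent-to-child (or, in the undirected reading, with the parent already visited), the vertex $v_j$ is a child of the current vertex. Hence its depth increases by one, which matches the emitted $\UU$ and $s_i = s_{i-1}+1$. If step $i$ is a \emph{backtrack}, the pointer moves to $par^{(i-1)}(\dfs^{(i-1)}(r))$. Since $par$ records the tree parent, the depth drops by one, matching $\DD$. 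The delicate point is justifying that the parent recorded by DFS is the tree parent, i.e.\ that the DFS spanning tree of $G_T$ rooted at $r$ is $G_T$ itself. This holds because a tree has a unique path from $r$ to each vertex, so every vertex is first reached from its unique tree parent.

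\textbf{Step 2 (coverage).} Every vertex $v$ is visited by some traverse step. Lemma~\ref{lem:bijectionpathtree} and the preceding discussion give that each of the $n-1$ edges is traversed once in each direction, so each non-root vertex is entered exactly once. Let $v^\ast$ attain $\hgt(G_T)$ and let $i$ be the step entering it. Then $s_i = \dep(v^\ast) = \hgt(G_T)$. Together with Step 1 this yields equality.

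The main obstacle is the bookkeeping in Step 1, specifically tying the pointer's position to the index $i$ of the Dyck word, given that termination (``backtracking from $s$'') produces no extra letter. I would handle this by indexing steps only by emitted letters, so that the final backtrack to $r$ is the $2(n-1)$-th step, where $s_{2(n-1)}=0$. A more direct alternative would argue through $\psi$: each $\UU$ creates a child of the current node and moves the pointer to it, while each $\DD$ moves the pointer to the parent. By the same induction, the node created at step $i$ has depth $s_i$, and since every node is created by some $\UU$, the maximum of $s_i$ is attained.
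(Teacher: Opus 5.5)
Your proposal is correct and takes essentially the same route as the paper: an induction showing $s_i = \dep(u_i)$ for the DFS pointer $u_i$ (a $\UU$ moves it to a child, a $\DD$ to the parent), then the observation that every vertex is visited, so $\max_i s_i$ equals the maximum depth, i.e.\ $\hgt(G_T)$. You go beyond the paper in two places it leaves implicit: you justify that the DFS-recorded parent is the tree parent, and you align the step indices with the emitted letters.
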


        \begin{proposition}
        \label{prop:widthpathtree}
            For every $G_T \in \mathcal{T}_n$ with $n \ge 2$, $ \wid(G_T) = \max_{0 < l \le \hgt(G_T)} |\{i : s_i = l, s_i - s_{i-1} = 1\}|$.
        \end{proposition}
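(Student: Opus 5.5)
The plan is to show that, in $\phi(G_T)$, the up-steps ending at height $l$ are in bijection with the vertices of $G_T$ at depth $l$. The identity then follows by taking the maximum over $l$.

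\textbf{Step 1: an invariant of the DFS run.} Run the $\Dfs$ dynamic \eqref{eq:dfsdynamic} from the root $r$. I would first prove, by induction on $i$, that after the first $i$ letters of $\phi(G_T)$ the current vertex $\dfs^{(i)}(s)$ has $\dep(\dfs^{(i)}(s)) = s_i$.
\begin{itemize}
\item At $i=0$ the current vertex is $r$, with depth $0 = s_0$.
\item A \emph{traverse} step moves from the current vertex to an unvisited child. This raises the depth by one and records a $\UU$, so $s_i = s_{i-1}+1$.
\item A \emph{backtrack} step moves to $par(\cdot)$. This lowers the depth by one and records a $\DD$.
\end{itemize}
This is the same invariant that underlies Proposition~\ref{prop:heightpathtree}, and I would cite or reuse it.

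\textbf{Step 2: the bijection at each level.} Every non-root vertex $v$ is entered by exactly one traverse step, namely when it is first visited and $vst(v)$ flips from $0$ to $1$; visited vertices are never traversed into again. So the up-steps of $\phi(G_T)$ correspond bijectively to $V\setminus\{r\}$. By Step 1, the up-step entering $v$ is an index $i$ with $s_i - s_{i-1} = 1$ and $s_i = \dep(v)$. Hence, for each $l \ge 1$,
\[
|\{i : s_i = l,\ s_i - s_{i-1} = 1\}| = |\{v\in V : \dep(v)=l\}|.
\]

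\textbf{Step 3: handling the levels in the maximum.} The definition of $\wid$ takes the maximum over all $k\ge 0$, and level $0$ contains only the root, with count $1$. Levels $k > \hgt(G_T)$ are empty. For $n\ge 2$ the root has at least one child, so level $1$ has count at least $1$, and level $0$ never strictly exceeds the maximum. It is therefore harmless to restrict to $0<l\le \hgt(G_T)$, which gives the stated formula.

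The only real subtlety is the well-definedness used in Step 2: each non-root vertex must be entered by exactly one up-step. This requires that $G_T$ is a tree oriented away from the root, so that the DFS visits every vertex exactly once. It also requires that every edge is traversed once in each direction, as noted before Lemma~\ref{lem:bijectionpathtree}. I expect this to be the main point needing care; the rest is bookkeeping.
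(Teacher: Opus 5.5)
Your proposal is correct and takes the same route as the paper. The paper also reuses the invariant $s_i = \dep(u_i)$ from Proposition~\ref{prop:heightpathtree}, builds the bijection between depth-$l$ vertices and up-steps ending at height $l$ via the first-arrival step, and then shows that dropping level $0$ leaves the maximum unchanged. Your treatment of level $0$, noting that level $1$ is nonempty when $n \ge 2$, is just a more direct version of the paper's case split on whether $\wid(G_T) = 1$.
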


        \begin{remark}[Regularity and Relation between the Metric Measure Spaces]
            The existence and identifiability results above become even more important given the scarcity of similar evidence in a geometric setting, let alone their algorithmic realizability. Plane trees with $n$ edges induce a deterministic bijection with Dyck paths of length $2n$ via the contour walk \citep{viennot2002strahler} and with {\L}ukasiewicz paths via the depth-first walk \citep{Gall2005tree}. However, the regularity of the same is not trivially observed. One can only ensure at the continuum for a continuous excursion $g : [0,1] \rightarrow \mathbb{R}_{+}$ into the metric measure space of trees $\mathcal{T}_{g}$\footnote{endowed with the \textit{pseudometric} $d_{g}(s,t) \coloneqq g(s) + g(t) - 2 \min_{[s \wedge t, s \vee t]}g$, after quotienting by $\{d_{g} = 0\}$ and \textit{measure} $\mu_{g} = (p_{g})_{*}\mu$, where $\mu$ is the Lebesgue measure.} the $2$-Lipschitz-regularity of the forawrd map (in terms of the Gromov-Hausdorff distance) (\citet{evans2008probability}, \citet{Gall2005tree}, Lemma 2.2.3). Then, the inverse embedding (tree $\rightarrow$ path) also fails to be Lipschitz, and thus admits no isometric isomorphism at the continuum level. A deterministic isomorphism of lattices (and hence a graph isometry of the associated metric spaces) can only be realized under triangulation (both endowed with the Tamari partial order) \citep{bernardi2009iso}. While we leave checking the regularity of $(\psi, \phi)$ as future work, it can be noted as a first attempt at scrutinizing their CoT realizability.
        \end{remark}
        
        To construct $\psi(w)$ from $w$, we introduce \autoref{algo:pathtotree} that returns the adjacency matrix of the tree on receiving a Dyck path as input (\autoref{app:maintextprelim} provides the proof of correctness).
        \begin{algorithm}
            \SetAlgoLined\SetAlgoNoLine\SetNlSty{}{}{}\LinesNumbered\RestyleAlgo{ruled}\DontPrintSemicolon\SetAlgoSkip{}\SetKwComment{tcp}{\textcolor{blue}{$\vartriangleright$ }}{}\SetCommentSty{texttt}
            \SetKwIF{If}{ElseIf}{Else}{if}{}{else if}{else}{}
            \SetKwFor{For}{for}{}{}
            \KwIn{A Dyck word $w$ corresponding to lattice path $\{(t,s_t)\}_{t=0}^{2(m-1)}$.}
            Let $A_w$ be the adjacency matrix of $\psi(w)$ initialized to $0$.\;
            $nl \gets 0, i \gets 0$.\;
            \For{$c \in w$}{
            \eIf{$c$ is $\UU$ \tcp*[f]{i.e. $s_{t+1} - s_{t} = 1$.}}{
                $nl \gets nl + 1$.\;
                    $A_w[i, nl] \gets 1, A_w[nl, i] \gets 1$.\;
                    $i \gets nl$.\;
                }{
                    Search $j < i$ s.t. $A_w[j, i] = 1$.\;
                    $i \gets j$.\;
                }
            }
            \Return $A_w$.
            \caption{ Reconstruction: Path $\rightarrow$ Tree ($\psi$) }
            \label{algo:pathtotree}
        \end{algorithm}

\section{Transformer Decoder and CoT}
\label{sec:trans}
    The CoT steps formalize, following \citet{merrill2024expressive, li2024chain}, the autoregressive nature of the Transformer decoder. We first describe the \emph{unique hard}-attention layer and then the full decoder architecture. Our framework largely follows \citet{barcelo2025ehrenfeuchthaussler}, with two necessary modifications that we identify in our exposition.

    \textbf{Unique Hard-Attention.} Given an input sequence $(x_0, \ldots, x_{m-1}) \in (\mathbb{R}^d)^m$, let $Q, K \in \mathbb{R}^{d^\prime \times d}$ denote the query and key projection matrices, respectively. The attention score between the position $i$ and the query position $m-1$ is defined as
    $
        a_{i,m-1} = \langle Kx_i,\, Qx_{m-1} \rangle.
    $
    With respect to the $m$-th query $Qx_{m-1}$, the scalar $a_{i,{m-1}}$ quantifies the affinity of the key $Kx_i$. Let $j_0 < j_1 < \cdots < j_p \in \{0, \ldots, {m-1}\}$ denote the indices, arranged in ascending order, that jointly attain the maximum attention score. Unique hard attention selects the smallest such index $j_0$. Departing from \citet{barcelo2025ehrenfeuchthaussler}, we retain the standard value matrix\footnote{The symbol $V$ is overloaded to denote both the vertex set of a graph and the value projection in attention; the intended meaning will be clear from context.} $V \in \mathbb{R}^{d^{\prime\prime} \times d}$, so that the attention produces $Vx_{j_0}$.

    \textbf{Multi-Head Attention.} A multi-head extension uses $H$ attention heads, where head $h$ ($h\in [H]$) is parameterized by $(Q_{h}, K_{h}, V_{h})$. Let $j_0^{(h)}$ be the index selected by head $h$; the combined output is
    \[
        \alpha^{\prime\prime} = W_O \cdot \left[\left(V_{1}x_{j_0^{(1)}}\right)^\top \Big\|\  \cdots \Big\|\  \left(V_{H}x_{j_0^{(H)}}\right)^\top\right]^\top,
    \]
    where $\parallel$ denotes vector concatenation, $W_O$ is the output projection matrix, and $\alpha^{\prime\prime} \in \R^d$.

    \textbf{Single-Layer Decoder.} A position-wise feed-forward network is applied to the residual $\beta + x_{m-1}$ via two successive linear projections interleaved with a ReLU nonlinearity, where $\operatorname{ReLU}(r) := \max(0, r)$ for $r \in \mathbb{R}$, applied coordinate-wise for $r \in \mathbb{R}^d$. The single-layer decoder thus computes
    \[
        L(x_0, \ldots, x_{m-1}) = W_2\,\operatorname{ReLU}\!\left(W_1(\beta + x_{m-1})\right),
    \]
    for weight matrices $W_1 \in \mathbb{R}^{d^{\prime\prime\prime} \times d}$ and $W_2 \in \mathbb{R}^{d \times d^{\prime\prime\prime}}$. In our constructions, we will often use $\alpha^{\prime}$ to denote $Vx$. Each input vector $x_i$ is constructed from a word embedding $\operatorname{WE}(\sigma_i)$ and a positional encoding $\operatorname{pos}(i)$ via a composition $x_i = g(\operatorname{WE}(\sigma_i), \operatorname{pos}(i))$. Unlike \citet{barcelo2025ehrenfeuchthaussler}, we adopt the simplification $g(\operatorname{WE}(\sigma_i), \operatorname{pos}(i)) = \operatorname{WE}(\sigma_i) =: x_i$. A single-layer Transformer decoder, upon receiving input $(x_0, \ldots, x_{m-1}) \in (\mathbb{R}^d)^m$ and an initial \textsf{eos} token $y_0 \in \mathbb{R}^d$, generates output sequence $\{y_t \in \mathbb{R}^d\}_{t=1}^{\infty}$ via the recurrence
    \[
        y_t = L(x_0, \ldots, x_{m-1}, y_0, \ldots, y_{t-1}), \quad t \geq 1.
    \]
    That is, at each step $t$, the decoder appends its most recent output $y_{t-1}$ to the running context and applies $L$ to the resulting extended sequence.

    \textbf{Multi-Layer Extension.} A depth-$\ell$ Transformer decoder is obtained by composing $\ell$ such single-layer decoders $L^{(1)}, \ldots, L^{(\ell)}$, each with its own independent parameter set $\{Q^{(k)}_{h}, K^{(k)}_{h}, V^{(k)}_{h}, W_O^{(k)}, W_1^{(k)}, W_2^{(k)}\}_{h=1}^{H_k}$ for layer $k \in [\ell]$. Given the input context $(x_0, \ldots, x_{m-1})$, the layers process the sequence in depth-first order: at generation step $t$, the intermediate representations are computed as 
    \[
        z_m^{(0)} \coloneqq x_{m-1}, z_m^{(k)} \coloneqq L^{(k)}\!\left(z_1^{(k-1)}, \ldots, z_m^{(k-1)}\right),
    \]
    for $k \in [\ell]$ and the final output at step $t$ is $y_t := z_{m_t}^{(\ell)}$, where $m_t = m + t$ denotes the current sequence length. The newly produced token $y_t$ is appended to the context, after which the full $\ell$-layer computation is repeated for step $t+1$.
    
    \begin{definition}[CoT Realization]
    \label{def:algorelationbycot}
        Let $\mathcal{A}$ be an algorithm that, on input, $I$ say, produces a sequence of states $\{\mathsf{s}_t\}_{t=1}^{T}$, where $T$ denotes the number of iterations until termination. Let $\pi : \mathbb{R}^d \to \mathbb{R}^{d_\mathsf{s}}$ denote the projection onto the designated output coordinates. We say that a decoder \emph{realizes} $\mathcal{A}$ via CoT if there exists a parameter assignment such that, for every valid input and every $t \in [T]$, $\pi(y_t) = \operatorname{enc}(\mathsf{s}_t),$ where $\operatorname{enc}$ is a fixed encoding of algorithmic states into $\R^{d_\mathsf{s}}$. The generation runs for exactly $T$ steps; the step count $T$ is determined a priori, and hence termination follows immediately.
    \end{definition}
    
\section{Main Results}
\label{sec:results}
    This section is organized as follows. First, we present the construction of the graph traversal techniques (see section~\ref{ssec:foundresults}). Section~\ref{ssec:centralresults} then introduces the theorems necessary to establish the main results, as summarized in \autoref{fig:contributionflow}. The bilinear maps employed in our constructions, following \citet{rizvi2024simulating}, are in general not unique, as can be seen by comparing \autoref{thm:cottreestrahler} and \ref{thm:cotpathstrahler}. However, substituting these bilinear maps with FFN blocks from a Transformer layer would, in several cases where merging independent operations is infeasible, alter the required number of layers.

    Bilinear maps have been employed in the study of Transformers in both theoretical analyses \citep{rizvi2024simulating} and empirical investigations, particularly within the contexts of multi-modal and multi-view learning \citep{li2017factorized, gao2016compact}. In our work, we adopt analogous bilinear projection mechanisms as fundamental components of our constructions. We apply linear, affine, and bilinear maps to independent resultant blocks using a single transformation. In \autoref{app:prelim}, we show that all of these can be written as a single bilinear map. Also, any scalar–vector operation is applied elementwise to the vector.

    \subsection{Results on Graph Traversal}
    \label{ssec:foundresults}
        \begin{restate}{thm:cotdfs}
            A two-layer, two-head Transformer decoder can simulate the depth-first traversal $\Dfs$ on any simple directed graph $G = (V, E, A)$ in $O(|V| + |E|)$ chain-of-thought steps.
        \end{restate}
        \begin{proof}[Proof Sketch]
            The construction realizes the dynamic $\dfs^{(t)}$ as in \eqref{eq:dfsdynamic} and proceeds in two layers. The first layer uses a trivial attention mechanism and determines, via its feed-forward block, whether the current vertex has an unvisited neighbor — distinguishing a \emph{traverse} from a \emph{backtrack}. The second layer executes the chosen operation through two mutually exclusive attention heads, one resolving the next unvisited neighbor and the other retrieving the parent vertex (see \autoref{app:cotdfs}).
        \end{proof}

        When the underlying graph is a tree $G_T$, the number of required CoT steps is $2(n-1)$. The DFS realization plays a central role in two distinct respects: \textit{i}. The mapping $\phi$ coincides exactly with the execution of the procedure $\Dfs$ on the tree $G_T$. To obtain the associated Dyck path (or Dyck word), it suffices to extend the token embedding by one additional component that records $+1$ (or $\UU$) and $-1$ (or $\DD$) exclusively, in accordance with the \emph{traverse} and \emph{backtrack} operations, in the final layer. \textit{ii}. The same DFS realization is also reused in the implementation of \eqref{eq:dynamicstrdfs}, thereby yielding the Strahler number of a tree as stated in \autoref{thm:cottreestrahler}.

        \begin{restate}{thm:cotdijkstra}
            A two-layer, single-head Transformer decoder can simulate the Dijkstra algorithm $\dij$ on any simple connected directed graph $G = (V, E, A)$ with $\mathsf{A} = \R_{> 0}$ in exactly $|V|-1$ chain-of-thought steps.
        \end{restate}
        \begin{proof}[Proof Sketch]
            The proof proceeds by implementing the dynamics of $\dij^{(t)}$ as described in \autoref{ssec:prelimgraphs}. In the first layer, the attention collects the edge weights by identifying the current vertex from input tokens. Since $\mathsf{A}\in \R_{>0}$, its FFN block updates the distance vector $\dis$ by applying the relaxation operation to all neighboring vertices. The second layer then uses its attention mechanism to select the unvisited vertex with the smallest $\dis$, updates its representation, and marks the selected vertex as visited (see \autoref{app:cotdij}).
        \end{proof}

        The classical implementation of Dijkstra algorithm requires $O(n^2)$ time, whereas efficient variant runs in $O(|E| + n \log n)$. In contrast, our implementation requires at most $n-1$ CoT steps. This yields a substantial computational advantage: the Transformer avoids explicitly searching for the minimum-distance unvisited vertex by using constant-time attention. We use this theorem, along with \autoref{obs:dynamicwidbfs}, to compute $\wid$ in \autoref{thm:cotwidoftree}.

    \subsection{CoT Realization of Complexity Measures}
    \label{ssec:centralresults}
        After proving \autoref{algo:pathtotree} in \autoref{thm:cotpathtotree} (see \autoref{app:cotpathtotree}), \autoref{thm:cottreestrahler} and \ref{thm:cotpathstrahler} present the proofs for computing the Strahler number treating tree and Dyck path as inputs, respectively, under CoT (\autoref{app:cotstr}). \autoref{thm:cotwidoftree} and \ref{thm:cotwidofpath} provide the proofs for computing the measure $\wid$ (\autoref{app:cotwid}).

        \begin{restate}{thm:cotpathtotree}
            A single-layer, two-head Transformer decoder can simulate \autoref{algo:pathtotree} in the $2(m-1)$ CoT-steps\footnote{In this derivation alone, we write the length of the Dyck path for a tree of $m$ nodes as $2(m - 1)$, to make the distinction explicit; elsewhere we retain $n$, as mentioned earlier, for the number of nodes or the Dyck path length.}.
        \end{restate}
        \begin{proof}[Proof Sketch]
    		Each step executes one iteration of the loop on line 3, reading the symbol $w_t$ and updating $(i, nl, A_w)$ in place. A single attention head reads $w_t$ by matching the cursor $pos$ against the input tokens, returning its signed increment $q_t \in \{+1, -1\}$; a second head, querying on the current vertex, retrieves from that vertex's creation token the parent needed on a $\DD$ step (line 9). The feed-forward block then branches on the sign of $q_t$ entirely within a single $\relu$, via $\relu(v + q_t - 1) = \ind[q_t = +1]\,v$: on a $\UU$ it increments $nl$, writes the symmetric edge $\{i, nl\}$ as the one quadratic feature of $W_1$, and advances $i$ to the new vertex (lines 5--6); on a $\DD$ it moves $i$ to the retrieved parent. The cursor advances each step, and the chain halts when the path exhausts. Induction on $t$ shows $\pi_A(y_t)$ equals the adjacency matrix of $\psi(w)$ after reading $w_0 \cdots w_{t-1}$.
    	\end{proof}
        \begin{restate}{thm:cottreestrahler}
            A four-layer, two-head Transformer decoder can compute $\str(G_T)$, the Strahler number of a tree $G_T$ during the simulation of $\Dfs$ in exactly $2n - 1$ CoT steps, where $n$ denotes the number of vertices in $G_T$.
        \end{restate}
        \begin{proof}[Proof Sketch]
            The decoder of \autoref{thm:cotdfs} traverses $G_T$. We extend its embedding alone, giving each token a pair $(\pi_M, \pi_c)$: the running maximum of the Strahler numbers of the current vertex's finished children, and the count attaining it. A timestamp $\pi_{flg_4}$ turns the backtrack head's leftmost selection into a rightmost one, so it resolves the parent as last left and reads the accumulator it then carried. The feed-forward blocks of two further layers fold the finished child into this accumulator. The comparison against the parent's maximum is decided through indicators of $D = \str(v) - M_u$ and of the count, laid down as differences of rectified blocks. A carry lifts the maximum when the count saturates, keeping the fold bilinear. The accumulator is never finalized in place; the final \emph{backtrack} folds the source under the same invariant, giving $\pi_M(y_{2n-1}) = \str(G_T)$ in $2n-1$ steps.
        \end{proof}
        \begin{restate}{thm:cotpathstrahler}
            A four-layer, single-head Transformer decoder can compute $\str(\psi(w))$, the Strahler number of a tree corresponding to the Dyck word $w$ in exactly $n$ CoT steps, where $n=|w|$.
        \end{restate}
        The proof of the above theorem builds on the same techniques as before, in particular the implementation of \eqref{eq:dynamicstrdfs} from \autoref{thm:cottreestrahler}. For this reason, it is only outlined in \autoref{app:cotstr}, where we provide the additional arguments needed to handle paths as input. Note that while \autoref{thm:cottreestrahler} directly computes the $\str$ number of the tree $G_T$ in its final CoT step, the above theorem provides the values $M$ and $c$ in the designated positions as specified by $\pi_M(y_{n})$ and $\pi_c(y_{n})$ so that $\str(\psi(w)) = \pi_M(y_{n}) + \ind[\pi_c(y_{n}) \ge 2]$.
        \begin{restate}{thm:cotwidoftree}
            A three-layer single-head Transformer decoder can find the width of a tree $\wid(G_T)$ during the simulation of $\dij$ in exactly $n-1$ steps, where $n$ denotes the number of vertices in $G_T$.
        \end{restate}
        \begin{proof}[Proof Sketch]
            We augment the embeddings of the tokens of \autoref{thm:cotdijkstra}  by appending three scalar coordinates $(\pi_{dif}, \pi_{twd}, \pi_{mwd})$, initialized to $(0, 1, 1)$ at the root, tracking a BFS level-change indicator, the running node count at the current level, and the global maximum width. Layers 1-2 replicate the Dijkstra construction, with the sole modification that $W_2^{(2)}$ additionally writes $\pi_{dif}(y_t^{(2)}) \leftarrow \sum_{i} \pi_{buf}(y_t^{(1)\prime})[i] - \pi_{crd}(y_t)$, the difference between the BFS depths of the current and preceding decoded nodes. Since $G_T$ carries weights in $\{1,\infty\}$ and nodes are decoded in non-decreasing BFS-depth order, \autoref{obs:dynamicwidbfs} guarantees $\pi_{dif} \in {0,1}$, with $\pi_{dif} = 1$ signalling a transition to a strictly deeper level. Layer 3 employs a trivial attention sub-layer followed by an FFN that, when $\pi_{dif} = 0$, increments $\pi_{twd}$ and updates $\pi_{mwd}$ with $\max(\pi_{mwd}, \pi_{twd})$, and when $\pi_{dif} = 1$, commits the current count to $\pi_{mwd}$ and resets $\pi_{twd}$ to $1$. After $n-1$ decoding steps, each BFS level's count has been committed to $\pi_{mwd}$, and thus $\pi_{mwd}(y_{n-1}) = \wid(G_T)$.
        \end{proof}
        \begin{restate}{thm:cotwidofpath}
            A two-layer single-head Transformer decoder can find $\wid(\psi(w))$ for a Dyck word $w$ in exactly $n$ CoT steps, where $|w| = n$.
        \end{restate}
        \begin{proof}[Proof Sketch]
            The embedding dimension $d = \tfrac{3n}{2}+4$ is partitioned into a one-hot position block, a step scalar $q_t$, a cumulative height scalar $ht$, a count block $p \in \R^{\frac{n}{2}+1}$ where $p_i$ tallies up-steps reaching height $i$, and a width scalar $wd$. In the first Layer, attention retrieves the current step $q_\tau$ via a shift matrix, and the FFN applies a discrete pulse function (\autoref{lem:addonetospecificheight}) to increment exactly the $p_i$ entry indexed by the current height $ht$. In the final layer, the FFN performs a global max-aggregation (\autoref{lem:addwidplus1}) updating $wd$ to $\max_i p_i$ while zeroing the step scalar. Correctness follows since each $p_i$ counts nodes at depth $i$ in $\psi(w)$, so the final $wd$ equals $\wid(\psi(w))$ as claimed in \autoref{prop:widthpathtree}.
        \end{proof}
        A natural question arises regarding the optimal use of layers: given a tree $G_T$, is there an advantage to directly computing its width $\wid(G_T)$ using a three-layer Transformer (as in \autoref{thm:cotwidoftree}), rather than breaking the process down? For instance, an alternative composite setup might combine a two-layer, two-head Transformer to implement the map $\phi$ (in \autoref{thm:cotdfs}) with a two-layer, single-head Transformer to compute width on the corresponding Dyck path (in \autoref{thm:cotwidofpath}). If CoT is closed under composition and layer counts add linearly, the first approach is strictly more economical in layer count. Moreover, directly employing the three-layer Transformer offers another benefit: it reduces the required CoT steps significantly compared to the composite method.
        
        \begin{remark}[Closure under Composition]
            Given the fact that CoT-Transformers fail to solve Compositional Reasoning Questions (CRQ) unless CoT tokens are allowed to grow accommodating compositional depth \citep{yehudai2025compositional}, it is clear that the composition does not replicate itself. However, the same constructive realizability witness result for composition over tree-structured tasks hints towards a linear complexity class. Notably, our construction preserves linear token growth. On the other hand, since ensuring CoT-learnability essentially boils down to the finiteness of the VC-dimension of the base classes $\mathcal{F}_{i}$ \citep{joshi2025theory}, it would be sufficient for them to satisfy a VC upper bound showing linear aggregation in the spirit of \citet{noga2021vc}, Proposition 2: $\textrm{VC}(G(\mathcal{F}_1,\ldots,\mathcal{F}_k)) \lesssim \textrm{VC}(G) + \sum_{i=1}^{k} \textrm{VC}(\mathcal{F}_i)$, to prove closure under composition along those lines, where $G$ specifies composition, $k > 0$.  
        \end{remark}

\section{Conclusion}
\label{sec:conclude}
    To our knowledge, the CoT realizations developed here are the first of their kind in the literature. This immediately invites the question of optimality: identifying constructions that minimize embedding dimension, layer depth, and number of attention heads constitutes a natural and important direction for future work. A second open question concerns the closure properties of CoT under the tree–path bijection. The CoT realizations of the Strahler number for path and tree inputs exhibit remarkably little structural synergy — beyond their shared adherence to \eqref{eq:dynamicstrdfs} — despite the bijective correspondence between the two input representations via $\phi$ and $\psi$. A further observation concerns the number of heads required in \autoref{thm:cottreestrahler} and \autoref{thm:cotpathstrahler}. Intuitively, a single head can scarcely distinguish between a \emph{traverse} and a \emph{backtrack} along the same edge, whereas the linear progression of $\UU$ and $\DD$ in a Dyck path makes this distinction straightforward, so one head suffices. This also explains the two heads in \autoref{thm:cotpathtotree}: reconstructing the tree from the path re-encodes each $\DD$ with its corresponding $\UU$, folding the information of both the \emph{traverse} and \emph{backtrack} steps back onto a single edge {--} and it is this folding that necessitates the second head.

\section*{Limitations}
\label{sec:limitation}
    The constructions presented herein are not necessarily unique, and we acknowledge that alternative formulations satisfying the same theoretical requirements may well exist. As one illustration, \autoref{thm:cottreestrahler} introduces bilinear gating at the final layer to suppress redundant operations on the $M$ and $c$ blocks during a \emph{traverse} step; \autoref{lem:mMc}, however, demonstrates that bilinearity can be entirely circumvented to achieve the same effect in the context of \autoref{thm:cotpathstrahler}. As one limitation, several of our constructions invoke the bilinear operation. Following \citet{rizvi2024simulating}, however, this usage stays within the established conventions of this line of work rather than constituting a departure from them. In this regard, unlike \citet{merrill2024expressive, qiu2025ask}, our constructions forgo supplementary computational primitives such as layer normalization. Rather than a shortcoming, this economy proved crucial for constructing CoT Transformers that reach \textsf{NC\textsuperscript{1}}, showing that such augmentations are not a necessary architectural ingredient for CoT constructions. Additionally, we note that \autoref{thm:cotpathstrahler} concludes with the final values $M$ and $c$, the closing operation $M + \ind [c \ge 2]$ that fixes the Strahler number sits marginally outside the CoT iteration. Furthermore, there is also a classical correspondence with plane trees and {\L}ukasiewicz paths. We have not studied CoT realizations for these representations, which may yield additional architectural insights in the broader theory of computation.

\bibliographystyle{bibstyle} 
\bibliography{references}  
\appendix
\section{Related Works}
\label{app:relworks}
    A substantial body of work delineates what Transformers can compute in a single forward pass, for encoders \citep{hahn2020theoretical, perez2021attention, weiss2021thinking, hao2022formal, chiang2023tighter, sanford2023representational, barcelo2024logical} and decoders alike \citep{perez2021attention, merrill2024expressive, peng2024on, barcelo2025ehrenfeuchthaussler}, surveyed by \citet{strobl2024formal}. The recurring conclusion is a ceiling: unique and generalized hard-attention encoders recognize only languages in \textsf{AC\textsuperscript{0}} \citep{hao2022formal}, which already excludes the Dyck languages. Later work sharpens the picture to exact automata-theoretic characterizations \citep{rizvi2024simulating} and depth hierarchies \citep{merrill2026little}. As such, it is fair to say that we work in the most restrictive of these regimes (unique hard attention, no layer normalization) and the expressive power we exhibit is attributable to CoT alone.
    
    Building on \citeauthor{perez2019turing}'s result that hard-attention decoders with unboundedly many steps compute any decidable language, \citeauthor{merrill2024expressive} refine this into a step-counted hierarchy (log steps $\rightarrow$ \textsf{L}, linear steps $\rightarrow$ \textsf{NC\textsuperscript{1}}); \citet{li2024chain} obtain constant-depth constructions for \textsf{P}/\textsf{poly}, \citet{nowak2024representational} characterize CoT-augmented language models, and related gains are known for pause and filler tokens \citep{london2026pause} and continuous thought \citep{zhu2026emergence}. While these characterize classes, concrete problems instantiating them remain scarce. In this context, we reiterate \citeauthor{barcelo2025ehrenfeuchthaussler}'s contribution in showing that the EH rank is exactly the minimum number of CoT steps for a single-layer hard-attention decoder. Since the EH rank of a binary decision tree obeys the same recurrence as its Strahler number \citep{dahiya2021onsimple}, and computing the latter from a term representation is \textsf{NC\textsuperscript{1}}-complete \citep{ganardi2026complexity}, branching complexity probes precisely the linear-step regime rather than being an arbitrary target.
    
    A parallel thread constructs Transformers that execute procedures: looped architectures as programmable computers \citep{giannou2023looped}, parsing during masked prediction \citep{zhao2023transformers}, universal simulation of attention \citep{dutta2025existence}, and simulation of weighted automata over sequences and trees by bilinear maps \citep{rizvi2024simulating}, which we adopt as a primitive. For graph algorithms specifically, \citet{de2024simulation} simulate DFS, BFS, Dijkstra, and Kosaraju by \textit{looped} Transformers with graph-interacting heads and graph-size-independent parameter count; \citet{sanford2024understanding} chart architecture tradeoffs for graph reasoning. However, all of the above evade the CoT decoder model. Looping is not autoregressive generation, and neither line counts CoT steps. Within CoT, \citet{zhu2026emergence} study reachability (to identify which nodes in a graph can be reached from a specified source node) empirically without realizing the traversal, and \citet{barcelo2025ehrenfeuchthaussler} invoke exhaustive tree traversal as a proof device without constructing it or handling weights. We close this gap with explicit decoder constructions, making traversal an object of study rather than scaffolding. Notably, \citeauthor{ganardi2026complexity}'s own \textsf{NC\textsuperscript{1}} upper bound proceeds by a heavy-subtree-first DFS, so a CoT realization of DFS is the natural bridge to their result.

    On the tree–path correspondence end, while it admits several classical realizations \citep{viennot2002strahler, Gall2005tree}, its regularity is delicate even at the continuum \citep{evans2008probability}, and none has previously been examined for CoT realizability.

\section{Missing Proofs of Lemmas and Propositions}
\label{app:maintextprelim}
    \begin{proof}[Proof of \autoref{lem:bijectionpathtree}]
        Suppose $\phi(T)$ is not a Dyck word for some $T \in \mathcal{T}_n$. Then for some edge $((i, s_i), (i+1, s_{i+1}))$ on $\phi(T)$ labeled with $\DD$ $s_{i+1} < 0$ or $s_{2(n-1)} \ne 0$. If the former is the case, there must be more $-1$ steps than $+1$ steps. If this is the case, some edges in $T$ are backtracked more than they are traversed, which is absurd. On the other hand, had $s_{2(n-1)} \ne 0$, specifically $s_{2(n-1)} > 0$, then we would not have backtracked some edges. This would, by definition, violate the DFS traversal scheme.
        
        We verify $\psi(\phi(T))=T$ by structural induction on $n$. The case $n=1$ is clear: $\phi(T)=\varepsilon$ and $\psi(\varepsilon)$ is a lone root $r$. If the root has ordered children subtending subtrees $t_1,\ldots,t_k$, then $\phi(t) = \UU \phi(t_1) \DD \cdots \UU \phi(t_k) \DD$. Reading this with $\psi$, the $j$-th block $\UU \phi(t_j) \DD$ creates a new child of the root and reconstructs $t_j$ beneath it by the inductive hypothesis. The identity $\phi(\psi(w))=w$ follows by the same argument on $|w|$.
    \end{proof}

    \begin{proof}[Proof of \autoref{prop:heightpathtree}]
        Since $G_T$ and $\phi(G_T)$ are both finite, both $\hgt(G_T)$ and $\max_{0 \le i \le 2(n-1)} s_i$ are finite. Let $u_i \in V(G_T)$ denote the node at which the DFS pointer rests after the processing step $i$, so $u_0 = r$.  We claim
        \begin{equation}
        \label{eq:depth-height}
            \begin{aligned}
                & s_i = \dep(u_i) \quad (0 \le i \le 2(n-1)), \quad \text{ and } \quad  s_i - s_{i-1} = 1 \quad (1 \le i \le 2(n-1)).
            \end{aligned}
        \end{equation}
        The base case $s_0 = 0 = \dep(r)$ is immediate.  For the inductive step, a $\UU$ at position $i$ moves the pointer to a child of $u_{i-1}$, giving $\dep(u_i) = \dep(u_{i-1}) + 1 = s_{i-1} + 1 = s_i$.  A $\DD$ at position $i$ moves it to the parent of $u_{i-1}$, giving $\dep(u_i) = \dep(u_{i-1}) - 1 = s_{i-1} - 1 = s_i$.  This establishes \eqref{eq:depth-height}.
         
        Since DFS visits every node of $G_T$, the map $i \mapsto u_i$ is surjective onto $V(G_T)$.  Together with \eqref{eq:depth-height},
        \begin{align*}
          \max_{0 \le i \le 2(n-1)} s_i
          &= \max_{0 \le i \le 2(n-1)} \dep(u_i)
          = \max_{v \in V(G_T)} \dep(v)
          = \hgt(G_T). \qedhere
        \end{align*}
    \end{proof}

    \begin{proof}[Proof of \autoref{prop:widthpathtree}]
        Consider $l$ with $1 \le l \le \hgt(G_T)$, and define
        \begin{align*}
          A_l &:= \{v \in V(G_T) : \dep(v) = l\},\\
          B_l &:= \{i : s_i = l,\; s_i - s_{i-1} = 1\}.
        \end{align*}
        We construct a bijection $\varphi_l : A_l \to B_l$.  For each $v \in A_l$, DFS first arrives at $v$ from its parent via a unique $\UU$ step; let $i_v$ denote the index of this step.  By \autoref{prop:heightpathtree}, $s_{i_v} = \dep(u_{i_v}) = \dep(v) = l$, and by construction $s_{i_v} - s_{i_v - 1} = 1$, so $i_v \in B_l$.  Set $\varphi_l(v) := i_v$.
        
        \textit{Injectivity.}  DFS visits each node for the first time at a distinct step, so $v \ne w$ implies $i_v \ne i_w$.
         
        \textit{Surjectivity.}  Every $i \in B_l$ is a $\UU$ step satisfying $s_i = l$, so the pointer moves to $u_i$ with $\dep(u_i) = l$ by \autoref{prop:heightpathtree}.  Since DFS first enters any node via a downward step, and step $i$ is the first occasion $u_i$ is reached (a second descent to $u_i$ would require a prior ascent from it, which would have been recorded as the unique $\DD$ step terminating that visit), we have $i = i_{u_i} = \varphi_l(u_i)$.
         
        Hence $|A_l| = |B_l|$ for every $1 \le l \le \hgt(G_T)$.  It remains to verify that excluding the level $l = 0$ does not alter the maximum.  The root is the unique node at depth $0$, so $|A_0| = 1$.  If $\wid(G_T) = 1$, then $|A_l| = 1$ for all $l$, and since $\hgt(G_T) \ge 1$ for $n \ge 2$ the set $\{A_l : 1 \le l \le \hgt(G_T)\}$ is non-empty with maximum $1 = \wid(G_T)$. If $\wid(G_T) \ge 2$, the maximum is attained at some $l \ge 1$ and $|A_0| = 1$ is strictly dominated.  In both cases,
        \begin{align*}
          \wid(G_T)
          &= \max_{0 \le l \le \hgt(G_T)} |A_l|
           = \max_{1 \le l \le \hgt(G_T)} |A_l| 
           = \max_{0 < l \le \hgt(G_T)} |B_l|. \qedhere
        \end{align*}
    \end{proof}

    \begin{proof}[Proof of Correctness of \autoref{algo:pathtotree}]
        We first address two implicit invariants. First, the branch on line~8 is unreachable when $i = 0$: if the current character were $\DD$ at depth zero, the lattice path would descend below zero, contradicting the non-negativity condition of a Dyck path. Second, whenever the algorithm reaches line~8, it is guaranteed that a unique $j < nl$ satisfying $A_w[j, nl] = 1$ exists. Indeed, a step $\DD$ at position $t$ implies $s_t - s_{t-1} = -1$, so the path is closing a previously opened step $\UU$; the corresponding upward transition was handled by line~6 at the time that $\UU$ was processed, which set $A_w[i, nl] \gets 1$ for the unique parent $i$ of $nl$ at that moment. The uniqueness of such $j$ follows from the observation that node $nl$ is introduced exactly once, at the single $\UU$ step that executes line~6 for $nl$, so no index $j^\prime \ne j$ can satisfy $A_w[j^\prime, nl] = 1$ at the time line~8, since every previous step $\UU$ operates on a strictly lower value of $nl$. Finally, $\psi$ is injective: if $\psi(w) = \psi(w^\prime)$ for two Dyck words $w, w^\prime \in \mathcal{D}_{m-1}$, then applying $\phi$ to both sides and invoking the \autoref{lem:bijectionpathtree} yields $w = \phi(\psi(w)) = \phi(\psi(w^\prime)) = w^\prime$. Hence, the algorithm terminates correctly on every valid Dyck word, the output $A_w$ is the adjacency matrix of $\psi(w)$, and $\psi$ is a well-defined injection in $\mathcal{D}_{m-1}$.
    \end{proof}

\section{Preliminaries}
\label{app:prelim}
    \begin{fact}
        Let $v \in \R^n$ be an input vector, $W \in \R^{m \times n}$ be a weight matrix, and $c \in \R^m$ be a bias vector. An affine transformation defined by $v^\prime = Wv + c$ can be rewritten strictly as a linear transformation $v^\prime = W^\prime \tilde{v}$ by concatenating the input vector $v$ with $c$.
    \end{fact}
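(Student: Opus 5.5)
The plan is to exhibit $W^\prime$ and $\tilde v$ explicitly and check the identity $W^\prime \tilde v = Wv + c$ by block multiplication. I read ``concatenating the input vector $v$ with $c$'' as appending a constant coordinate to the input and placing the bias $c$ in the weight matrix as an extra column. Concretely, set
\[
\tilde v := \begin{bmatrix} v \\ 1 \end{bmatrix} \in \R^{n+1}, \qquad W^\prime := \begin{bmatrix} W & c \end{bmatrix} \in \R^{m \times (n+1)}.
\]
Block multiplication then gives $W^\prime \tilde v = Wv + c\cdot 1 = v^\prime$, which is the required identity. The map $\tilde v \mapsto W^\prime \tilde v$ is linear in $\tilde v$.

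If one insists on literally appending $c$ itself to the input, I would instead take $\tilde v := [v^\top\ c^\top]^\top \in \R^{n+m}$ and $W^\prime := [\,W \ \ I_m\,]$. Block multiplication again yields $W^\prime \tilde v = Wv + I_m c = Wv + c$. Both variants are immediate, and I would state the first as the canonical one.

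The only point needing care is how this embeds in the decoder of \autoref{sec:trans}, where no bias term is available. The extra coordinate must be present in every token, not just the input. I would handle this by reserving one embedding coordinate fixed to $1$ in $\operatorname{WE}(\sigma)$ for all symbols. Each layer's $W_2$ must also copy this coordinate through unchanged; since $\relu(1)=1$, it passes the ReLU intact. Every generated $y_t$ then retains it, so the affine-to-linear rewriting applies uniformly at every CoT step. This bookkeeping step is the only real obstacle, and it is resolved by that invariant.
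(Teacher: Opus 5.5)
Your proposal is correct. Your second (``literal'') variant, $\tilde v = [v^\top \parallel c^\top]^\top \in \R^{n+m}$ with $W^\prime = [W \parallel I_m]$, is exactly the paper's proof, and it is checked by the same one-line block multiplication.

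Your preferred variant, $\tilde v = [v^\top \parallel 1]^\top$ with $W^\prime = [W \parallel c]$, is the standard homogeneous-coordinate trick. It differs from the paper in where the bias lives. The paper keeps $W^\prime$ independent of $c$ and pushes the bias into the input, which costs $m$ extra input coordinates that must hold precisely $c$. You push $c$ into the weights, so you need only a single constant coordinate, regardless of $m$, and it can be reused by every affine map in every layer. This is the form the later constructions implicitly rely on: in \autoref{thm:cottreestrahler}, for example, the constants are ``supplied by $\pi_{flg_3} = 1$'', and the explicit biases $b_1, b_2$ in the auxiliary lemmas are absorbed the same way. So your closing bookkeeping remark is apt, though it goes beyond what the Fact itself asks.

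If you keep that remark, one refinement is needed. A layer outputs $W_2\relu(W_1(\beta + x_{m-1}))$ with no residual after the FFN. Preserving the constant coordinate therefore requires more than $W_2$ copying it back. You also need $W_O$ to leave that coordinate at zero, so that $\beta$ does not perturb it, and $W_1$ to route it to a hidden unit.
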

    
    \begin{proof}
        If we explicitly want to define our new input vector by concatenating $v$ and $c$, we can write $\tilde{v} = \begin{bmatrix} v^\top \parallel c^\top \end{bmatrix}^\top \in \R^{n+m}$.
    
        Define $W^\prime \in \R^{m \times (n+m)}$ as $W^\prime = \begin{bmatrix} W \parallel I_m \end{bmatrix}$. Then
        \begin{align*}
            W^\prime (\tilde{v}) &= \begin{bmatrix} W \parallel I_m \end{bmatrix} \begin{bmatrix} v \\ c \end{bmatrix} = Wv + I_m c\\
            &= Wv + c = v^\prime. \qedhere
        \end{align*}
    \end{proof}
    
    \begin{fact}
        Let $X$ and $U$ be vector spaces over a field $\R$. A linear transformation $L: X \to U$ parameterized by a matrix $W \in \R^{m \times n}$ such that $L(v) = Wv$ can be represented as a bilinear operation.
    \end{fact}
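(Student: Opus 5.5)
The plan is to absorb the weight matrix into one argument of a bilinear form whose other argument is fixed. Recall that a bilinear map $B: X \times Y \to U$ is linear in each argument separately, and any such map on finite-dimensional spaces can be written coordinatewise as $B(v,y)_k = v^\top W_k\, y$ for matrices $W_k$, $k \in [m]$. So it suffices to exhibit an auxiliary space $Y$, a fixed vector $e \in Y$, and a bilinear $B$ with $B(v,e) = Wv$ for every $v \in X = \R^n$.

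The concrete choice is $Y = \R$ and $e = 1$, with
\[
B(v,y) := y\,Wv .
\]
Linearity in $v$ for fixed $y$ is linearity of $W$. Linearity in $y$ for fixed $v$ is immediate, since $B(v, \lambda y + \mu y') = (\lambda y + \mu y')Wv$. In coordinates this reads $B(v,y)_k = v^\top (W_{k,:})^\top y$, where $W_{k,:}$ is the $k$-th row of $W$, which is the standard bilinear form. Evaluating at $y = 1$ gives $B(v,1) = Wv = L(v)$, so $L$ is recovered.

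To match how this fact is used with the first Fact (affine maps $\to$ linear maps on $\tilde v$), I would remark that the same construction applied to $W'$ and $\tilde v$ writes affine maps as bilinear ones too. Block-diagonal stacking then lets several independent linear, affine, and bilinear blocks be combined into a single bilinear map, since direct sums of bilinear maps are bilinear. That closing assembly is the only step needing care: the constant coordinate $1$ (or the bias block) has to be shared consistently across blocks. I do not expect any genuine obstacle there.
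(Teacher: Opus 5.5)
Your proposal is correct and follows the paper's proof essentially verbatim: the paper also takes the auxiliary argument in $\R$, defines $B(v,c) = Wv\cdot c$, and checks linearity in each slot separately. You also evaluate explicitly at $B(v,1) = Wv = L(v)$, which the paper leaves implicit; your closing remarks on affine maps and stacking match the paper's first Fact and \autoref{lem:bilinearstack} but are not needed for this statement.
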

    
    \begin{proof}
        To represent the linear transformation as a bilinear-style operation where the matrix weights are treated as parameters and $v$ is the input, we consider the second argument as $\R$. 
    
        We define a mapping $B: X \times \R \to U$, as $B(v, c) = Wv \cdot c$, where $v \in V$, $c \in \R$. To show this behaves linearly with respect to the input vector space $X$, let $v_1, v_2 \in X$ be input vectors, and let $\alpha, \beta \in \R$ be scalars.
        \begin{align*}
            B(\alpha v_1 + \beta v_2, c) &= W(\alpha v_1 + \beta v_2) \cdot c\\
            &= (\alpha W v_1 + \beta W v_2) \cdot c\\
            &= \alpha (W v_1 \cdot c) + \beta (W v_2 \cdot c)\\
            &= \alpha B(v_1,c) + \beta B(v_2,c).
        \end{align*}
        This satisfies linearity in the first argument. Similarly for the second coordinate
        \begin{align*}
            B(v, \alpha c_1 + \beta c_2) &= Wv(\alpha c_1 + \beta c_2)\\
            &= Wv\alpha c_1 + Wv\beta c_2\\
            &= \alpha Wv \cdot c_1 + \beta Wv \cdot c_2\\
            &= \alpha B(v,c_1) + \beta B(v,c_2). \qedhere
        \end{align*}
    \end{proof}

    \begin{lemma}
    \label{lem:bilinearstack}
        Let $U$ be a vector space and let $U^\prime, U^{\prime\prime}$ be vector spaces with $W = U^\prime \oplus U^{\prime\prime}$. Let $B_1 : U \times U \to U^\prime$ and $B_2 : U \times U \to U^{\prime\prime}$ be bilinear maps, and define $B : U \times U \to W$ by
        \[
            B(v, w) = \begin{pmatrix} B_1(v, w) \\ B_2(v, w) \end{pmatrix}.
        \]
        Then $B$ is bilinear.
    \end{lemma}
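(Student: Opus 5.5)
The plan is to verify the two bilinearity conditions for $B$ coordinatewise, reducing each to the corresponding property of $B_1$ and $B_2$. The key structural fact is that in the direct sum $W = U^\prime \oplus U^{\prime\prime}$ addition and scalar multiplication act componentwise. Concretely, for $a \in U^\prime$, $b \in U^{\prime\prime}$ we identify the element $a + b \in W$ with the column $\begin{pmatrix} a \\ b \end{pmatrix}$, so that
\[
\alpha \begin{pmatrix} a_1 \\ b_1 \end{pmatrix} + \beta \begin{pmatrix} a_2 \\ b_2 \end{pmatrix} = \begin{pmatrix} \alpha a_1 + \beta a_2 \\ \alpha b_1 + \beta b_2 \end{pmatrix}.
\]
This is the only property of $W$ I will use.

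For linearity in the first argument, fix $w \in U$, take $v_1, v_2 \in U$ and scalars $\alpha, \beta \in \R$, and expand:
\[
B(\alpha v_1 + \beta v_2, w) = \begin{pmatrix} B_1(\alpha v_1 + \beta v_2, w) \\ B_2(\alpha v_1 + \beta v_2, w) \end{pmatrix} = \begin{pmatrix} \alpha B_1(v_1,w) + \beta B_1(v_2,w) \\ \alpha B_2(v_1,w) + \beta B_2(v_2,w) \end{pmatrix}.
\]
The second equality invokes bilinearity of $B_1$ in the top block and of $B_2$ in the bottom block separately. By the componentwise structure of $W$, the last column equals $\alpha B(v_1,w) + \beta B(v_2,w)$.

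Linearity in the second argument follows by the identical computation with the roles of $v$ and $w$ interchanged, again using the second-argument linearity of $B_1$ and $B_2$.

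No step is a genuine obstacle. The only point needing care is to state explicitly that the block-stacked vector is an element of $W$ under the direct-sum identification, so that the vector operations on $W$ are indeed blockwise. This remark also lets the argument extend by induction to any finite stack $B_1, \ldots, B_k$, which is how the lemma is used to merge independent affine, linear and bilinear updates into a single bilinear map in the later constructions. Combined with the two preceding Facts, it shows that affine and linear blocks placed side by side with genuinely bilinear blocks still form one bilinear map.
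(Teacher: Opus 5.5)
Your proposal is correct and follows the paper's proof. Both arguments verify linearity in each argument block by block, using the bilinearity of $B_1$ and $B_2$ together with the componentwise vector operations on $W = U^\prime \oplus U^{\prime\prime}$. The only cosmetic difference is that you check a general linear combination $\alpha v_1 + \beta v_2$ in one step, whereas the paper checks additivity and homogeneity separately.
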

    
    \begin{proof}
        We verify linearity in the first argument; the second is identical. Let $v, \tilde v, w \in U$ and let $c$ be a scalar.
    
        By the bilinearity of $B_1$ and $B_2$ in their first argument,
        \[
            B_i(v + \tilde v, w) = B_i(v, w) + B_i(\tilde v, w), \qquad i \in \{1, 2\}.
        \]
        Since addition in $W = U^\prime \oplus U^{\prime\prime}$ is taken componentwise,
        \begin{align*}
            B(v + \tilde v, w)
            &= \begin{pmatrix} B_1(v, w) + B_1(\tilde v, w) \\ B_2(v, w) + B_2(\tilde v, w) \end{pmatrix}\\
            &= \begin{pmatrix} B_1(v, w) \\ B_2(v, w) \end{pmatrix} + \begin{pmatrix} B_1(\tilde v, w) \\ B_2(\tilde v, w) \end{pmatrix}\\
            &= B(v, w) + B(\tilde v, w).
        \end{align*}
        By the homogeneity of $B_1$ and $B_2$ in their first argument, $B_i(c v, w) = c\,B_i(v, w)$, and since scalar multiplication in $W$ is componentwise,
        \begin{align*}
            B(c v, w)
            &= \begin{pmatrix} c\,B_1(v, w) \\ c\,B_2(v, w) \end{pmatrix} = c \begin{pmatrix} B_1(v, w) \\ B_2(v, w) \end{pmatrix}\\
            &= c\,B(v, w).
        \end{align*}
        The second argument is treated identically, using the bilinearity of each $B_i$ there. Hence $B$ is bilinear.
    \end{proof}

    Before we present the explicit constructions we substantiate the \autoref{def:algorelationbycot}.\\
    \textbf{Realization of Algorithms via Chain-of-Thought.} The construction of Transformers that implement functions and algorithmic dynamics has been extensively studied in the expressivity literature \citep{giannou2023looped, zhao2023transformers, dutta2025existence}. To ground the constructions presented in \autoref{sec:prelim}, \autoref{def:algorelationbycot} outline the general scheme we adopt. The input sequence $(x_0, \ldots, x_{m-1}) \in (\mathbb{R}^d)^m$ encodes the graph structure, where each $x_i$ may additionally carry a fixed number of auxiliary coordinates reserved for intermediate computation. The initial token $y_0 \in \R^d$ serves as the designated start symbol for each algorithm; for instance, in graph traversal, a fixed block of coordinates within $y_0$ encodes the identity of the source vertex. However, during the design of individual algorithms, including those that take the lattice path as input, we will specify the encoding required to realize particular algorithms via CoT.   
    
    We call the realization \textit{uniform} if the map from $n$ to the family of resultant decoders is logspace-computable. In our work, all constructions follow uniformity in this sense. Note that the encoding $\operatorname{enc}$ represents an interpretation of the state of the algorithm $\mathcal{A}$. While $\operatorname{enc}$ is not spelled out in each construction, its role is implicit and can be read from context. For instance, in the Dijkstra construction the block $\pi_{\dis}$ carries the distance $\dis$, which serves as $\operatorname{enc}(\mathsf{s}_t)$; by contrast, in the DFS construction only the order of traversal matters, and no such $\operatorname{enc}(\mathsf{s}_t)$ is present. The remaining cases follow the same pattern.    

\section{DFS}
\label{app:cotdfs}
    \begin{theorem}
    \label{thm:cotdfs}
        A two-layer, two-head Transformer decoder can simulate the depth-first traversal $\Dfs$ on any simple directed graph $G = (V, E, A)$ in $O(|V| + |E|)$ chain-of-thought steps.
    \end{theorem}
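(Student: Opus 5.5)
We build the decoder explicitly: fix a token embedding, then the two layers, then verify by induction on the CoT step that the decoded tokens follow the dynamic \eqref{eq:dfsdynamic}.

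\textbf{Embedding.} The input context holds $n$ tokens $x_0,\dots,x_{n-1}$. Token $x_i$ carries:
\begin{itemize}
\item a one-hot vertex identifier $e_i\in\R^n$;
\item its adjacency row $A[i,\cdot]\in\{0,1\}^n$;
\item a type flag distinguishing input from CoT tokens.
\end{itemize}
Each generated token $y_t$ carries:
\begin{itemize}
\item a one-hot current vertex $\pi_{cur}=e_{\dfs^{(t)}(s)}$;
\item a one-hot parent block $\pi_{par}$ recording the vertex we came from when $y_t$ was a traverse;
\item the visited indicator $\pi_{vst}\in\{0,1\}^n$, carried forward cumulatively;
\item a scratch block for the current vertex's adjacency row;
\item a bit $\pi_{tr}$ marking traverse versus backtrack.
\end{itemize}
The start token $y_0$ sets $\pi_{cur}=\pi_{vst}=e_p$. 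Because $\pi_{vst}$ is stored in the latest token, the current state is always available at the query position. This removes the need for positional encodings.

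\textbf{Layer 1.} A single head queries with $\pi_{cur}$ against input keys $e_i$. It uniquely selects $x_{\dfs^{(t)}(s)}$ and copies its adjacency row into the scratch block. Input tokens are the only ones with nonzero id, so the match is unique; a large negative bias on CoT tokens via the type flag enforces this.

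The FFN then computes the unvisited-neighbour mask $u=\relu(A[cur,\cdot]-\pi_{vst})$, which is coordinatewise in $\{0,1\}$. It also computes the traverse flag $\relu(\sum_j u_j)-\relu(\sum_j u_j-1)=\ind[u\ne 0]$. The mask $u$ is written into a block.

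\textbf{Layer 2.} This layer has two heads whose outputs are gated by the flag.

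The \emph{traverse head} selects a single vertex from $u$. Its query is $u$ scaled by a decreasing weight, e.g. score $(n-j)u_j$ against input keys $e_j$, or simply $u$ with leftmost tie-breaking of unique hard attention over input positions. It returns $e_{j^*}$ for the smallest-index unvisited neighbour.

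The \emph{backtrack head} must retrieve $par(\dfs^{(t)}(s))$. It queries with $\pi_{cur}$ against keys equal to the $\pi_{cur}$ block of earlier CoT tokens whose traverse bit is set. This key is built as a bilinear product of $\pi_{cur}$ with $\pi_{tr}$, or through a gating coordinate with a large negative offset. The only matching token is the one where the current vertex was first entered, because each vertex is entered by traverse exactly once. That token's $\pi_{par}$ is the answer.

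The final FFN combines the two head outputs using the gate $\relu(v+\text{flag}-1)=\text{flag}\cdot v$ for 0/1 vectors $v$, and the complementary gate for backtrack. It writes:
\begin{itemize}
\item the new $\pi_{cur}$;
\item $\pi_{par}\leftarrow$ old $\pi_{cur}$ on traverse;
\item $\pi_{vst}\leftarrow\pi_{vst}+\text{flag}\cdot e_{j^*}$.
\end{itemize}

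\textbf{Correctness and step count.} By induction on $t$, $\pi_{cur}(y_t)=e_{\dfs^{(t)}(s)}$ and $\pi_{vst}(y_t)$ encodes $vst^{(t)}$. Each edge is examined at most a constant number of times, so there are at most $O(|V|+|E|)$ steps; on a tree there are exactly $2(n-1)$. Each step is a traverse (at most $n-1$ of them) or a backtrack (at most $n-1$), so the bound is actually $2(|V|-1)$. This is within $O(|V|+|E|)$ as claimed.

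\textbf{Main obstacle.} The hardest part is the backtrack head. It must retrieve the parent of the current vertex without positional encodings, and the vertex may appear many times in the CoT as $\pi_{cur}$, including from backtracks into it. Restricting keys to traverse tokens fixes this, but it requires the key to be a product of $\pi_{cur}$ and $\pi_{tr}$. If a bilinear key is disallowed, we would instead store in each traverse token a dedicated block $\pi_{entered}=\pi_{tr}\cdot\pi_{cur}$, computed in the previous step's FFN by the ReLU gate above, and use it as a linear key. Leftmost tie-breaking is harmless here because the match is unique.
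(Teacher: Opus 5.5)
Your construction is a valid alternative to the paper's, but one step does not go through as written.

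\textbf{The flag step.} In Layer~1 you compute $u=\relu(A[cur,\cdot]-\pi_{vst})$ and, in the same feed-forward block, the flag $\relu(\sum_j u_j)-\relu(\sum_j u_j-1)$. The model's FFN is $W_2\relu(W_1 z)$ with a single hidden $\relu$. So $\sum_j u_j$, itself a sum of rectified units, is not available as a pre-activation, and your flag needs two $\relu$ stages. There are two ways to repair this:
\begin{itemize}
\item Do what the paper does: take $W_1^{(1)}$ bilinear and form $\gamma_t=(\mathbf{1}-\pi_{vis})^\top\pi_{nbr}$ directly before the single $\relu$.
\item Stay $\relu$-only: write $u$ in Layer~1 and drop the binary flag. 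Gate in Layer~2's FFN with the linear quantity $\sigma=\sum_j u_j$, using $\relu(x-\sigma)=[u=0]\,x$ and $x-\relu(x-\sigma)=[u\neq 0]\,x$ for $x\in\{0,1\}$, and set the new $\pi_{tr}=\relu(\sigma)-\relu(\sigma-1)$.
\end{itemize}

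\textbf{The backtrack key.} Your worry about repeated occurrences of the current vertex is unnecessary. The first CoT token with $\pi_{cur}=v$ is always $v$'s entering token, so leftmost tie-breaking with the plain key $\pi_{cur}$ already finds it; no $\pi_{cur}\cdot\pi_{tr}$ key is needed. This also settles a case you left open, the final backtrack out of $s$. There the leftmost match is $y_0$, whose empty parent makes $\pi_{cur}=0^n$, which is the paper's termination signal. That step makes the count $2(|V|-1)+1$ rather than $2(|V|-1)$, still $O(|V|+|E|)$.

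\textbf{Comparison with the paper's route.} With these repairs, your route differs from the paper's in three respects.
\begin{itemize}
\item \emph{Adjacency row.} The paper keeps the whole state in the latest token, including the current vertex's row (block $\pi_{nbr}$, refreshed from whichever token a head retrieves), so its first-layer attention is trivial. You re-fetch the row every step with a nontrivial first-layer head.
\item \emph{Parent lookup.} The paper carries $\pi_{par}$ in \emph{every} token. On a backtrack it attends to the first CoT occurrence of the \emph{parent} (key $[\pi_{cur}\parallel\pi_{flg_3}]$) and reads the grandparent and the parent's row from it. You store the parent only at entering tokens and look up the current vertex's \emph{own} entering token.
\item \emph{Gating.} The paper makes the two queries vanish, defaulting to the zero sentinel $x_\bot$, cancels old one-hots by negation and $\relu$, and selects the parent with a bilinear $W_2^{(2)}$. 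You let both heads fire and gate in the FFN.
\end{itemize}
Your version avoids bilinear maps altogether, which the paper itself lists as a limitation. The paper's design buys reuse. Because every token carries the correct parent, its backtrack head can later be re-keyed to the parent's \emph{most recent} occurrence to read the parent's Strahler accumulator in \autoref{thm:cottreestrahler}. Your own-entering-token lookup would not provide that without an extra head or layer.
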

    \begin{proof}
        We show that the $t$\textsuperscript{th} step of a two-layer, two-head Transformer decoder implements $\dfs^{(t)}(s)$. Let $G = (V, E, A)$ be a graph with adjacency matrix $A \in \{0,1\}^{n \times n}$, and denote by $e_i \in \{0,1\}^n$ the standard basis vector corresponding to vertex $v_i \in V$.
    
        The initial token sequence consists of embedded vectors $x_0, \ldots, x_{n-1} \in \R^d$ with $d = 6n + 3$, preceded by a sentinel token $x_\bot = 0^d$. Each embedding is a concatenation of nine blocks, that together capture the instantaneous description of the traversal: the current vertex, the parent vertex, the visited marker, the neighborhood, a temporary register, a buffer register, two state-control flags, and a type flag. We define the node embedding $x_i \in \R^{6n+3}$ for each $v_i \in V$ as
        \begin{align*}
            x_i &= \left[ e_i \parallel 0^n \parallel 0^n \parallel A_{i,:} \parallel 0^n \parallel 0^n \parallel {-1} \parallel {-1} \parallel 0 \right]^\top,
        \end{align*}
        where the block $A_{i,:}$ is the $i$\textsuperscript{th} row of the adjacency matrix, encoding the neighborhood $\nei(v_i)$. Note that the collection $\{x_i\}_{i=0}^{n-1}$ uniquely identifies $G$. The embedding carries three scalar flags. A state-control flag $flg_1 \in \{1, 0, -1\}$ indicates whether the current step is a \emph{traverse}, \emph{backtrack}, or \emph{idle} operation, respectively. An auxiliary flag $flg_2$ supports the final computation of $flg_1$. A type flag $flg_3 \in \{0,1\}$ distinguishes $x_i$s ($flg_3 = 0$) from $y_t$s ($flg_3 = 1$). The dynamic $\Dfs$ at step $t$ is maintained in vector $y_t \in \R^{6n+3}$, where $t \leq O(|V|+|E|)$. Given a source vertex $s = v_p \in V$, the initial state $y_0$ is
        \begin{align*}
            y_0 &= \left[ e_p \parallel 0^n \parallel e_p \parallel A_{p,:} \parallel 0^n \parallel 0^n \parallel {-1} \parallel {-1} \parallel 1 \right]^\top.
        \end{align*}
        The two embeddings differ in precisely two blocks: $y_0$ records $e_p$ in the visited restriction and sets $flg_3 = 1$, reflecting that $s$ has been visited and that $y_0$ is a traversal token.
    
        Let $\mathcal{E} \subset \{0,1\}^n$ denote the set of standard basis vectors in $\R^n$. For notational convenience, we define the following selectors from $\R^d$ to access individual blocks of an embedding. The maps $\pi_{cur}, \pi_{par}: \R^d \to \mathcal{E}$ extract the current and parent node indicators\footnote{At initialization, $\pi_{par}(\alpha) = 0^n$ for every $\alpha \in \{x_0, \ldots, x_{n-1}, y_0\}$, reflecting that $par(v_i) = \phi$ for all $v_i \in V \cup \{s\}$.}; the maps $\pi_{vis}, \pi_{nbr}: \R^d \to \{0,1\}^n$ extract the visited and neighborhood vectors; the maps $\pi_{tmp}, \pi_{buf}: \R^d \to \R^n$ extract the temporary and buffer registers; and $\pi_{flg_k}: \R^d \to \R$ for $k \in \{1,2,3\}$ extracts the corresponding control flag.
    
        We proceed inductively. Suppose $y_t$ encodes the instantaneous description of $G$ immediately before $\Dfs^{(t+1)}(s)$. The first operation updates the control flags $\pi_{flg_1}(y_t)$ and $\pi_{flg_2}(y_t)$; the attention in this layer is trivial, with $W_O^{(1)} = \mathbf{0}^{d \times d}$. The update is carried out by the FFN block. The bilinear map $W_1^{(1)}$ computes the inner product $\gamma_t = (\mathbf{1} - \pi_{vis}(y_t))^\top \, \pi_{nbr}(y_t)$, which counts the number of unvisited neighbors of the current vertex, together with its unit-shifted counterpart $\gamma_t - 1$, placing these into the $flg_1$ and $flg_2$ positions respectively. After the $\relu$ activation, the subsequent linear map $W_2^{(1)}$ writes $ \pi_{flg_1}(y_t) = \relu(\gamma_t) - \relu(\gamma_t - 1)$, $\pi_{flg_2}(y_t) = 0$, and copies $\pi_{cur}(y_t)$ into the buffer register. We denote the resulting vector by $y_t^{(1)}$. Observe that $\pi_{flg_1}(y_t^{(1)})$ is $1$ precisely when $\gamma_t \geq 1$ and $0$ otherwise, encoding the \emph{traverse}-or-\emph{backtrack} decision. After this step, all preceding vectors $\{x^{(1)}_\bot, x^{(1)}_0, \ldots, x^{(1)}_{n-1}, y^{(1)}_0, \ldots, y^{(1)}_{t-1}\}$ differ from their initial values in the $flg_1$, $flg_2$, and buffer positions. Because none of these positions of the aforementioned vector interact with any operation in the subsequent layer, this modification does not affect the computations.
    
        The second layer comprises two mutually exclusive attention heads. The first head executes the operation \emph{traverse}. The query $Q_T^{(2)} y_t^{(1)}$ is a bilinear map \citep{rizvi2024simulating} implementing the Hadamard product $(\mathbf{1} - \pi_{vis}(y_t^{(1)})) \odot \pi_{nbr}(y_t^{(1)})$, which masks out all previously visited vertices from the neighborhood. Evaluated against keys $K_T^{(2)} \alpha = \pi_{cur}(\alpha)$ for all $\alpha \in \{x_\bot^{(1)}, x_0^{(1)}, \ldots, x_{n-1}^{(1)}, y_0^{(1)}, \ldots, y_t^{(1)}\}$, the UHA resolves the lowest-indexed token $\alpha_T^\prime$ (among $\{x_i^{(1)}\}$) whose current-node encoding $\pi_{cur}(\alpha_T^\prime)$ corresponds to $\dfs^{(t+1)}(s)$. When $\pi_{flg_1}(y_t^{(1)}) = 0$, $\alpha_T^\prime$ defaults to $x_\bot^{(1)}$.
        
        The second head implements the operation \emph{backtrack}. The query $Q_B^{(2)} y_t^{(1)}$ is a bilinear map computing $(1 - \pi_{flg_1}(y_t^{(1)})) \cdot [\pi_{par}(y_t^{(1)}) \parallel \pi_{flg_3}(y_t^{(1)})]$, which activates only when $\pi_{flg_1}(y_t^{(1)}) = 0$. Evaluated against keys $K_B^{(2)} \alpha = [\pi_{cur}(\alpha) \parallel \pi_{flg_3}(\alpha)]$ for all $\alpha \in \{x_\bot^{(1)}, x_0^{(1)}, \ldots, x_{n-1}^{(1)}, y_0^{(1)}, \ldots, y_t^{(1)}\}$, the UHA resolves the lowest-indexed token $\alpha^{\prime}_B$ (among $\{y_i^{(1)}\}$) such that $\pi_{cur}(\alpha^{\prime}_B)$ corresponds to the vertex first visited during $\dfs^{(t^{\prime})}(s)$ for some $0 \leq t^{\prime} < t$. When $\pi_{flg_1}(y_t^{(1)}) = 1$, the query vanishes and $\alpha^{\prime}_B$ defaults to $x_\bot^{(1)}$.  
        The linear map $W_O^{(2)} \in \{-1, 0, 1\}^{d\times 2d}$ produces the combined update $\alpha^{\prime\prime} = \alpha^{\prime\prime}_T + \alpha^{\prime\prime}_B$, where 
        \begin{align*}
            \alpha_T^{\prime\prime} &= \left[{-\pi_{cur}(\alpha_T^\prime)} \parallel {-\pi_{par}(\alpha_T^\prime)} \parallel \pi_{cur}(\alpha_T^\prime) \parallel 0^n \parallel \pi_{nbr}(\alpha_T^\prime) \parallel 0^n \parallel 0^3 \right]^\top \quad \text{ and }\\
            \alpha^{\prime\prime}_B &= \left[{-\pi_{cur}(\alpha^{\prime}_B)} \parallel {-\pi_{par}(\alpha^{\prime}_B)} \parallel 0^{2n} \parallel \pi_{nbr}(\alpha^{\prime}_B) \parallel 0^n \parallel 0^3 \right]^\top.
        \end{align*}
        Since the two heads are mutually exclusive {--} governed by $\pi_{flg_1}(y_t^{(1)})$ {--} exactly one of $\alpha^{\prime\prime}_T$ and $\alpha^{\prime\prime}_B$ is $0^{6n+3}$. 
        
        Since $G$ is simple, during \emph{traverse} the current vertex and any unvisited neighbor are necessarily distinct, whence $\pi_{cur}(y_t^{(1)})^\top \pi_{cur}(\alpha^{\prime\prime}_T) = 0$. Similarly, during \emph{backtrack}, $\pi_{par}(y_t^{(1)})^\top \pi_{par}(\alpha^{\prime\prime}_B) = 0$ holds strictly: were it otherwise, we would have $\pi_{cur}(\alpha^{\prime\prime}_B) = \pi_{par}(\alpha^{\prime\prime}_B)$, implying a self-loop and contradicting the assumption that $G$ is simple.
    
        The linear map $W_1^{(2)}$ then negates the current and parent restrictions of $\alpha^{\prime\prime} + y_t^{(1)}$, resets the temporary register, and copies the (previous) temporary register into the neighborhood position, leaving all other positions intact. The subsequent $\relu$ activation discards the negated priors: $-\pi_{cur}(y_t^{(1)})$ is eliminated from the current restriction in both traversal and backtrack, while $-\pi_{par}(y_t^{(1)})$ is eliminated from the parent restriction during backtrack. Denote the result by $\beta$. At this stage, $\beta$ agrees with the desired output $y_{t+1}$ of the $(t+1)$-th $\Dfs$ dynamic in every restriction except the parent restriction during \emph{traverse}. To resolve this, the bilinear map $W_2^{(2)}$ sets $\pi_{par}(\beta)$ to value $\pi_{flg_1}(\beta) \pi_{buf}(\beta) + (1 - \pi_{flg_1}(\beta)) \pi_{par}(\beta)$, which selects the buffered vertex (the pre-traversal current node) when $\pi_{flg_1}(y_t) = 1$ and retains the existing parent otherwise. The chain-of-thought terminates when $\pi_{cur}(y_{t+1}) = 0^n$.
    \end{proof}

\section{Dijkstra Algorithm}
\label{app:cotdij}
    \begin{theorem}
    \label{thm:cotdijkstra}
        A two-layer, single-head Transformer decoder can simulate the Dijkstra algorithm $\dij$ on any simple connected directed graph $G = (V, E, A)$ with $\mathsf{A} = \R_{> 0}$ in exactly $|V|-1$ chain-of-thought steps.
    \end{theorem}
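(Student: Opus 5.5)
We mirror the DFS construction of \autoref{thm:cotdfs}: an explicit embedding, an inductive invariant, and one layer per phase of a Dijkstra step. The input tokens are $x_0,\ldots,x_{n-1}$ plus a sentinel $x_\bot=0^d$. Each $x_i$ concatenates the blocks
\[
\bigl[\, e_i \parallel \text{(a finite-valued encoding of } A_{i,:}) \parallel 0^n \,(\dis) \parallel 0^n \,(vst) \parallel 0^n \,(\text{buffer}) \parallel 0 \,(flg_3)\,\bigr].
\]
Since $\infty$ cannot be stored, we replace it by a sufficiently large constant $C$, for instance $C > \sum_{ij} a_{ij}$. The initial token $y_0$ carries $e_p$ as current vertex, $\dis^{(0)}$ with $0$ at $p$ and $C$ elsewhere, $vst=e_p$, and $flg_3=1$.

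The invariant is that $y_t$ encodes $\dij^{(t)}=(\dis^{(t)},cur^{(t)},vst^{(t)})$. We prove it by induction over exactly $n-1$ steps; this is the right count because each step fixes one new vertex and the source is fixed at $t=0$.

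\textbf{Layer 1 (relaxation).*} The query is $\pi_{cur}(y_t)$ and the key of every token is $\pi_{cur}(\alpha)$ masked by $(1-flg_3)$, so unique hard attention retrieves the input token $x_i$ with $v_i=cur^{(t)}$. The value copies its weight row $A_{i,:}$ into the buffer. The FFN then needs the scalar $d_i=\dis^{(t)}[i]$, which is obtained bilinearly as $\pi_{cur}(y_t)^\top\pi_{\dis}(y_t)$, following \citet{rizvi2024simulating} and \autoref{lem:bilinearstack}. With it the FFN relaxes coordinatewise via
\[
\min(a,b)=a-\relu(a-b),
\]
producing $\dis^{(t+1)}[j]=\min\bigl(\dis^{(t)}[j],\,d_i+a_{ij}\bigr)$ for all $j$ in one ReLU layer. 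Absent edges carry weight $C$, so they cannot win the minimum. Positive weights ensure that visited vertices keep their final distances.

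\textbf{Layer 2 (selection).} This is the main obstacle: we need the argmin over unvisited vertices of $\dis^{(t+1)}$ using a single unique-hard-attention head with no positional encodings. The plan is as follows.
\begin{itemize}[leftmargin=*, itemsep=0pt]
\item Attend over the input tokens $x_j$, whose keys are $K x_j=[e_j \parallel 1]$.
\item Build the query bilinearly as $-\dis^{(t+1)} - C'\,vst^{(t)}$, with $C'>C$, concatenated with a large constant coordinate that dominates the $flg_3$ mask.
\item The score is then $-\dis[j]-C'vst(j)$, so the maximum falls on the unvisited vertex of smallest tentative distance.
\item Ties are resolved by the smallest index, which is a valid Dijkstra choice.
\item Tokens $y_\cdot$ and $x_\bot$ must be excluded by a negative contribution through $flg_3$. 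For $x_\bot$, add a constant key coordinate that $x_\bot$ lacks, e.g.\ set $flg_3=-1$ for $x_\bot$ and weight it negatively.
\end{itemize}
The value returns $e_{j'}$. The FFN, using the negation-and-ReLU trick from \autoref{thm:cotdfs}, then overwrites the current block with $e_{j'}$, adds $e_{j'}$ to $vst$, clears the buffer, and sets $flg_3=1$.

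\textbf{Remaining bookkeeping.*} We must check that the modifications layer 1 makes to earlier positions (the buffer and $\dis$ blocks) do not disturb layer-2 scores. This holds because the keys read only the $e_j$ and constant coordinates. Finally, a connected graph guarantees that the selected vertex has finite distance at each of the $n-1$ steps, and $\pi_{\dis}(y_{n-1})$ gives the shortest-path distances.
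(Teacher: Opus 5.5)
Your proposal is correct and uses the same two-layer decomposition as the paper. Layer 1 fetches the current vertex's weight row by one-hot key matching and relaxes via $\min(a,b)=a-\relu(a-b)$. Layer 2 selects the unvisited argmin by hard attention against the one-hot keys $\pi_{cur}(\alpha)$, and its FFN rewrites the current and visited blocks with the negate-then-$\relu$ trick. The two substantive differences are these. You obtain $\dis^{(t)}[i]$ bilinearly as $\pi_{cur}(y_t)^\top\pi_{dis}(y_t)$, whereas the paper carries it as a scalar $\pi_{crd}$ extracted at the end of the previous step by a $\lambda$-gated $\relu$. You also mask visited vertices additively with a linear query $-\dis-C'\,vst$, instead of the paper's bilinear Hadamard query $(1-\pi_{vis})\odot(\lambda-\pi_{dis})$.

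The one step that fails as written is the sentinel exclusion. A single linear $flg_3$ coordinate taking the value $1$ on $y$ tokens and $-1$ on $x_\bot$ cannot give both a negative contribution. Neither penalty is needed, though. The $y$ tokens carry visited vertices, so $-C'\,vst$ already pushes them below every unvisited $x_j$. For $x_\bot$, add $\Lambda>C$ to every query coordinate; this is the paper's $\lambda-\dis$ offset, or equivalently your constant key coordinate realized as $\mathbf{1}^\top\pi_{cur}(\alpha)$, which vanishes on $x_\bot=0^d$. Every unvisited $x_j$ then scores strictly above $x_\bot$'s $0$. Alternatively, simply drop the sentinel, which the paper's Dijkstra construction never uses.
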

    \begin{proof}
        We show that the $t$\textsuperscript{th} step of a two-layer single-head Transformer decoder implements $\dij^{(t)}$. Let $G = (V, E, A)$ be a graph with adjacency matrix $A \in \R_{> 0}^{n \times n}$, fix a source $s = v_p \in V$, and write $e_i \in \{0,1\}^n$ for the standard basis vector corresponding to $v_i$. The input token sequence consists of embeddings $x_0, \ldots, x_{n-1} \in \R^d$ with $d = 5n+1$ that together encode the instantaneous description of the traversal: the identity of the current vertex, the tentative distance from $s$ to that vertex, the visited indicator, the outgoing edge weights, the global distance vector $\dis$, and a buffer. Concretely, the embedding of vertex $v_i$ is
        \[
            x_i = \left[e_i \parallel -\lambda \parallel 0^n \parallel A_{i,:} \parallel \lambda^n \parallel 0^n\right]^\top,
        \]
        where $A_{i,:}$ is the $i$\textsuperscript{th} row of the adjacency matrix encoding the outgoing weights from $v_i$, and $\lambda$ is a sufficiently large constant. The scalar $-\lambda$ in the second block serves as a sentinel indicating that no finite shortest-path estimate has yet been assigned; the fourth block will be repurposed during output token generation, and the final block acts as a scratchpad. Note that the collection $\{x_i\}_{i=0}^{n-1}$ uniquely determines $G$. The dynamic $\dij$ at step $t \in \{0, \ldots, n-1\}$ is maintained in a vector $y_t \in \R^d$; given source $s = v_p$, the initial state is
        \[
            y_0 = \left[e_p \parallel 0 \parallel e_p \parallel 0^n \parallel \lambda^n \parallel 0^n\right]^\top.
        \]    
        
        That is, $y_0$ starts the CoT with the initialized state $\dij^{(0)}$ {--} source visited, distance zero {--} whereas the embeddings $\{x_i\}$ encode the graph structure. Let $\mathcal{E} \subset \{0,1\}^n$ denote the set of standard basis vectors in $\R^n$. For notational convenience, we define the following selectors that extract individual blocks from any embedding in $\R^d$:
        $
          \pi_{cur}: \R^d \to \mathcal{E}, \pi_{crd}: \R^d \to \R_{\ge 0} \cup \{-\lambda\}, \pi_{vis}: \R^d \to \{0,1\}^n, \pi_{wei}, \pi_{dis}: \R^d \to \R^n, \text{ and } \pi_{buf}: \R^d \to \R^n.
        $
        In the generated sequence $\{y_{t+1}\}$, the scalar $\pi_{crd}(y_{t+1})$ records $\dis^{(t+1)}[j^\prime]$ for the uniquely newly visited vertex, i.e., the vertex $v_{j^\prime}$ satisfying $vst^{(t+1)}(v_{j^\prime}) = 1$ and $vst^{(t)}(v_{j^\prime}) = 0$. The block $\pi_{wei}(y_{t+1}) = 0^n$ is not used to carry adjacency information in the output tokens and is instead repurposed as another buffer for storing intermediate results between layers. Note that the codomain $\R_{\ge 0} \cup \{-\lambda\}$ reflects a dual role: on input tokens $\{x_i\}$ the selector $\pi_{crd}$ returns the sentinel $-\lambda$, while on generated tokens $\{y_t\}$ it returns a non-negative shortest-path distance.
    
        We proceed inductively. Suppose $y_t$ encodes the instantaneous description of $G$ immediately prior to $\dij^{(t+1)}$; the first layer updates the distance vector from $\dis^{(t)}$ to $\dis^{(t+1)}$. The query $Q^{(1)} y_t$ extracts $\pi_{cur}(y_t)$, the one-hot identity of the current vertex, and is evaluated against keys $K^{(1)}\alpha = \pi_{cur}(\alpha)$ for every token $\alpha \in \{x_0, \ldots, x_{n-1}, y_0, \ldots, y_t\}$; the UHA selects the lowest-indexed match, which is necessarily the input embedding $x_i$ corresponding to $cur^{(t)} = v_i$, thereby getting access of the adjacency row $A_{i,:}$. Taking $V^{(1)}$ as identity, the output projection $W_O^{(1)}$ zeroes out all blocks except $\pi_{wei}$, so that after the residual connection the weight block carries $A_{i,:}$ {--} in particular, this is where $\pi_{wei}$ transitions from its role as a zero buffer on output tokens to carrying adjacency data within the layer. Let this vector be $\alpha^{\prime\prime}$. The first linear map of the feed-forward network, $W_1^{(1)}$, then computes $\pi_{dis}(\alpha^{\prime\prime} + y_t)[i] - \pi_{crd}(\alpha^{\prime\prime} + y_t) - \pi_{wei}(\alpha^{\prime\prime} + y_t)[i]$ for each $i \in \{0, \ldots, n-1\}$; after the $\relu$ activation, $\pi_{wei}$ stores the non-negative updates corresponding to $\dis^{(t+1)}$ with all other positions intact. Denote the result by $\beta$ and observe that $\pi_{wei}(\beta) \ge 0^n$. The second linear map $W_2^{(1)}$ replaces $\pi_{dis}(\beta)$ with $\pi_{dis}(\beta) - \pi_{wei}(\beta)$ and resets $\pi_{wei}(\beta)$ to $0^n$. We denote the resulting vector by $y_t^{(1)}$.
    
        The second layer selects the unvisited vertex of minimum tentative distance. The query $Q^{(2)}$ is a bilinear map \citep{rizvi2024simulating} realizing the Hadamard product $(1 - \pi_{vis}(y_t^{(1)})) \odot (\lambda - \pi_{dis}(y_t^{(1)}))$, which assigns each unvisited vertex $v_j$ a score of $\lambda - \pi_{dis}(y_t^{(1)})[j] > 0$. Evaluated against keys $K^{(2)}\alpha = \pi_{cur}(\alpha)$ for all $\alpha \in \{x_0^{(1)}, \ldots, x_{n-1}^{(1)}, y_0^{(1)}, \ldots, y_t^{(1)}\}$, UHA selects the lowest-indexed token $\alpha^\prime$ among $\{x_i^{(1)}\}_{i=0}^{n-1}$ at which $-\pi_{dis}(y_t^{(1)})$ is maximized over unvisited vertices {--} equivalently, the vertex $v_{j^\prime}$ such that $j^\prime = \argmin\{\dis^{(t+1)}[j] \mid vst^{(t)}(v_j) = 0\}$. The output projection $W_O^{(2)}$ produces
        \begin{align*}
            \alpha^{\prime\prime} &= \left[-\pi_{cur}(\alpha^\prime) \parallel 0 \parallel \pi_{cur}(\alpha^\prime) \parallel 0^n \parallel 0^n \parallel \pi_{cur}(\alpha^\prime)\right]^\top,
        \end{align*}
        so that the residual connection yields $\pi_{cur}(\alpha^{\prime\prime} + y_t^{(1)}) = \pi_{cur}(y_t^{(1)}) - \pi_{cur}(\alpha^\prime)$ and $\pi_{vis}(\alpha^{\prime\prime} + y_t^{(1)}) = \pi_{vis}(y_t^{(1)}) + \pi_{cur}(\alpha^\prime)$, while $\pi_{buf}$ now holds $\pi_{cur}(\alpha^\prime)$. We claim $\pi_{cur}(\alpha^\prime)^\top \pi_{cur}(y_t^{(1)}) = 0$: were this not so, the current vertex would be attending to itself, implying a self-loop and contradicting the assumption that $G$ is simple. Hence, the residual updates the current block to the difference of two orthogonal one-hot vectors and augments the visited set by exactly one element. The feed-forward network of the second layer completes the transition. The linear map $W_1^{(2)}$ acts on $\alpha^{\prime\prime} + y_t^{(1)}$ at two blocks while leaving all others intact: it negates the current block and sets the buffer to $\gamma \in \R^n$ defined coordinate-wise by
        \begin{align*}
            \gamma[i] = &\pi_{dis}(\alpha^{\prime\prime} + y_t^{(1)})[i] + \lambda\pi_{buf}(\alpha^{\prime\prime} + y_t^{(1)})[i] - \lambda\sum_{j=0}^{n-1}\pi_{buf}(\alpha^{\prime\prime} + y_t^{(1)})[j],
        \end{align*}
        for each $i \in \{0, \ldots, n-1\}$. Since
        \begin{equation*}
            \pi_{buf}(\alpha^{\prime\prime} + y_t^{(1)}) = \pi_{cur}(\alpha^\prime) = e_{j^\prime},
        \end{equation*}
        the sum equals unity and the expression simplifies: at coordinate $j^\prime$ the $\lambda$ terms cancel, yielding $\gamma[j^\prime] = \dis^{(t+1)}[j^\prime]$, while at every other coordinate $\gamma[i] = \dis^{(t+1)}[i] - \lambda < 0$ for $\lambda$ sufficiently large. The subsequent $\relu$ therefore retains $\dis^{(t+1)}[j^\prime]$ at the $j^\prime$\textsuperscript{th} coordinate of the buffer and zeros out all others, while simultaneously eliminating $-\pi_{cur}(y_t^{(1)})$ from the current block, leaving $\pi_{cur}(\alpha^\prime)$ as the sole survivor. Denote the result by ${y_t^{(1)}}^\prime$. Finally, $W_2^{(2)}$ sets $\pi_{crd}(y_{t+1}) = \sum_{i=0}^{n-1} \pi_{buf}({y_t^{(1)}}^\prime)[i] = \dis^{(t+1)}[j^\prime]$ and resets $\pi_{buf}(y_{t+1}) = 0^n$, resulting $y_{t+1}$. After exactly $n - 1$ transitions the visited indicator satisfies $\pi_{vis}(y_{n-1}) = 1^n$, which serves as the halting criterion.   
    \end{proof}

\section{Realizing \autoref{algo:pathtotree}}
\label{app:cotpathtotree}
    \begin{theorem}
    \label{thm:cotpathtotree}
        A single-layer, two-head Transformer decoder can simulate the \autoref{algo:pathtotree} in the $2(m-1)$ CoT-steps.
    \end{theorem}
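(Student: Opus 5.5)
We build an explicit single-layer, two-head decoder and prove by induction on $t$ that the $t$-th generated token $y_t$ encodes the state $(i, nl, A_w)$ of \autoref{algo:pathtotree} after reading $w_0\cdots w_{t-1}$. This follows the sketch given in the main text.

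\textbf{Embedding.} The input tokens $x_0,\ldots,x_{2(m-1)-1}$ encode the Dyck word. Token $x_\tau$ carries:
\begin{itemize}
\item a one-hot position block $e_\tau\in\R^{2(m-1)}$;
\item the signed step $q_\tau\in\{+1,-1\}$;
\item zero state blocks;
\item a type flag $0$.
\end{itemize}
A sentinel $x_\bot=0^d$ precedes them. A generated token $y_t$ carries:
\begin{itemize}
\item a cursor $pos=e_t$;
\item a one-hot current-vertex block $\pi_{cur}=e_i\in\R^m$;
\item a one-hot counter block $\pi_{nl}=e_{nl}$, used so the edge write is bilinear;
\item a parent block $\pi_{par}=e_{\mathrm{par}(i)}$;
\item the flattened adjacency block $\pi_A\in\R^{m^2}$;
\item a type flag $1$.
\end{itemize}
The initial token is $y_0=[e_0\parallel e_0\parallel e_0\parallel 0\parallel 0\parallel 1]$.

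\textbf{Attention heads.} Head~1 queries with $pos$ against keys given by the input position blocks. It returns $x_t$, so the layer reads $q_t$; this plays the role of the shift used in \autoref{thm:cotwidofpath}.

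Head~2 queries with $[\pi_{cur}(y_t)\parallel 1]$ against keys $[\pi_{cur}(\alpha)\parallel\pi_{flg}(\alpha)]$. Unique hard attention selects the lowest-indexed generated token whose current vertex is $i$. Every vertex is created at exactly one $\UU$ step, and that is the first generated token with $\pi_{cur}=e_i$. That creation token stores $\pi_{par}=e_{\mathrm{par}(i)}$. Head~2 therefore returns the parent $j$ required by line~9; its uniqueness is guaranteed by the correctness proof of \autoref{algo:pathtotree}. The root case never fires, because a $\DD$ never occurs at depth $0$.

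\textbf{Feed-forward block.} The FFN branches on $q_t$ using the gate $\relu(v+q_t-1)=\ind[q_t=1]v$ for $0\le v\le 1$, and the symmetric gate $\relu(v-q_t-1)$ for the $\DD$ case. The coordinates are updated as follows:
\begin{enumerate}
\item Cursor: $pos\gets$ the shift of $pos$, a fixed permutation matrix.
\item Counter: $\pi_{nl}\gets\ind[\UU]\,\mathrm{shift}(e_{nl})+\ind[\DD]\,e_{nl}$.
\item Current vertex: $\pi_{cur}\gets\ind[\UU]\,\mathrm{shift}(e_{nl})+\ind[\DD]\,e_j$.
\item Parent: $\pi_{par}\gets\ind[\UU]\,e_i+\ind[\DD]\,(\text{the stored parent of }j)$. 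Head~2 can return both $\pi_{cur}$ and $\pi_{par}$ of the creation token, so the retrieved token supplies this value.
\item Adjacency: $\pi_A\gets\pi_A+\ind[\UU]\,(e_i\otimes e_{nl+1}+e_{nl+1}\otimes e_i)$.
\end{enumerate}
In the adjacency update, the outer product is the single quadratic feature in $W_1$, following \citet{rizvi2024simulating} and \autoref{lem:bilinearstack}. It is gated through the $\relu$ trick, using the fact that its entries lie in $\{0,1\}$.

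The old values of $\pi_{cur}$, $\pi_{nl}$, $\pi_{par}$ and $pos$ arrive through the residual connection. To overwrite them, we subtract them before the $\relu$ so they cancel, as in \autoref{thm:cotdfs}. If both the new and old values need to be held, we add them and subtract afterwards, using nonnegativity.

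\textbf{Induction.} We check inductively that after $2(m-1)$ steps $\pi_A(y_{2(m-1)})$ equals the output $A_w$, using the correctness of \autoref{algo:pathtotree} already proved.

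\textbf{Main obstacle.} The main obstacle is to fit the conditional overwrites into one FFN with a single hidden $\relu$ while keeping everything bilinear, and in particular to retrieve the parent on $\DD$ without a second layer. Everything else is bookkeeping. Should storing the parent inside each creation token prove awkward, the fallback is to record the $\UU$-origin vertex $e_i$ in a dedicated block at creation time. Head~2's value projection then returns exactly that block.
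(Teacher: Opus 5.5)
Your construction is, in substance, the paper's. You use the same two heads: one matches the cursor $pos$ so that unique hard attention lands on $x_t$ and copies $q_t$, and one matches $[\pi_{cur}\parallel\pi_{flg}]$ so that it lands on the creation token of the current vertex and reads its stored parent. You also use the same branch gate $\relu(v\pm q_t-1)$, the same one-hot counter whose shift gives the new label, and the same gated, symmetrized quadratic edge feature $\pi_{cur}(\beta)_a\,\pi_{nl}(\beta)_{b-1}$.

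One step, however, fails as written: the $\DD$ branch of your parent update (item~4). Head~2 selects the creation token of $i$, whose blocks are $\pi_{cur}=e_i$ and $\pi_{par}=e_j$. The parent of $j$ is stored only in the creation token of $j$, which your query $[\pi_{cur}(y_t)\parallel 1]$ never reaches. So the invariant you announce, that every $y_t$ carries $e_{\mathrm{par}(i)}$, is not maintained across $\DD$ steps, and your induction breaks there.

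The repair is immediate, because that invariant is never used. Nothing in the FFN reads $\pi_{par}(y_t)$: on a $\UU$ step the new parent is $\pi_{cur}(y_t)$. Head~2 reads $par$ only from creation tokens, which are produced by $\UU$ steps (or are $y_0$) and so always store the correct parent. A token produced by a $\DD$ step is never the lowest-indexed generated token with its current vertex.

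The paper accordingly writes $\pi_{par}(y_{t+1})=\relu(\pi_{cur}(\beta)+q_t-1)$, which is $0^m$ after a $\DD$. Its induction needs only the weaker invariant that each creation token stores its parent, which is also what your fallback amounts to. If you prefer to keep the strong invariant, query head~2 with $[\pi_{par}(y_t)\parallel 1]$ instead: it then lands on the creation token of $j$ and supplies both $e_j$ and $e_{\mathrm{par}(j)}$.

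Finally, your cancellation paragraph is unnecessary in the paper's model. The decoder outputs $W_2\relu(W_1(\beta+x_{m-1}))$ with no residual around the FFN, so $W_2$ simply writes every block afresh.
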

    \begin{proof}
        The $t$\textsuperscript{th} decoding step reconstructs the partial tree built after reading the lattice path $\{(\tau, s_{\tau})\}_{\tau=0}^{t}$, which is one iteration of the \texttt{for} loop on line 3. Let $\mathcal{E}_m = \{e_0, \ldots, e_{m-1}\}$ and $\mathcal{E}_{2m-2} = \{\hat{e}_0, \ldots, \hat{e}_{2m-3}\}$ denote the standard bases of $\R^m$ and $\R^{2m-2}$. The construction maintains $i$, $nl$, and $A_w$ across iterations, encoded in eight blocks of dimension $d = m^2 + 6m$: the current vertex $cur$ (the iterator $i$ of line 2), the label $nl$ of the most recently created vertex, the parent $par$ recorded when an $\UU$ creates a vertex, the flattened adjacency matrix $A$ of the partial tree, the cursor $pos$ marking the position read so far, two scratch blocks $stp$ holding $q_\tau \in \{+1, -1\}$ for $\UU$ resp.\ $\DD$ and on a CoT token $y_t$ is $0$, taking a transient value in $\{+1,-1\}$ only while a step is being read; and $buf$ holding the parent encoding retrieved on a $\DD$ (so that line 9 is realized), and a flag $flg$ separating input tokens $x$ ($flg = 0$) from CoT tokens $y$ ($flg = 1$).

        The input tokens $x_0, \ldots, x_{2m-3} \in \R^d$ encode the Dyck word $w = w_0 \cdots w_{2m-3}$ of the path $\{(\tau, s_{\tau})\}_{\tau=0}^{2m-2}$:
        \begin{align*}
            x_\tau &= [0^m \parallel 0^m \parallel 0^m \parallel 0^{m^2} \parallel \hat{e}_{\tau} \parallel q_{\tau} \parallel 0^{m} \parallel 0]^\top,\\
            q_\tau &= s_{\tau+1} - s_{\tau}.
        \end{align*}
        The dynamic is carried out by tokens $y_t \in \R^d$, with
        \[
            y_0 = [e_0 \parallel e_0 \parallel 0^m \parallel 0^{m^2} \parallel \hat{e}_0 \parallel 0 \parallel 0^m \parallel 1]^\top,
        \]
        so that $i = nl = 0$ ($cur, nl$ at $e_0$) and $par, A_w$ are empty.

        For $v \in \{0,1\}$ and $q \in \{+1,-1\}$,
        \begin{equation}
        \label{eq:proprelu}
            \begin{aligned}
                \relu(v+q-1) = \ind[q=+1] v, \quad \relu(v-q-1) = \ind[q=-1]v,
            \end{aligned}
        \end{equation}
        since $\relu(v+1-1) = v$ while $\relu(v-1-1) = \relu(v-2) = 0$, and symmetrically. Hence, one $\relu$ selects a branch by the sign of $q_t$ and passes a binary $v$ unchanged on the selected branch, zeroing it on the other. Like other constructions, we use the selectors 
        \begin{align*}
            &\pi_{cur}, \pi_{nl} : \R^d \to \mathcal{E}_m, \\
            &\pi_{par}, \pi_{buf} : \R^d \to \mathcal{E}_m \cup \{0^m\}, \\
            &\pi_{A} : \R^d \to \{0,1\}^{m^2}, \\
            &\pi_{pos} : \R^d \to \mathcal{E}_{2m-2}, \\
            &\pi_{stp} : \R^d \to \{+1,-1,0\}, \\
            & \textrm{and} \; \pi_{flg} : \R^d \to \{0,1\},
        \end{align*}
        each extracting its block from an embedding in $\R^d$.
        
        We argue by induction, with the hypothesis that $\pi_A(y_t)$ holds the adjacency matrix of the partial tree built after reading $x_0, \ldots, x_{t-1}$. Step $t$ is driven by two heads, one per branch of line 4: the $\UU$ head is selected when $\pi_{stp}(x_t) = +1$, the $\DD$ head when $\pi_{stp}(x_t) = -1$.

        The $\UU$ head reads the current symbol. With query $Q_{\UU} y_t = \pi_{pos}(y_t)$ and keys $K_{\UU}\alpha = \pi_{pos}(\alpha)$ over $\alpha \in \{x_0, \ldots, x_{2m-3}, y_0, \ldots, y_t\}$, the score $\pi_{pos}(y_t)^\top \pi_{pos}(\alpha)$ is $1$ exactly on the tokens whose block $pos$ is $\hat e_t$, namely $x_t$ and $y_t$; the lowest-indexed of these is the input token $x_t =: \alpha^\prime_{\UU}$. Its value $\alpha^{\prime\prime}_{\UU} := V_{\UU}\alpha^\prime_{\UU}$ copies $\pi_{stp}(\alpha^\prime_{\UU}) = q_t$ into the $stp$ block and is $0$ elsewhere.
        
        The $\DD$ head retrieves a parent. With query $Q_{\DD} y_t = [\pi_{cur}(y_t) \parallel \pi_{flg}(y_t)]$ and keys $K_{\DD}\alpha = [\pi_{cur}(\alpha) \parallel \pi_{flg}(\alpha)]$, the score is $2$ on the $y$ tokens whose current block equals $\pi_{cur}(y_t)$ and at most $1$ on every other token; among these the lowest-indexed is the creation token of the current vertex $i_t$ (the $y$ token that first set $\pi_{cur} = \pi_{cur}(y_t)$ on a $\UU$ step), which precedes $y_t$ and is thus selected over it. Call it $\alpha^\prime_{\DD}$. Its value $\alpha^{\prime\prime}_{\DD} := V_{\DD}\alpha^\prime_{\DD}$ copies the recorded parent $\pi_{par}(\alpha^\prime_{\DD})$ into the $buf$ block and is $0$ elsewhere.
        
        The projection $W_O \in \{0,1\}^{d \times 2d}$ sums the two contributions, $\alpha^{\prime\prime} = \alpha^{\prime\prime}_{\UU} + \alpha^{\prime\prime}_{\DD}$.

        We now construct the FFN. Write $\beta = \alpha^{\prime\prime} + y_t$ for its input; since the heads write only the $stp$ and $buf$ blocks, $\beta$ agrees with $y_t$ on every other block, so $\pi_{cur}(\beta) = \pi_{cur}(y_t)$, $\pi_{nl}(\beta) = \pi_{nl}(y_t)$, $\pi_A(\beta) = \pi_A(y_t)$, while $\pi_{stp}(\beta) = q_t$ and $\pi_{buf}(\beta) = \pi_{par}(\alpha^\prime_{\DD})$. Let $h = W_1\beta \in \R^{d^{\prime}}$ with $d^{\prime} = 2m^2 + 6m - 2 > d$, and overload $\pi^\prime_{\cdot}$ for the blocks of $h$:
        \[
            h = [\pi^{\prime}_{cur} \parallel \pi^{\prime}_{nl} \parallel \pi^{\prime}_{par} \parallel \pi^{\prime}_{buf} \parallel \pi^{\prime}_{A} \parallel \pi^{\prime}_{pos} \parallel \pi^{\prime}_{A^\prime}],
        \]
        which drops the $flg$ and $stp$ blocks and adds $A^\prime$, the indicator of the edge inserted on a $\UU$ (line 6). Let $S \in \{0,1\}^{m \times m}$ be the shift $s_{ij} = \ind[j = i-1]$ (so $S e_k = e_{k+1}$ and $S e_{m-1} = 0$); addition of a scalar to a vector is broadcast, i.e. added to every coordinate. By \eqref{eq:proprelu}, $W_1$ sets each block so that the activation $\relu$ realizes the branch of line 4:
        \begin{equation*}
            \begin{split}
                & \pi^{\prime}_{cur}(h) = S\pi_{nl}(\beta) + \pi_{stp}(\beta) - 1\\
                & \pi^{\prime}_{nl}(h) = \pi_{nl}(\beta) - \pi_{stp}(\beta) - 1\\
                & \pi^{\prime}_{par}(h) = \pi_{cur}(\beta) + \pi_{stp}(\beta) - 1\\
                & \pi^{\prime}_{buf}(h) = \pi_{buf}(\beta) - \pi_{stp}(\beta) - 1\\
                & \pi^{\prime}_{A}(h) = \pi_{A}(\beta)\\
                & \pi^\prime_{A^\prime}(h)_{am+b} = \beta^\top M^{(a,b)} \beta + \pi_{stp}(\beta) - 1 \\
                & \qquad = \pi_{cur}(\beta)_a \cdot \pi_{nl}(\beta)_{b-1} + \pi_{stp}(\beta) - 1\\
                & \pi^{\prime}_{pos}(h) = \pi_{pos}(\beta),
            \end{split}
        \end{equation*}
        for $(a,b) \in \{0,\dots,m-1\}^2$, where $M^{(a,b)} \in \R^{d \times d}$ is the constant matrix with $\beta^\top M^{(a,b)} \beta = \pi_{cur}(\beta)_a \cdot \pi_{nl}(\beta)_{b-1}$, and $\pi_{nl}(\beta)_{b-1}$ is read as $0$ when $b = 0$.
        
        Recall $\pi_{stp}(\beta) = q_t \in \{+1,-1\}$. By \eqref{eq:proprelu}, after activation $\pi^\prime_{cur}(h^\prime)$ equals the shifted label $S\pi_{nl}(\beta)$ on a $\UU$ and $0$ on a $\DD$, while $\pi^\prime_{buf}(h^\prime)$ equals the retrieved parent on a $\DD$ and $0$ on a $\UU$; the remaining blocks gate analogously. Let $S^\prime \in \{0,1\}^{(2m-2)\times(2m-2)}$ be the shift $S^\prime\hat e_\tau = \hat e_{\tau+1}$ with $S^\prime\hat e_{2m-3} = 0$. With $h^\prime = \relu(h)$, the map $W_2 \in \{0,1\}^{d \times d^\prime}$ produces $y_{t+1} = W_2 h^\prime$ by summing the gated blocks:
        \begin{align*}
            & \pi_{cur}(y_{t+1}) = \pi^{\prime}_{cur}(h^\prime) + \pi^{\prime}_{buf}(h^\prime), \\
            & \pi_{nl}(y_{t+1}) = \pi^{\prime}_{cur}(h^\prime) + \pi^{\prime}_{nl}(h^\prime), \\
            & \pi_{par}(y_{t+1}) = \pi^{\prime}_{par}(h^\prime), \\
            & \pi_{A}(y_{t+1})[am+b] = \pi^{\prime}_{A}(h^\prime)[am+b] \\
            & \quad + \pi^{\prime}_{A^\prime}(h^\prime)[am+b] + \pi^{\prime}_{A^\prime}(h^\prime)[bm+a], \\
            & \pi_{pos}(y_{t+1}) = S^\prime\pi^{\prime}_{pos}(h^\prime), \quad
              \pi_{stp}(y_{t+1}) = 0, \\
            & \pi_{buf}(y_{t+1}) = 0^m, \quad
              \pi_{flg}(y_{t+1}) = 1.
        \end{align*}
        On a $\UU$ step this gives $\pi_{cur}(y_{t+1}) = \pi_{nl}(y_{t+1}) = e_{nl_t+1}$ and sets $A[i_t, nl_t+1] = A[nl_t+1, i_t] = 1$ with $\pi_{par}(y_{t+1}) = e_{i_t}$, realizing lines 5--6; on a $\DD$ step it gives $\pi_{cur}(y_{t+1}) = \pi_{buf}(h^\prime) = e_{\operatorname{par}(i_t)}$ while $\pi_{nl}$ and $\pi_A$ are unchanged, realizing line 9. In both cases the cursor advances, $\pi_{pos}(y_{t+1}) = \hat e_{t+1}$.
    \end{proof}

\section{Realizing \texorpdfstring{$\str$}{str} on Trees and Paths}
\label{app:cotstr}
    \begin{theorem}
    \label{thm:cottreestrahler}
        A four-layer, two-head Transformer decoder can compute $\str(G_T)$, the Strahler number of a tree $G_T$ during the simulation of $\Dfs$ in exactly $2n - 1$ CoT steps, where $n$ denotes the number of vertices in $G_T$.
    \end{theorem}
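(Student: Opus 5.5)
We build on the two-layer, two-head decoder of \autoref{thm:cotdfs}, run on the tree $G_T$ from its root. That decoder is left untouched in its first two layers. Two new layers on top implement the bottom-up fold of \autoref{obs:dynamicstrdfs}.

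\textbf{Embedding.} We append three scalar coordinates to every token:
\begin{itemize}[leftmargin=*, itemsep=0pt]
    \item $\pi_M$ and $\pi_c$, holding the accumulator $(M_u,c_u)$ of the vertex that is current after the step;
    \item a timestamp $\pi_{flg_4}$ that grows strictly with $t$.
\end{itemize}
Input tokens carry $(0,0)$ in the accumulator, and $y_0$ carries $(M,c)=(0,0)$ for the root. The timestamp is the one necessary change inside the DFS layers. The backtrack head of \autoref{thm:cotdfs} picks the lowest-indexed $y$ token whose current vertex is the parent, which is that vertex's creation token. To read the accumulator the parent \emph{currently} carries, we need the most recent such token instead. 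We add $\pi_{flg_4}$ to the backtrack key and a constant coordinate to its query, so that among tokens with the maximal match score the latest one wins. This needs only a linear term in the key, so the head count stays at two. Because tokens are appended chronologically, and the parent is current only on tokens produced while it was being explored, the selected token carries $(M_u^{(t)},c_u^{(t)})$ exactly. Its value projection copies $(\pi_M,\pi_c)$ into a scratch pair $(M_u,c_u)$. On a traverse step the traverse head copies input tokens, so the scratch pair is $(0,0)$, as a freshly entered vertex requires.

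\textbf{Layer 3: gated indicators.} The FFN first forms $\str(\tau_v)=M_v+\ind[c_v\ge 2]$ for the vertex $v$ being left. The indicator $\ind[c_v\ge2]=\relu(c_v-1)-\relu(c_v-2)$ is exact because all quantities are integers. It then computes $D=\str(\tau_v)-M_u$. This is a difference of two affine quantities, so no bilinearity is needed yet. Next it produces the indicators $\ind[D>0]$, $\ind[D=0]$, $\ind[D<0]$ and $\ind[c_u>0]$, again as differences of shifted ReLUs of integer arguments. One caveat: $\str(\tau_v)$ is itself a ReLU output, so $D$ is only available after one activation. For that reason we plan to place the computation of $\str(\tau_v)$ and $D$ in the FFN of layer 3, and the thresholding of $D$ in the first half of layer 4's FFN. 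Layer 3's attention is trivial.

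\textbf{Layer 4: bilinear fold.} The bilinear $W_2^{(4)}$ writes the reduced update of \autoref{obs:dynamicstrdfs}:
\[
M \leftarrow \max(\str(\tau_v),M_u)+\ind[D=0]\,\ind[c_u>0], \qquad
c \leftarrow \ind[D<0]\,c_u+\ind[D=0]\,\ind[c_u=0]+\ind[D>0].
\]
The maximum is obtained as $M_u+\relu(D)$. Each product pairs one $\{0,1\}$ indicator with a value, so it is a single bilinear term. By \autoref{lem:bilinearstack} all these terms stack into one bilinear map. Finally we gate everything by $1-\pi_{flg_1}$, so that a traverse step writes $(0,0)$ and a backtrack step writes the folded pair.

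\textbf{Invariant and step count.} We prove by induction on $t$ that $\pi_M(y_t),\pi_c(y_t)$ equal $(M,c)$ of the current vertex over its children explored so far. The case $D=0$ needs a check: when a child's value ties the running maximum, the stored multiplicity saturates to the encoding "$\ge2$", which is realized by folding the $+1$ directly into $M$ (the carry). This works because a tie of two already yields $\str=M+1$, and any further children either exceed this value or do not change it.

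The DFS on a tree takes $2(n-1)$ steps. We add one final backtrack out of the root, with the parent retrieved as $x_\bot$ so that $M_u=c_u=0$. By the boundary convention of \autoref{obs:dynamicstrdfs}, this last fold outputs $M=\str(G_T)$. That gives $\pi_M(y_{2n-1})=\str(G_T)$ in exactly $2n-1$ steps.

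\textbf{Main obstacle.} The hardest part is the rightmost-selection modification together with the saturation carry. We must verify that the timestamp does not disturb the traverse head or the cases where the parent selection is ambiguous, and that the layer-3 changes to earlier tokens never feed back into the attention scores. This is the step we would check first and in the most detail.
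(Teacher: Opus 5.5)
Your route matches the paper's: a timestamp $\pi_{flg_4}$ in the backtrack key so the head resolves the parent's most recent occurrence, the parent's accumulator copied into a scratch pair by the backtrack value, ReLU-difference indicators for $D$ and $c_u$, a bilinear fold with the carry encoding, a gate by $1-\pi_{flg_1}$, and one extra backtrack out of the root.

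\textbf{The gap is that last backtrack.} At the root $\pi_{par}=0^n$, so no token matches the parent. Every generated token then attains the maximal match score, and your constant query coordinate paired with the timestamp selects the latest one. That is $y_{2n-2}$ itself, not $x_\bot$. Hence $(M_u,c_u)=(M_s,c_s)$, $D=0$, and the fold writes $M_s+\ind[c_s>0]$. In the carry encoding $c\in\{0,1\}$ throughout and $\str(s)=M_s$, so the output is off by one whenever $c_s=1$. On the two-vertex tree you get $\pi_M(y_3)=1$ instead of $0$.

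Your justification is the right one: with $M_u=c_u=0$ the fold yields $(M_s,1)$ if $M_s>0$ and $(0,1)$ otherwise. But this must be built into the head. For example, replace the constant query coordinate by $\mathbf{1}^\top\pi_{par}(y_t^{(1)})$. After the $(1-\pi_{flg_1})$ gate the query is still bilinear and agrees with yours on every non-root backtrack. At the root it vanishes, so unique hard attention falls back to the lowest-indexed token, the sentinel $x_\bot$. The traversal still halts: with both head outputs zero, the negate-and-rectify step of the second layer clears $e_s$, giving $\pi_{cur}(y_{2n-1})=0^n$. The paper's proof has the same hole. Its query $(1-\pi_{flg_1})[\pi_{par}\parallel\pi_{flg_3}]$ self-selects at the root in exactly this way, so its claim that the final backtrack folds $s$ ``under the same invariant'' fails on the same trees.

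\textbf{A smaller point comes from your layer schedule.} Because layer 3 is spent on $\str(\tau_v)$ and $D$, you threshold $D$ in $W_1^{(4)}$ and fold in $W_2^{(4)}$, leaving no map for the gate. Multiplying the fold by $1-\pi_{flg_1}$ inside $W_2^{(4)}$ produces cubic terms such as $(1-\pi_{flg_1})\,\ind[D=0]\,\ind[c_u>0]$, which is not bilinear. The paper avoids this with a different schedule: it forms $\str(v)$ and $D\pm1$ already in the second layer's FFN (the scratch pair is available there), the indicators in layer 3, the fold in $W_1^{(4)}$, and the gate in $W_2^{(4)}$. Alternatively, keep your schedule and move the gate into the activation. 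Subtract $\lambda\pi_{flg_1}$ from the pre-activations of every $D$-indicator unit and of the unit computing $\relu(D)$. Since $M_u=c_u=0$ on a traverse, the bilinear fold then already writes $(0,0)$ there.

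Also state that the match term in the key carries a factor $\lambda$ exceeding every timestamp; the paper takes $\lambda\ge 2n-1$. With unit weight, a late non-matching generated token can outscore the parent.
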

    
    \begin{proof}    
        We show that the $t$\textsuperscript{th} step of a four-layer, two-head decoder, extending that of \autoref{thm:cotdfs} in its embedding and by two further layers, computes the Strahler number $\str(G_T)$ of the tree $G_T$ during $\Dfs$ traversal. We will follow \autoref{obs:dynamicstrdfs}.
    
        We extend the embedding from $\R^{6n+3}$ to $\R^{6n+14}$ by appending ten scalar blocks, written $[\cdots \parallel M \parallel c \parallel M_{buf} \parallel c_{buf} \parallel flg_4 \parallel tmp_{D-1} \parallel tmp_{D} \parallel tmp_{D+1} \parallel tmp_{c_{buf}-1} \parallel tmp_{c_{buf}} \parallel tmp_{c_{buf}+1}]$. In a generated token $y_t$, the block $M$ holds the running maximum of the Strahler numbers of the already-finished children of the current vertex, and $c \in \{0,1,2\}$ counts how many of those children attain $M$. Within a \emph{backtrack} step these blocks are transformed by the layers below before the invariant is restored in $y_{t+1}$. The blocks $M_{buf}$ and $c_{buf}$ are transient buffers, zero on every input and generated token and nonzero only within a step during a \emph{backtrack}. The flag $flg_4 \in \{0, 1, \ldots, 2n-1\}$ is a timestamp, equal to $0$ on every input token and on $y_0$ and to $t$ on the generated token $y_t$, so that the generated tokens are linearly ordered by $flg_4$. The five blocks $tmp_{\cdot}$ are scratch positions used by the later layers; their role is given where they first arise. Whereas the type flag $flg_3$ separates inputs from generated tokens by value in $\{0,1\}$, the flag $flg_4$ records their order; supplied to the backtrack head of \autoref{thm:cotdfs}, it converts that head's leftmost selection into a rightmost one, so that the head resolves the most recent rather than the first occurrence of the parent vertex, as made precise below.
    
        The selectors of \autoref{thm:cotdfs} are retained, and we add $\pi_{M}, \pi_{c}, \pi_{M_{buf}}, \pi_{c_{buf}}, \pi_{flg_4}$ and $\pi_{tmp_{\cdot}} : \R^{d} \to \R$ with $d = 6n + 14$. The input tokens are those of the traversal with the new blocks set to zero,
        \begin{align*}
            x_i &= \left[ e_i \parallel 0^n \parallel 0^n \parallel A_{i,:} \parallel 0^n \parallel 0^n \parallel {-1} \parallel {-1} \parallel 0 \parallel 0^{11} \right]^\top,
        \end{align*}
        so that $\pi_M(x_i) = \pi_c(x_i) = \pi_{M_{buf}}(x_i) = \pi_{c_{buf}}(x_i) = \pi_{flg_4}(x_i) = 0$ for every $v_i \in V$, and the sentinel remains $x_\bot = 0^d$. Given the source $s = v_p$, the initial state is
        \begin{align*}
            y_0 &= \left[ e_p \parallel 0^n \parallel e_p \parallel A_{p,:} \parallel 0^n \parallel 0^n \parallel {-1} \parallel {-1} \parallel 1 \parallel 0^{11} \right]^\top,
        \end{align*}
        which agrees with the traversal's initial state on the inherited blocks and carries an empty accumulator $\pi_M(y_0) = \pi_c(y_0) = 0$, empty buffers $\pi_{M_{buf}}(y_0) = \pi_{c_{buf}}(y_0) = 0$, and timestamp $\pi_{flg_4}(y_0) = 0$. We take the Strahler number of a leaf to be zero.
        
        The selection effected by the backtrack head is the following. With respect to the $t$\textsuperscript{th} query $Q y_t$, the scalar $a_{i,t}$ is the attention score of the key against the token indexed $i$ among $\{x_\bot, x_0, \ldots, x_{n-1}, y_0, \ldots, y_t\}$. Let $j_0 < j_1 < \cdots < j_p$ denote the indices, in ascending order, that jointly attain the maximum score. Where unique hard attention selects the smallest such index $j_0$, the inclusion of $flg_4$ in the key makes the head select the largest, $j_p$, the most recently generated token among those of maximal score.

        The induction follows that of \autoref{thm:cotdfs}, recalled only to the extent the Strahler computation requires. Suppose $y_t$ encodes the instantaneous description immediately before $\Dfs^{(t+1)}(s)$. The first layer leaves its attention trivial and, through the feed-forward block, writes into $\pi_{flg_1}(y_t)$ the indicator $\relu(\gamma_t) - \relu(\gamma_t - 1)$ of whether the current vertex has an unvisited neighbour, where $\gamma_t = (\mathbf{1} - \pi_{vis}(y_t))^\top \pi_{nbr}(y_t)$, so that $\pi_{flg_1}(y_t^{(1)})$ is $1$ on a \emph{traverse} and $0$ on a \emph{backtrack}; it clears $\pi_{flg_2}$ and copies $\pi_{cur}(y_t)$ into the buffer. The eleven new blocks are untouched by this layer. We denote the result $y_t^{(1)}$.
        
        In the second layer the traverse head acts exactly as in \autoref{thm:cotdfs}, resolving the next unvisited neighbour on a \emph{traverse} and defaulting to $x_\bot^{(1)}$ on a \emph{backtrack}. The backtrack head retains its query but is keyed to select the parent's most recent occurrence rather than its first. Its query is the bilinear map
        \[
            Q_B^{(2)} y_t^{(1)} = (1 - \pi_{flg_1}(y_t^{(1)})) \cdot [\pi_{par}(y_t^{(1)}) \parallel \pi_{flg_3}(y_t^{(1)})],
        \]
        which vanishes on a \emph{traverse} and otherwise presents the parent indicator together with the scalar $\pi_{flg_3}(y_t^{(1)}) = 1$. It is evaluated against the keys
        \[
            K_B^{(2)} \alpha = [\lambda \cdot \pi_{cur}(\alpha) \parallel \pi_{flg_4}(\alpha)],
        \]
        for $\alpha \in \{x_\bot^{(1)}, x_0^{(1)}, \ldots, x_{n-1}^{(1)}, y_0^{(1)}, \ldots, y_t^{(1)}\}$, where the constant $\lambda \ge 2n - 1$ exceeds every attainable timestamp. The resulting score $\lambda \cdot \pi_{par}(y_t^{(1)})^\top \pi_{cur}(\alpha) + \pi_{flg_4}(\alpha)$ is lexicographic: the term $\lambda \cdot \pi_{par}(y_t^{(1)})^\top \pi_{cur}(\alpha)$ places every occurrence of the parent strictly above every non-occurrence, since $\lambda$ alone exceeds any timestamp, while the constant $\pi_{flg_3}(y_t^{(1)}) = 1$ admits $\pi_{flg_4}(\alpha)$ into the score with unit weight. The head thus resolves the token $\alpha_B^\prime$ of greatest timestamp among the occurrences of the parent, that is, the parent as it was last left. The query supplies $\pi_{flg_3}$, fixed at $1$, rather than its own $\pi_{flg_4} = t$, which is common to all keys and so cannot order the parent occurrences.
        
        The initial token $y_0$ does not reach the backtrack head. Unless $G$ is the null graph, the root $s$ has an unvisited neighbor, so $\gamma_0 \ge 1$ and $\pi_{flg_1}(y_0^{(1)}) = 1$; the backtrack query vanishes and $y_0$ is served by the traverse head.

        We now specify the value carried by the two heads. On the inherited blocks they act as in \autoref{thm:cotdfs}; we extend only the backtrack value, and only on the blocks $M$ and $c$. Let $\alpha_B^\prime$ be the token resolved by the backtrack head. Its value map $V_B^{(2)}$ moves the accumulator into the buffers and clears the originals,
        \[
        \begin{aligned}
            & \pi_{M_{buf}}(V_B^{(2)} \alpha_B^\prime) = \pi_M(\alpha_B^\prime), \ \  \pi_{c_{buf}}(V_B^{(2)} \alpha_B^\prime) = \pi_c(\alpha_B^\prime), \\
            & \pi_M(V_B^{(2)} \alpha_B^\prime) = 0, \quad
            \pi_c(V_B^{(2)} \alpha_B^\prime) = 0,
        \end{aligned}
        \]
        and is the identity on every other coordinate. The traverse value $\alpha_T^{\prime\prime}$ is zero on all ten new blocks, so the buffers are written by the backtrack head alone, and $M$, $c$, $flg_4$ by neither head.
        
        Form $\beta = \alpha_T^{\prime\prime} + \alpha_B^{\prime\prime} + y_t^{(1)}$. On a \emph{backtrack} the backtrack query does not vanish; the buffers receive the parent accumulator while the head contributes zero to $M$, $c$:
        \[
        \begin{aligned}
            & \pi_{M_{buf}}(\beta) = \pi_M(\alpha_B^\prime), \quad
            \pi_{c_{buf}}(\beta) = \pi_c(\alpha_B^\prime), \\
            & \pi_M(\beta) = \pi_M(y_t^{(1)}), \quad
            \pi_c(\beta) = \pi_c(y_t^{(1)}), \quad
            \pi_{flg_4}(\beta) = t.
        \end{aligned}
        \]
        On a \emph{traverse} the query vanishes, $\alpha_B^\prime$ defaults to $x_\bot^{(1)} = 0^d$, and the buffers stay empty, $\pi_{M_{buf}}(\beta) = \pi_{c_{buf}}(\beta) = 0$, while $M$, $c$ pass through from $y_t^{(1)}$. On a \emph{backtrack} the pair $(\pi_M(\beta), \pi_c(\beta))$ is the accumulator of the vertex being left, $v$ say, and the buffers hold the accumulator of its parent, $u$ say.

        Prior to define the parameters $W_1^{(2)}$, and $W_2^{(2)}$ along with the subsequent layers, we set up the proof for defining the $tmp_{\cdot}$ blocks. Let $D = \str(v) - M_u$ where $M_u = \pi_{M_{buf}}(\beta)$. Similarly, $c_v = \pi_{c}(\beta)$ and $c_u = \pi_{c_{buf}}(\beta)$. With this we want 
        \begin{equation}
        \label{eq:strahM}
            \begin{aligned}
                \pi_{M}(y_{t+1}) &= M_u + \relu(\str(v) - M_u) + \ind[D=0]\ind[c_u>0]
            \end{aligned}
        \end{equation}
        \begin{equation}
        \label{eq:strahc}
            \begin{aligned}
            \pi_{c}(y_{t+1}) &= \ind[D>0]c_u + \ind[D=0]\ind[c_u=0] + \ind[D<0]1
            \end{aligned}
        \end{equation}
        during \emph{backtrack}.
        
        The linear map $W_1^{(2)}$ sets
        \begin{align*}
            & \pi_c(W_1^{(2)}\beta) = \pi_c(\beta) - 1, \\
            & \pi_{flg_4}(W_1^{(2)}\beta) = \pi_{flg_4}(\beta) + 1, \\
            & \pi_{tmp_{c_{buf}-1}}(W_1^{(2)}\beta) = \pi_{c_{buf}}(\beta) - 1,
        \end{align*}
        leaving the other blocks intact; after the activation we write the result $\beta^\prime$. Since $c_v \in \{0,1,2\}$, the rectified count is $\pi_c(\beta^\prime) = \relu(c_v - 1) = \ind[c_v \ge 2]$. The linear map $W_2^{(2)}$ then forms $y_t^{(2)}$ by
        \[
            \pi_M(y_t^{(2)}) = \pi_M(\beta^\prime) + \pi_c(\beta^\prime) = M_v + \ind[c_v \ge 2] = \str(v),
        \]
        and, writing $\pi_M(\beta^\prime) + \pi_c(\beta^\prime) - \pi_{M_{buf}}(\beta^\prime) = \str(v) - M_u = D$,
        \begin{align*}
            & \pi_{tmp_{D-1}}(y_t^{(2)}) = D - 1,\\
            & \pi_{tmp_{D}}(y_t^{(2)}) = D,\\
            & \pi_{tmp_{D+1}}(y_t^{(2)}) = D + 1.
        \end{align*}
        After the second layer the block $M$ carries $\str(v)$, and the blocks $tmp_{D-1}, tmp_D, tmp_{D+1}$ carry $D-1$, $D$, $D+1$.
        
        The third and fourth layers leave the attention trivial, $W_O^{(3)} = W_O^{(4)} = \mathbf{0}$, and act through their feed-forward blocks alone. The linear map $W_1^{(3)}$ extends $y_t^{(2)}$ by six scalar blocks, $1 - D,\ -D,\ -D-1,\ 1 - c_u,\ -c_u, \text{ and } -c_u-1$. The activation reads off indicators through the integer identities
        \begin{align*}
            \ind[v=0] &= \relu(v+1) - 2\relu(v) + \relu(v-1)\\
            \ind[v>0] &= \relu(v) - \relu(v-1).
        \end{align*}
        write the rectified vector $\beta^{\prime\prime}$. The linear map $W_2^{(3)}$ then produces $y_t^{(3)} \in \R^d$, returning the indicators to the $tmp$ blocks:
        \begin{align*}
            & \pi_M(y_t^{(3)}) = \pi_M(\beta^{\prime\prime}) + \pi_{-D}(\beta^{\prime\prime}) \\
            & \qquad = \max(\str(v), M_u),\\
            & \pi_{tmp_{D-1}}(y_t^{(3)}) = \ind[D>0], \\
            & \pi_{tmp_{D}}(y_t^{(3)}) = \ind[D=0], \\
            & \pi_{tmp_{D+1}}(y_t^{(3)}) = \ind[D<0], \\
            & \pi_{tmp_{c_{buf}}}(y_t^{(3)}) = \ind[c_u>0], \\
            & \pi_{tmp_{c_{buf}-1}}(y_t^{(3)}) = \ind[c_u=0].
        \end{align*}
        The $1$s the indicators require are supplied by $\pi_{flg_3} = 1$. The fourth layer carries out the accumulation. Recall that after the third layer the blocks $tmp_{D-1}, tmp_D, tmp_{D+1}$ hold $\ind[D>0], \ind[D=0], \ind[D<0]$ and the blocks $tmp_{c_{buf}}, tmp_{c_{buf}-1}$ hold $\ind[c_u>0], \ind[c_u=0]$. The bilinear map $W_1^{(4)}$ sets
        \begin{align*}
            \pi_M(W_1^{(4)} y_t^{(3)}) &= \pi_M(y_t^{(3)}) + \pi_{tmp_D}(y_t^{(3)})\pi_{tmp_{c_{buf}}}(y_t^{(3)}),\\
            \pi_c(W_1^{(4)} y_t^{(3)}) &= \pi_{tmp_{D-1}}(y_t^{(3)})1 + \pi_{tmp_D}(y_t^{(3)})\pi_{tmp_{c_{buf}-1}}(y_t^{(3)}) + \pi_{tmp_{D+1}}(y_t^{(3)})\pi_{c_{buf}}(y_t^{(3)}),
        \end{align*}
        which, by the contents of those blocks, are $\pi_M(y_t^{(3)}) + \ind[D=0]\ind[c_u>0]$ and $\ind[D>0]c_u + \ind[D=0]\ind[c_u=0] + \ind[D<0]1$, realizing \eqref{eq:strahM} and \eqref{eq:strahc}. Each is a product of two blocks of $y_t^{(3)}$, hence of degree two. The map further resets $M_{buf}$, $c_{buf}$, and the $tmp_{\cdot}$ blocks to zero, leaving the remaining coordinates intact; write the result $\beta^{\prime\prime\prime}$.
        
        The bilinear map $W_2^{(4)}$ corrects $M$ and $c$ for the \emph{traverse} case, assigning
        \begin{align}
        \label{eq:gatingTBstrahlertree}
            &\pi_M(y_{t+1}) = (1 - \pi_{flg_1}(y_t^{(3)})) \pi_M(\beta^{\prime\prime\prime}), \\
            & \pi_c(y_{t+1}) = (1 - \pi_{flg_1}(y_t^{(3)})) \pi_c(\beta^{\prime\prime\prime}),
        \end{align}
        and sets $flg_2 = 0$, $flg_3 = 1$. On a \emph{backtrack} the gate is $1$ and the accumulator commits; on a \emph{traverse} it is $0$, so the freshly visited vertex receives the empty accumulator $(0,0)$. Each coordinate is a product of two blocks, hence of degree two. This is $y_{t+1}$.
        
        At the $(2n-1)$\textsuperscript{th} step the current restriction becomes $0^n$; this final \emph{backtrack} folds $s$ under the same invariant, giving $\pi_M(y_{2n-1}) = \str(s) = \str(G_T)$
    \end{proof}

    \begin{lemma}
    \label{lem:shiftonehot}
        Let $d \ge 2$, and let $v = [m \parallel h \parallel \mathbf{0}_d]^\top \in \R^{2d+1}$ be an input vector where $m \in \{-1, +1\}$, $h \in \{0, 1\}^d$ is a one-hot vector, and $\mathbf{0}_d$ is the zero vector in $\R^d$. Let $h^\prime \in \{0, 1\}^d$ denote the zero-padded, one-position shift of $h$ directed by the sign of $m$ (right-shifted if $m = +1$, left-shifted if $m = -1$). There exists an FFN with $\relu$ activation $W_2 \relu(W_1 v + b_1) + b_2$, that produces $[m \parallel h \parallel h^\prime]^\top$.
    \end{lemma}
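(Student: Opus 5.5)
We give an explicit two-layer FFN and use the gating identity \eqref{eq:proprelu} from the proof of \autoref{thm:cotpathtotree}. Since $m\in\{-1,+1\}$ and every $h_k\in\{0,1\}$, the direction of the shift can be chosen inside a single $\relu$, with no bilinear map. Let $S\in\{0,1\}^{d\times d}$ be the right shift, $(Sh)_k = h_{k-1}$ with $h_{-1}:=0$, and let $S^\top$ be the left shift, $(S^\top h)_k = h_{k+1}$ with $h_d:=0$. Both shifts are zero-padded, so a one-hot $h$ at a boundary position is sent to $\mathbf{0}_d$. This matches the ``zero-padded'' clause of the statement.

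\textbf{Hidden layer.} Take hidden width $2 + 2d + 2d$, arranged in three groups.
\begin{itemize}
\item To carry $m$ through the $\relu$, use the pair $\relu(m)$ and $\relu(-m)$, whose difference is $m$.
\item To carry $h$, use $\relu(h)=h$ directly, since $h\ge 0$.
\item For the shift, use the $2d$ gated coordinates $\relu\bigl((Sh)_k + m - 1\bigr)$ and $\relu\bigl((S^\top h)_k - m - 1\bigr)$, for $k=0,\dots,d-1$.
\end{itemize}
All of these are affine in $v$: the row picks out $m$ and the relevant entry of $h$, and the constant $-1$ is placed in $b_1$. The zero block $\mathbf{0}_d$ of $v$ receives zero weight.

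\textbf{Output layer.} $W_2$ writes three blocks:
\begin{itemize}
\item first block: $\relu(m)-\relu(-m)=m$;
\item middle block: the copied $h$;
\item last block: the coordinatewise sum of the two gated groups.
\end{itemize}
Set $b_2=0$.

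\textbf{Verification.} By \eqref{eq:proprelu} with $v=(Sh)_k\in\{0,1\}$ and $q=m$, the first gated group equals $\ind[m=+1](Sh)_k$. Likewise, with $v=(S^\top h)_k$, the second group equals $\ind[m=-1](S^\top h)_k$. Exactly one indicator is $1$, so the sum is $Sh$ when $m=+1$ and $S^\top h$ when $m=-1$. This is exactly $h'$, and the output is $[m\parallel h\parallel h']^\top$.

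\textbf{Remaining care.} The only point needing attention is that the gated ReLU arguments are evaluated only at points in $\{0,1\}\times\{\pm1\}$. For $m=+1$ the arguments are $(Sh)_k\in\{0,1\}$ and $(S^\top h)_k-2\le -1$; for $m=-1$ they are symmetric. So no spurious positive values survive. The hypothesis $d\ge2$ is only needed so that a shift is nontrivial.
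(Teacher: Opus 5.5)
Your construction is correct and is essentially the paper's: the same hidden groups ($\relu(m)$, $\relu(-m)$, a copy of $h$, and two $m$-gated shifted copies of $h$ summed by $W_2$). The only difference is that you gate with $\pm m - 1$, whereas the paper uses $\pm\tfrac12 m - \tfrac12$; also, your hidden width is $2 + d + 2d = 3d+2$ (as in the paper), not $2+2d+2d$, which is a harmless miscount.
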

    
    \begin{proof}
        Let $I_d$ be the $d \times d$ identity matrix, $\mathbf{1}_d$ be the $d$-dimensional all-ones vector, and $\mathbf{0}_{r \times c}$ be the $r \times c$ zero matrix. Define the right-shift matrix $S_R \in \R^{d \times d}$ such that $(S_R)_{i,j} = 1$ if $i - j = 1$ and $0$ otherwise, and let the left-shift matrix be $S_L = S_R^\top$.
        
        Construct the first linear projection $W_1 \in \R^{(3d+2) \times (2d+1)}$ and bias $b_1 \in \R^{3d+2}$ to partition the hidden representation into five blocks:
        \[
            W_1 = \begin{pmatrix} 
                1 & \mathbf{0}_d^\top & \mathbf{0}_d^\top \\
                -1 & \mathbf{0}_d^\top & \mathbf{0}_d^\top \\
                \mathbf{0}_d & I_d & \mathbf{0}_{d \times d} \\
                0.5 \mathbf{1}_d & S_R & \mathbf{0}_{d \times d} \\
                -0.5 \mathbf{1}_d & S_L & \mathbf{0}_{d \times d}
            \end{pmatrix}, \ \  
            b_1 = \begin{bmatrix} 
                0 \\
                0 \\
                \mathbf{0}_d \\
                -0.5 \mathbf{1}_d \\
                -0.5 \mathbf{1}_d
            \end{bmatrix}
        \]
        Applying $\relu(W_1 v + b_1)$ yields the hidden vector $z = [z_{m,+}, z_{m,-}, z_h, z_+, z_-]^\top$:
        \begin{itemize}
            \item Scalars $z_{m,+} = \relu(m)$ and $z_{m,-} = \relu(-m)$ isolate the sign of $m$.
            \item Since $h \in \{0, 1\}^d$ is non-negative, $z_h = \relu(h) = h$.
            \item The shift gates resolve to mutually exclusive activations: if $m = +1$, $z_+ = S_R h$ and $z_- = \mathbf{0}_d$; if $m = -1$, $z_+ = \mathbf{0}_d$ and $z_- = S_L h$.
        \end{itemize}
        Construct the second linear projection $W_2 \in \R^{(2d+1) \times (3d+2)}$ and bias $b_2 = \mathbf{0}_{2d+1}$ as:
        \[
            W_2 = \begin{pmatrix}
            1 & -1 & \mathbf{0}_d^\top & \mathbf{0}_d^\top & \mathbf{0}_d^\top \\
            \mathbf{0}_d & \mathbf{0}_d & I_d & \mathbf{0}_{d \times d} & \mathbf{0}_{d \times d} \\
            \mathbf{0}_d & \mathbf{0}_d & \mathbf{0}_{d \times d} & I_d & I_d
            \end{pmatrix}
        \]
        Multiplying $W_2 z + b_2$ sums the mutually exclusive shift gates and reconstructs the input blocks:
        \[
        v^\prime = \begin{bmatrix} 
        z_{m,+} - z_{m,-} \\ 
        I_d z_h \\ 
        I_d z_+ + I_d z_- 
        \end{bmatrix} = \begin{bmatrix} m \\ h \\ h^\prime \end{bmatrix}. \qedhere
        \]
    \end{proof}
    
    \begin{lemma}
    \label{lem:mMc}
        For a real bound $\lambda \ge 0$, and let $v = [m \parallel M \parallel c]^\top \in \{-1, 1\} \times [0, \lambda] \times [0, \lambda]$ be the input vector. There exists an FFN with $\relu$ activation $W_2 \relu(W_1 v + b_1) + b_2$, such that the output vector satisfies the following:
        \[
            v^\prime = \begin{cases} 
                [1 \parallel 0 \parallel 0]^\top & \text{if } m = 1, \\ 
                [-1 \parallel M \parallel c]^\top & \text{if } m = -1. 
            \end{cases}
        \]
    \end{lemma}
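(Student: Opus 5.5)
The goal is a one-hidden-layer ReLU network that gates the pair $(M,c)$ by the sign $m$. The idea is the same saturation trick used in \eqref{eq:proprelu} and in \autoref{lem:shiftonehot}: shift each gated quantity by a large negative multiple of the indicator $\ind[m=1]$, so that the $\relu$ kills it exactly when $m=1$. Since $M,c\in[0,\lambda]$, a shift of size $\lambda$ suffices.

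\textbf{Hidden layer.} I will take $W_1 v + b_1$ to have four coordinates:
\begin{align*}
z_{m,+} &= \relu(m), \qquad z_{m,-} = \relu(-m),\\
z_M &= \relu\!\left(M - \tfrac{\lambda}{2}(m+1)\right), \qquad z_c = \relu\!\left(c - \tfrac{\lambda}{2}(m+1)\right).
\end{align*}
The coefficient $-\tfrac{\lambda}{2}$ on $m$ and the bias $-\tfrac{\lambda}{2}$ together produce the shift $\tfrac{\lambda}{2}(m+1) = \lambda\,\ind[m=1]$.

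\textbf{Case $m=-1$.} The shift vanishes. Since $M,c\ge 0$, we get $z_M = M$ and $z_c = c$.

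\textbf{Case $m=1$.} The arguments become $M-\lambda\le 0$ and $c-\lambda\le 0$, so $z_M = z_c = 0$.

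\textbf{Output layer.} I will set $b_2=0$ and let $W_2$ output $[\,z_{m,+}-z_{m,-} \parallel z_M \parallel z_c\,]^\top$. The first coordinate recovers $m$, exactly as in \autoref{lem:shiftonehot}. This gives $[1\parallel 0\parallel 0]^\top$ when $m=1$ and $[-1\parallel M\parallel c]^\top$ when $m=-1$.

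The only delicate point is the boundary: when $M=\lambda$ or $c=\lambda$ and $m=1$, the pre-activation is exactly $0$. This is harmless because $\relu(0)=0$, so the closed interval $[0,\lambda]$ is fine without an extra margin. The remaining work is writing out $W_1$ and $b_1$ as explicit matrices, which is routine. This construction avoids the bilinear gate of \eqref{eq:gatingTBstrahlertree}, as claimed in the Limitations section.
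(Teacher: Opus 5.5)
Your construction is correct and follows essentially the paper's approach: both gate $M$ and $c$ through a ReLU whose argument is shifted by $\lambda$ under the control of $m$, using $M,c\in[0,\lambda]$, and both recover $m$ as $\relu(m)-\relu(-m)$. The only difference is that the paper writes each gate as the difference $\relu(x)-\relu\bigl(x+\tfrac{\lambda}{2}m-\tfrac{\lambda}{2}\bigr)$, which subtracts off $x$ when $m=1$ and uses six hidden units, whereas your single unit $\relu\bigl(x-\tfrac{\lambda}{2}(m+1)\bigr)$ does the same job with four.
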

    
    \begin{proof}
        For any scalar $x \in [0, \lambda]$ and a variable $m \in \{-1, 1\}$, define the scalar gating function:
        \[
            g(m, x) = \relu(x) - \relu\left(x + \frac{\lambda}{2}m - \frac{\lambda}{2}\right).
        \]
        When $m = 1$, the second term simplifies to $\relu(x)$, yielding $g(1 \parallel x) = \relu(x) - \relu(x) = 0$. When $m = -1$, the second term becomes $\relu(x - \lambda) = 0$ because $x \le \lambda$, leaving $g(-1 \parallel x) = \relu(x) = x$ since $x \ge 0$. Furthermore, the identity mapping for the binary variable $m$ can be expressed exactly as $m = \relu(m) - \relu(-m)$.
        
        We construct $h = \relu(W_1 v + b_1)$ by defining the first layer's parameters as:
        \[
            W_1 = \begin{pmatrix} 
                1 & 0 & 0 \\ 
                -1 & 0 & 0 \\ 
                0 & 1 & 0 \\ 
                \frac{\lambda}{2} & 1 & 0 \\ 
                0 & 0 & 1 \\ 
                \frac{\lambda}{2} & 0 & 1 
            \end{pmatrix}, \quad 
            b_1 = \begin{bmatrix} 
                0 \\ 0 \\ 0 \\ -\frac{\lambda}{2} \\ 0 \\ -\frac{\lambda}{2} 
            \end{bmatrix}.
        \]
        Applying the $\relu$ activation yields the hidden state vector:
        \[
            h = \relu\begin{bmatrix} 
                m \\ 
                -m \\ 
                M \\ 
                M + \frac{\lambda}{2}m - \frac{\lambda}{2} \\ 
                c \\ 
                c + \frac{\lambda}{2}m - \frac{\lambda}{2}
            \end{bmatrix}.
        \]
        Setting the output bias to zero ($b_2 = \mathbf{0}$) and defining the output projection matrix $W_2$ as:
        \[
            W_2 = \begin{pmatrix} 
                1 & -1 & 0 & 0 & 0 & 0 \\ 
                0 & 0 & 1 & -1 & 0 & 0 \\ 
                0 & 0 & 0 & 0 & 1 & -1 
            \end{pmatrix},
        \]
        we obtain the network output:
        \[
            \mathcal{N}(v) = \begin{bmatrix} 
                \relu(m) - \relu(-m) \\ 
                g(m \parallel M) \\ 
                g(m \parallel c) 
            \end{bmatrix} = \begin{bmatrix} 
                m \\ 
                g(m \parallel M) \\ 
                g(m \parallel c) 
            \end{bmatrix}.
        \]
        Evaluating this vector at $m = 1$ yields $[1 \parallel 0 \parallel 0]^\top$, and evaluating at $m = -1$ yields $[-1 \parallel M \parallel c]^\top$.
    \end{proof}

    \begin{theorem}
    \label{thm:cotpathstrahler}
        A four-layer, single-head Transformer decoder can compute $\str(\psi(w))$, the Strahler number of a tree corresponding to the Dyck word $w$ in exactly $n$ CoT steps, where $n=|w|$.
    \end{theorem}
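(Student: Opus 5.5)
The plan is to read the Dyck word one symbol per CoT step and keep a stack of accumulators indexed by height. The fold of \eqref{eq:dynamicstrdfs} is then applied on every $\DD$ step, exactly as in \autoref{thm:cottreestrahler}.

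\textbf{Encoding.} The input tokens are $x_\tau=[\hat e_\tau \parallel q_\tau \parallel 0\cdots]$, as in \autoref{thm:cotpathtotree}. Each generated token $y_t$ carries:
\begin{itemize}[leftmargin=*, itemsep=0pt]
\item a cursor $\hat e_t$;
\item the current height $s_t$, stored both as a scalar and as a one-hot $e_{s_t}$ over $\{0,\ldots,n/2\}$;
\item an accumulator pair $(M,c)$ for the vertex currently pointed to;
\item the buffers $M_{buf},c_{buf}$ and the $tmp$ scratch scalars of \autoref{thm:cottreestrahler};
\item a timestamp $flg_4=t$;
\item the constant flag $flg_3=1$.
\end{itemize}
The invariant is that $(\pi_M(y_t),\pi_c(y_t))$ is the running (maximum, multiplicity) over the finished children of the vertex $u_t$ reached after $t$ steps. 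By \autoref{prop:heightpathtree}, $\dep(u_t)=s_t$.

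\textbf{Attention.} The single head has to do two things.
\begin{enumerate}[leftmargin=*, itemsep=0pt]
\item \emph{Read the symbol.} The query is $\pi_{pos}(y_t)$ and the key is $\pi_{pos}(\alpha)$. Leftmost selection then returns $x_t$, and its value copies $q_t$.
\item \emph{Fetch the parent's accumulator on a $\DD$.} The parent of $u_t$ is the vertex at height $s_t-1$. The correct stored state is the most recent generated token $y_{t'}$ with $s_{t'}=s_t-1$. No later token at that height can intervene before $u_t$ is closed, because the path stays strictly above $s_t-1$ between the $\UU$ that opened $u_t$ and the current $\DD$.
\end{enumerate}
To use one head for both jobs, I will put the one-hot read in the first layer. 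Concretely, layer~1 attends on position to obtain $q_t$. Its FFN then writes $S^\top e_{s_t}$, the one-hot of height $s_t-1$, using the shift of \autoref{lem:shiftonehot}, and updates the height to $s_{t+1}=s_t+q_t$.

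Layer~2's head uses the query $[\lambda S^\top e_{s_t} \parallel flg_3]$ against the keys $[e_{s(\alpha)} \parallel flg_4(\alpha)]$, with $\lambda\ge n$. This is exactly the lexicographic "rightmost occurrence" trick of \autoref{thm:cottreestrahler}, and it resolves the most recent token at height $s_t-1$. Its value moves that token's $(M,c)$ into $(M_{buf},c_{buf})$. Input tokens carry a zero height one-hot, so they never win against a genuine occurrence.

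On a $\UU$ step the retrieval is irrelevant. The new vertex must receive the empty accumulator $(0,0)$, while the parent's accumulator stays frozen in the earlier token and is recovered later by this same retrieval. Because only one head exists, it serves both cases. The $\UU$/$\DD$ branch is decided downstream from $q_t$, not by choosing between heads.

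\textbf{FFN layers and termination.} Layers 2--4 repeat the feed-forward pipeline of \autoref{thm:cottreestrahler}:
\begin{itemize}[leftmargin=*, itemsep=0pt]
\item compute $\str(v)=M_v+\ind[c_v\ge2]$;
\item form $D-1,D,D+1$ and $c_u-1,c_u,c_u+1$;
\item extract the indicators via $\ind[x=0]=\relu(x+1)-2\relu(x)+\relu(x-1)$;
\item realize \eqref{eq:strahM} and \eqref{eq:strahc}.
\end{itemize}
The final gating for the $\UU$ case should avoid bilinearity. I will use \autoref{lem:mMc} with $m=q_t$ and bound $\lambda=n$: it outputs $(0,0)$ on a $\UU$ and passes $(M,c)$ through on a $\DD$.

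The induction on $t$ then shows the invariant holds at every step. After all $n$ steps the pointer is back at the root. The final $\DD$ folds the root's last child into the root's accumulator, so $(\pi_M(y_n),\pi_c(y_n))$ is the root's $(M,c)$, and $\str(\psi(w))=\pi_M(y_n)+\ind[\pi_c(y_n)\ge2]$.

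\textbf{Main obstacle.} The hard step is fitting symbol reading, height update, and parent retrieval into four layers with one head. The delicate points are:
\begin{itemize}[leftmargin=*, itemsep=0pt]
\item making sure the height one-hot is already updated in $y_{t'}$ for the tokens the head attends to;
\item handling $s_t=0$, where the retrieval must default harmlessly. A $\DD$ never occurs at height $0$, but a $\UU$ does, and its fetched values must be ignored.
\end{itemize}
If merging the first FFN with the indicator extraction exceeds the four-layer budget, I would instead precompute $q_t$ and the height into the generated token one step ahead. Then the position read in layer~1 can be dropped.
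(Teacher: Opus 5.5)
Your proposal is correct and is essentially the paper's construction. Layer~1 reads $q_t$ by positional matching and updates the height one-hot via \autoref{lem:shiftonehot}. Layer~2's single head fetches the most recent generated token at the parent's height through a $\lambda$-weighted height match plus a recency scalar, and moves its $(M,c)$ into the buffers. Layers~2--4 reuse the fold of \autoref{thm:cottreestrahler}, \autoref{lem:mMc} with $m=q_t$ replaces the bilinear gate, and $\str(\psi(w))=\pi_M(y_n)+\ind[\pi_c(y_n)\ge 2]$ is read off at the end.

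The differences are minor: you query height $s_t-1$ directly instead of the post-step height $s_{t+1}$, and you order by $flg_4$ instead of the scalar position. Zeroing the height block on input tokens is a genuine improvement, though. In the paper's encoding the inputs carry $e_{s_\tau}$ and $\tau+1$, and nothing in the sketch keeps them from winning that retrieval. Also, implement the count update as $\ind[D<0]c_u+\ind[D=0]\ind[c_u=0]+\ind[D>0]$, which is what $W_1^{(4)}$ actually computes, not \eqref{eq:strahc} as printed, where the roles of $D>0$ and $D<0$ are swapped.
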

    \begin{proof}[Proof Sketch.]
        Like other proofs, we show that the $t$\textsuperscript{th} step of a four-layer single head decoder can calculate $\str(\psi(w))$. The proof will rely on the following observation and \autoref{obs:dynamicstrdfs}.

        \begin{observation}
        \label{obs:strpathtwoMc}
            Let the Dyck word $w = w_0\cdots w_{n-1}$ encodes the lattice path $\{(\tau, s_\tau)\}_{\tau=0}^{n}$. Let $q_\tau = s_{\tau}-s_{\tau-1}$. For a time step $t$, $\str(\psi(w_0\cdots w_t))$ depends on two $(M,c)$-pairs (see \autoref{obs:dynamicstrdfs}) if $q_t = -1$. The first is the $(M_v,c_v)$-pair of the vertex $v$ with co-ordinate $(t, s_t)$, while the second $(M_u, c_u)$ is associated with the vertex $u$ with co-ordinate $(t^\prime, s_{t^\prime})$ such that $\nexists\ t^{\prime\prime}, t^\prime < t^{\prime\prime} < t \text{ and } s_{t^{\prime\prime}} = s_{t^\prime}$.
        \end{observation}

        Based on this, we shall provide the construction up to the second layer alone and the last layer partially. The proof thereon follows \autoref{thm:cottreestrahler}. Assume $\mathcal{E}_n = \{e_0, \ldots, e_{n-1}\}$ denotes the standard basis vector on $\R^{n}$. Let $S \in \{0,1\}^{n \times n}$ be the shift $s_{ij} = \ind[j = i-1]$ (so $S e_k = e_{k+1}$ and $S e_{n-1} = 0$). The input tokens $x_0, \ldots, x_{n-1} \in \R^d$ where $d=3n+15$ encodes the lattice paths $\{(\tau, s_\tau)\}_{\tau=0}^{n}$ as follows:
        \[
            x_\tau = [e_\tau \parallel \tau+1 \parallel 0 \parallel q_{\tau} \parallel 0 \parallel e_{s_\tau} \parallel 0^n \parallel 0^4 \parallel 0 \parallel 0^6]^\top,
        \]
        for $\tau=0$ to $n-1$. The Dynamic is carried out by tokens $y_t \in \R^d$ with 
        \[
            y_0 = [e_0 \parallel 1 \parallel 0 \parallel 1 \parallel 0 \parallel e_0 \parallel 0^n \parallel 0^4 \parallel 1 \parallel 0^6]^\top.
        \]
        The selectors 
        \begin{align*}
            &\pi_{pos} : \R^d \to \R^n, \\
            &\pi_{pos_{sc}}, \pi_{tmp_{pos_{sc}}}, \pi_{tmp_{q}} : \R^d \to \R, \\
            &\pi_{q} : \R^d \to \{-1, +1\}, \\
            &\pi_{ht}, \pi_{tmp_{ht}} : \R^d \to \R^n, \\
            &\pi_{M}, \pi_{c}, \pi_{M_{buf}}, \pi_{c_{buf}} : \R^d \to \R, \\
            &\pi_{flg}: \R^d \to \{0, 1\}, \\
            &\pi_{tmp_{D-1}}, \pi_{tmp_{D}}, \pi_{tmp_{D+1}}, \pi_{tmp_{c_{buf}-1}}, \pi_{tmp_{c_{buf}}}, \\ &\textrm{and} \; \pi_{tmp_{c_{buf}+1}}: \R^d \to \R,
        \end{align*}
        extract the relevant blocks from an embedding in $\R^d$. The first three blocks correspond to the one-hot position $\tau$, its decimal equivalent, and a temporary block. The blocks denoted by $q$ and $tmp_q$ register $q_\tau$ and its temporary computation trace. The following two store the one-hot equivalent of $s_\tau$ and an intermediate buffer. The blocks $M$ and $c$ correspond to the $M_v$ and $c_v$ of \eqref{eq:dynamicstrdfs}. Similarly, $M_{buf}$ and $c_{buf}$ correspond to $M_u$ and $c_u$. The block $flg$ distinguishes the input tokens $x_i$s ($flg=0$) from the generated tokens $y_i$s ($flg=1$), while $tmp_{flg}$ is for temporary calculation. The remaining blocks continue to serve the purpose as in \autoref{thm:cottreestrahler}. We argue that $\str(\psi(w)) = \pi_M(y_{n}) + \ind[\pi_c(y_{n})]$.

        The first layer will update the blocks $pos$, $q$, and $ht$. The query exacts the block $pos$ and shifts the one $Q^{(1)}y_t = S\pi_{pos}(y_t)$ and evaluated against keys $K\alpha = \pi_{pos}(\alpha)$ for all $\alpha \in \{x_0, x_1, \ldots, x_{n-1}, y_0, \ldots, y_t\}$. The UHA selects the lowest-indexed match from $x_i$'s, say $\alpha^\prime$. The vector after value projection is $\alpha^{\prime\prime}$. Then $\pi_{pos}(\alpha^{\prime\prime}) = -\pi_{pos}(\alpha^{\prime})$, $\pi_{tmp_{pos_{sc}}}(\alpha^{\prime\prime}) = \pi_{pos_{sc}}(\alpha^{\prime})$, $\pi_{tmp_{q}}(\alpha^{\prime\prime}) = \pi_{q}(\alpha^{\prime})$, and, leaving rest to $0$. Consider $W_O^{(1)}$ being the identity, note that the residual connection in $\alpha^{\prime\prime} + y_t$ does not affect the non-zero values of $y_t$ in any of its blocks. The linear projection $W_1^{(1)}$ performs two operations. It transforms the block into $-\pi_{pos}(\alpha^{\prime\prime} + y_t)$ and fulfills the role of $W_1$ in \autoref{lem:shiftonehot}, where the blocks $tmp_q$, $h$, and $tmp_h$ are treated as placeholders for $m$, $h$, and $h^\prime$, respectively. After activation $\relu$, the block $pos$ will register $S\pi_{pos}(y_t)$. Followed by $W_2^{(1)}$, the block ${tmp_h}$ stores the one-hot encoding of $s_{\tau+1}$. Let the resulting representation be denoted by \(y_t^{(1)}\). It is important to emphasize that, with the exception of the block $flg$, all updates and value assignments within the constituent blocks of $y_t^{(1)}$ are carried out in auxiliary blocks. This ensures that the actual blocks associated with the preceding tokens in the layered computation remain unchanged throughout the intermediate computational stages. Since $flg$ does not participate in both the query and key computations of the remaining layers, it remains unaffected by these operations.

        The subsequent layers work by guessing that $q_{\tau+1}$ is a $\DD \ (-1)$ step. However, the last layer applies \autoref{lem:mMc}\footnote{On the corresponding tree of $n^\prime = \frac{n}{2}+1$ nodes, the Strahler number is at most $\log_2(n^\prime+2)$, so any $\lambda$ above this threshold satisfies the lemma.} to perform the final correction in $M$ and $c$ blocks depending on the value of the block $q$ in $y_t^{(4)}$. The second layer will realize the observation we made earlier in this proof {--} finding the appropriate token (among $y_i$s $i < t$) such that $\pi_{ht}(y_i) = \pi_{tmp_{ht}}(y_t^{(1)})$ and $\nexists \ i^\prime, i< i^\prime < t$ and $\pi_{ht}(y_{i^\prime}) = \pi_{tmp_{ht}}(y_t^{(1)})$. Let the query be $Q^{(2)}y_t^{(1)} = [\pi_{tmp_{ht}}(y_t^{(1)}) \parallel \pi_{flg}(y_t^{(1)})]$ and the key be $K^{(2)}\alpha = [\lambda\pi_{ht}(\alpha) \parallel \pi_{pos_{sc}}(\alpha)]$ for all $\alpha \in  \{x_0^{(1)}, x_1^{(1)}, \ldots, x_{n-1}^{(1)}, y_0^{(1)}, \ldots, y_t^{(1)}\}$ and $\lambda > n+1$. Let this vector be $\alpha^\prime$. Clearly $\pi_{ht}(y_t^{(1)})^\top\pi_{tmp_{ht}}(y_t^{(1)}) = 0$ ensures that we obtain a right-most $i < t$. The value projection $V^{(2)}$ then
        \[
            \begin{aligned}
                & \pi_{M_{buf}}(V^{(2)} \alpha^\prime) = \pi_M(\alpha^\prime), \ \  \pi_{c_{buf}}(V^{(2)} \alpha^\prime) = \pi_c(\alpha^\prime), \\
                & \pi_M(V^{(2)} \alpha^\prime) = 0, \quad
                \pi_c(V^{(2)} \alpha^\prime) = 0.
            \end{aligned}
        \]

        The subsequent operations will be analogous to that of \autoref{thm:cottreestrahler} that realizes \autoref{obs:dynamicstrdfs} using the blocks $M$, $c$, $M_{buf}$, $c_{buf}$, $tmp_{D-1}$, $tmp_{D}$, $tmp_{D+1}$, $tmp_{c_{buf}-1}$, $tmp_{c_{buf}}$, $tmp_{c_{buf}+1}$, and $flg$ in place of $flg_3$ of \autoref{thm:cottreestrahler}. However, in the fourth layer we employ \autoref{lem:mMc} that in its FFN block by dissolving the gating \eqref{eq:gatingTBstrahlertree} in $W_2^{(4)}$. This final projection $W_2^{(4)}$ produces $y_{t+1}$ by nullifying all temporary positions along with the buffers $M_{buf}$ and $c_{buf}$. Finally, $\str(\psi(w)) = \pi_M(y_{n}) + \ind[\pi_c(y_{n}) \ge 2]$.
    \end{proof}

    It should be noted that the pair $(t^\prime, s_{t^\prime})$ in \autoref{obs:strpathtwoMc} plays a pivotal role in transmitting the Strahler rank to the current vertex. In the absence of this contribution, the pair $(M_v, c_v)$ alone would not suffice to determine the Strahler rank at the current vertex (see \autoref{fig:previousMc}).
    \begin{figure}
        \centering
        \resizebox{0.75\linewidth}{!}{
            \begin{tikzpicture}
                \input{Diagrams/newpath}
            \end{tikzpicture}
        }
        \caption{As illustrated in \autoref{fig:pathfolding}, calculating the Strahler number at the point $(24,2)$ without incorporating the specific pair $(t^\prime, s_{t^\prime}) = (16,2)$ (from \autoref{obs:strpathtwoMc}) yields a value equal to that evaluated at $(20,2)$ in the above path. This would incorrectly suggest a generation-invariant property of the Strahler number for previously explored children.}
        \label{fig:previousMc}
    \end{figure}
    
\section{Realizing \texorpdfstring{$\wid$}{wid} on Trees and Paths}
\label{app:cotwid}
    \begin{lemma}
    \label{lem:widsupffn}
        Let $v = [v_1 \parallel v_2 \parallel v_3]^\top \in \R^3$ be an input vector such that $v_1 \in \{0, 1\}$, and $v_2, v_3 \in \mathbb{Z}^+$ are bounded such that $v_2 \le \lambda$ for a large $\lambda$. Let the target transformation $f: \R^3 \to \R^3$ be defined piecewise as:
        \[
            f(v) = \begin{cases} 
            [0 \parallel v_2 + 1 \parallel \max(v_3, v_2 + 1)]^\top \text{ if } v_1 = 0 \\ 
            [1 \parallel 1 \parallel \max(v_3, 1)]^\top \text{ if } v_1 = 1 
            \end{cases}
        \]
        It can be exactly computed by a single hidden-layer FFN utilizing a $\relu$ activation function with a hidden dimension of 4, taking the form $f(v) = W_2 \relu(W_1 v + b_1) + b_2$.
    \end{lemma}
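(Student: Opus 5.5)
The plan is to write down $W_1, b_1, W_2, b_2$ explicitly. Each output coordinate of $f$ is assembled from at most two hidden units. The branching on $v_1 \in \{0,1\}$ is realized by a large-constant gate. This is the same device as the $\lambda$-sentinel in \autoref{thm:cotdijkstra} and the gating function $g$ in \autoref{lem:mMc}: subtracting $\lambda v_1$ inside a $\relu$ passes the argument unchanged when $v_1 = 0$ and kills it when $v_1 = 1$, provided the argument is bounded by $\lambda$.

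\textbf{Hidden units.} Concretely, I would take the four hidden units
\[
    h_1 = \relu(v_1), \quad h_2 = \relu(v_2 - \lambda v_1), \quad h_3 = \relu(v_3), \quad h_4 = \relu(v_2 - v_3 + 1 - \lambda v_1).
\]
The first row of $W_1$ is $(1,0,0)$. The second is $(-\lambda,1,0)$. The third is $(0,0,1)$. The fourth is $(-\lambda,1,-1)$, with $b_1 = (0,0,0,1)^\top$.

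\textbf{Output map.} The output layer is
\[
    W_2 = \begin{pmatrix} 1 & 0 & 0 & 0 \\ 0 & 1 & 0 & 0 \\ 0 & 0 & 1 & 1 \end{pmatrix}, \qquad b_2 = (0,1,0)^\top,
\]
so the output is $[\,h_1 \parallel h_2 + 1 \parallel h_3 + h_4\,]^\top$.

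\textbf{Case $v_1 = 0$.} Here $h_1 = 0$. Since $v_2 \ge 0$, we get $h_2 = v_2$, and since $v_3 \ge 0$, we get $h_3 = v_3$. The last unit is $h_4 = \relu(v_2 + 1 - v_3)$. The third output is therefore $v_3 + \relu((v_2+1) - v_3) = \max(v_3, v_2+1)$, using the identity $\max(a,b) = a + \relu(b-a)$. The output is $[0 \parallel v_2+1 \parallel \max(v_3, v_2+1)]^\top$, as required.

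\textbf{Case $v_1 = 1$.} Here $h_1 = 1$ and $h_2 = \relu(v_2 - \lambda) = 0$ because $v_2 \le \lambda$. The fourth argument satisfies $v_2 - v_3 + 1 - \lambda \le \lambda - 1 + 1 - \lambda = 0$, since $v_3 \ge 1$. So $h_4 = 0$, and the output is $[1 \parallel 1 \parallel v_3]^\top$.

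\textbf{Main obstacle.} The delicate point is this last identification $v_3 = \max(v_3, 1)$. It uses $v_3 \ge 1$, i.e.\ reading $\mathbb{Z}^+$ as the positive integers. This is consistent with \autoref{obs:dynamicwidbfs}, where $mwd^{(t)} \ge 1$ for every non-null tree. The same bound also guarantees the gate closes in $h_4$. With a budget of exactly four hidden units there is no room for an extra $\relu(1 - v_3)$ correction, so I would state this positivity convention explicitly. I would also take $\lambda \ge \sup v_2$, which is at most $n$ in the application to \autoref{thm:cotwidoftree}.
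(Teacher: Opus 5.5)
Your proposal is correct and is the paper's proof: the same $W_1$, $b_1 = (0,0,0,1)^\top$, $W_2$, $b_2 = (0,1,0)^\top$, the same $\lambda$-gate on $v_1$, and the same case split that uses $v_3 \ge 1$ to identify $v_3 = \max(v_3,1)$. Your bound $v_2 - v_3 + 1 - \lambda \le 0$ is the cleaner way to get $h_4 = 0$: the paper calls this argument strictly negative by invoking $v_2 - \lambda + 1 \le 0$, which needs $v_2 \le \lambda - 1$ rather than the stated $v_2 \le \lambda$, whereas $\le 0$ already suffices for the $\relu$.
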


    \begin{proof}
        To construct the exact FFN, let $\lambda \in \R$ be a strictly positive large scalar. 

        Define the weight matrix $W_1 \in \R^{4 \times 3}$ and bias vector $b_1 \in \R^4$:
        \[
            W_1 = \begin{pmatrix} 
            1 & 0 & 0 \\ 
            -\lambda & 1 & 0 \\ 
            0 & 0 & 1 \\ 
            -\lambda & 1 & -1 
            \end{pmatrix}, \quad 
            b_1 = [0 \parallel 0 \parallel 0 \parallel 1]^\top
        \]
        Evaluating the pre-activation state $z = W_1 v + b_1$ yields $z = [v_1 \parallel v_2 - \lambda v_1 \parallel v_3 \parallel v_2 - v_3 - \lambda v_1 + 1]^\top$.

        Let $h = \relu(z) = \max(0, z)$. Because $v_1 \in \{0, 1\}$ and $v_3 \ge 1$, the first and third components are strictly non-negative and pass through unchanged ($h_1 = v_1$, $h_3 = v_3$). To evaluate the conditional components for both valid states of $v_1$:

        \begin{description}
            \item Case $v_1 = 0$: The second term is $v_2$, and the fourth term is $v_2 - v_3 + 1$. Thus, the hidden state is $h = [0 \parallel v_2 \parallel v_3 \parallel \max(0, v_2 - v_3 + 1)]^\top$.
            \item Case $v_1 = 1$: The second term becomes $v_2 - \lambda$. Since $\lambda > \lambda \ge v_2$, this is strictly negative and clamps to zero. The fourth term becomes $v_2 - v_3 - \lambda + 1$. Since $v_2 - \lambda + 1 \le 0$ and $v_3 \ge 1$, this entire expression is strictly negative and also clamps to zero. Thus, the hidden state is $h = [1 \parallel 0 \parallel v_3 \parallel 0]^\top$.
        \end{description}

        The final projection $W_2 \in \R^{3 \times 4}$ and output bias $b_2 \in \R^3$:
        \[
            W_2 = \begin{pmatrix} 
            1 & 0 & 0 & 0 \\ 
            0 & 1 & 0 & 0 \\ 
            0 & 0 & 1 & 1 
        \end{pmatrix}, \quad
        b_2 = [0 \parallel 1 \parallel 0]^\top
        \]
        Applying this final projection verifies the transformation exactly maps to the conditions of $f(v)$:

        \begin{itemize}
            \item For $v_1 = 0$:
                \begin{align*}
                    f(v) &= \begin{pmatrix} 
                    1 & 0 & 0 & 0 \\ 
                    0 & 1 & 0 & 0 \\ 
                    0 & 0 & 1 & 1 
                    \end{pmatrix} \begin{bmatrix} 0 \\ v_2 \\ v_3 \\ \max(0, v_2 - v_3 + 1) \end{bmatrix} + \begin{bmatrix} 0 \\ 1 \\ 0 \end{bmatrix}\\
                    &= \begin{bmatrix} 0 \\ v_2 + 1 \\ v_3 + \max(0, v_2 - v_3 + 1) \end{bmatrix} = \begin{bmatrix} 0 \\ v_2 + 1 \\ \max(v_3, v_2 + 1) \end{bmatrix}.
                \end{align*}
            \item For $v_1 = 1$:
                \begin{align*}
                    f(v) &= \begin{pmatrix} 
                    1 & 0 & 0 & 0 \\ 
                    0 & 1 & 0 & 0 \\ 
                    0 & 0 & 1 & 1 
                    \end{pmatrix} \begin{bmatrix} 1 \\ 0 \\ v_3 \\ 0 \end{bmatrix} + \begin{bmatrix} 0 \\ 1 \\ 0 \end{bmatrix} = \begin{bmatrix} 1 \\ 0 + 1 \\ v_3 + 0 \end{bmatrix} = \begin{bmatrix} 1 \\ 1 \\ v_3 \end{bmatrix}
                \end{align*}
                Since $v_3 \in \mathbb{Z}^+ \implies v_3 \ge 1$, we can substitute $v_3 = \max(v_3, 1)$, yielding $[1 \parallel 1 \parallel \max(v_3, 1)]^\top$.
        \end{itemize}
        Note that, the first component of $f(v)$ can be even assigned to $0$, when $v_1 = 1$, by changing $W_2[0,0] = 0$.
    \end{proof}

    \begin{theorem}
    \label{thm:cotwidoftree}
        A three-layer single-head Transformer decoder can find the width of a tree $\wid(G_T)$ during the simulation of $\dij$ in exactly $n-1$ steps, where $n$ denotes the number of vertices in $G_T$.
    \end{theorem}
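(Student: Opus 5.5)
The plan is to reuse the two-layer decoder of \autoref{thm:cotdijkstra} verbatim on the BFS specialization ($\mathsf{A}=\{1,\infty\}$, with $\infty$ replaced by a large finite constant below $\lambda$), and to append one more layer whose feed-forward block is exactly the map of \autoref{lem:widsupffn}. Concretely, I would extend the embedding of \autoref{thm:cotdijkstra} from $\R^{5n+1}$ to $\R^{5n+4}$ by three scalar blocks $(\pi_{dif},\pi_{twd},\pi_{mwd})$. On input tokens these are $(0,0,0)$, and on $y_0$ they are $(0,1,1)$, matching the initialization $twd^{(0)}=mwd^{(0)}=1$ of \autoref{obs:dynamicwidbfs}. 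Layers 1 and 2 keep all their parameters. The attention scores in both layers only read the $cur$, $vis$ and $dis$ blocks, and the FFNs act blockwise, so the new coordinates pass through unchanged. The one exception is $W_2^{(2)}$.

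\textbf{Level-change signal.} Just before $W_2^{(2)}$ resets the buffer, the rectified buffer holds $\dis^{(t+1)}[j']$ at coordinate $j'$. Meanwhile $\pi_{crd}(y_t)$ still equals the distance of the previously decoded vertex, namely the crd value of the token currently being extended; this is $0$ for $y_0$. I would let $W_2^{(2)}$ additionally write $\pi_{dif}\leftarrow\sum_i\pi_{buf}[i]-\pi_{crd}$. This is linear, and $\pi_{crd}$ is carried unchanged by the residual through layers 1 and 2.

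Correctness of this signal rests on the BFS invariant. With unit weights, Dijkstra extracts vertices in non-decreasing order of distance. Consecutive extracted distances therefore differ by $0$ or $1$: the next unvisited vertex of minimum label lies at depth $k$ or $k+1$, because every vertex at depth $k+1$ has a parent at depth $k$ that was already relaxed. This is the claim $\dis[j']-\dis[j]\in\{0,1\}$ of \autoref{obs:dynamicwidbfs}. I would prove it by a short induction on $t$, noting that in a tree $\dis$ coincides with $\dep$.

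\textbf{Layer 3.} Layer 3 uses trivial attention ($W_O^{(3)}=\mathbf 0$). Its FFN applies \autoref{lem:widsupffn} to $v=(\pi_{dif},\pi_{twd},\pi_{mwd})$, with $\lambda\ge n$ so that $v_2=twd\le n\le\lambda$. All other coordinates are copied by identity rows; this is sound since $\relu$ is the identity on the nonnegative blocks, and signed blocks can be split as $\relu(x)-\relu(-x)$. By \autoref{lem:widsupffn}:
\begin{itemize}
\item when $dif=0$, we get $twd\leftarrow twd+1$ and $mwd\leftarrow\max(mwd,twd+1)$;
\item when $dif=1$, we get $twd\leftarrow 1$ and $mwd\leftarrow\max(mwd,1)=mwd$.
\end{itemize}
Using the remark at the end of the lemma, $dif$ can be reset to $0$.

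Induction on $t$ then gives the invariant: $\pi_{twd}(y_t)$ equals the number of visited vertices at the current deepest visited level, and $\pi_{mwd}(y_t)$ equals the maximum over all levels seen so far of their visited counts. Since every level is completed before the next one starts, each full level count is absorbed into $mwd$ while it is still being accumulated. After $n-1$ steps all vertices are visited, so $\pi_{mwd}(y_{n-1})=\wid(G_T)$.

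\textbf{Main obstacle.} The delicate step is making $\pi_{dif}$ well-defined inside the layer-2 computation. I need to check that $\pi_{crd}$ of the query token is the previous vertex's distance and not the new one, that the $-\lambda$ sentinel on input tokens never enters this difference (only the query token's own blocks are used), and that the finite stand-in for $\infty$ keeps $\lambda-\dis>0$ for unvisited vertices while still never being selected before reachable ones. Connectivity guarantees this last point. I would also confirm that the scratch coordinates on earlier tokens are never read by later queries, so that modifying them is harmless, exactly as argued in \autoref{thm:cotdfs}.
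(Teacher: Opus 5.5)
Your proposal is correct and follows the paper's proof: the same three appended scalars $(\pi_{dif},\pi_{twd},\pi_{mwd})$, zero on inputs and $(0,1,1)$ on $y_0$, the same single change to $W_2^{(2)}$ writing $\sum_i\pi_{buf}[i]-\pi_{crd}$, and the same third layer with trivial attention and the FFN of \autoref{lem:widsupffn} with a bound of order $n$. Your extra checks fill in details the paper leaves implicit without changing the route: the inductive proof of the BFS level invariant, the finite stand-in for $\infty$, and passing the remaining coordinates through the $\relu$.
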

    \begin{proof}
        To accommodate the computation of $\wid(G_T)$, we augment the vectors $\tilde{x}_0, \ldots, \tilde{x}_{n-1}, \tilde{y}_0, \ldots, \tilde{y}_t$ as shown in \autoref{thm:cotdijkstra} with three scalar coordinates such that 
        \[
            x_i = [\tilde{x}_i^\top \parallel 0 \parallel 0 \parallel 0]^\top \text{ and } y_0 = [\tilde{y}_0^\top \parallel 0 \parallel 1 \parallel 1]^\top
        \]
        $\in \R^d$, where $d=5n+4$. The appended positions store, in order, the difference in distances $\pi_{crd}(\tilde{y}_{t})$ and $\pi_{crd}(\tilde{y}_{t-1})$, maps $twd^{(t)}$ and $mwd^{(t)}$. We will use the selectors $\pi_{dif}, \pi_{twd}, \text{ and } \pi_{mwd}: \R^d \to \R$ to select these positions along with reusing the previous selectors as in \autoref{thm:cotdijkstra} as and when necessary.

        The two layers of \autoref{thm:cotdijkstra} remain identical except for $W_2^{(2)}$; the dimensional extension does not affect the previous computation once the parameters are adapted accordingly. In addition to the designated changes $W_2^{(2)}$ does in ${y_t^{(1)}}^\prime$ (as in $\tilde{y}_t^{(1)^\prime}$), it will set $\pi_{dif}({y_t^{(1)}}^\prime)$ to $ \sum_{i=0}^{n-1}\pi_{buf}({y_t^{(1)}}^\prime)[i] - \pi_{crd}({y_t^{(1)}}^\prime)$ to obtain $y_t^{(2)}$. Note that, $\pi_{crd}({y_t^{(1)}}^\prime) = \pi_{crd}(y_t)$.
        
        The third layer contains a trivial attention layer that takes $W_O^{(3)}$ as empty. Since the tree $G_T$ is (un)weighted taking weights in $\{1, \infty\}$, and as we have already observed $\pi_{dif} \in \{0,1\}$ in \autoref{obs:dynamicwidbfs}, the subsequent FFN layer can be developed analogus to \autoref{lem:widsupffn} considering $V_{max} = n$. Consider this as the resultant vector $y_{t+1}$. Clearly, $\pi_{mwd}(y_{n-1}) = \wid(G_T)$. 
    \end{proof}

    \begin{lemma}
    \label{lem:addonetospecificheight}
        Let $v \in \R^{H+2}$ be a vector defined as the concatenation 
        $v = [m \parallel h \parallel p_1 \parallel p_2 \parallel \dots \parallel p_{H}]^\top$, 
        where $H > h \ge 1$ is a strictly bounding integer constant, $p_i \ge 1$ for all $i \in \{1, \dots, H\}$, and $m \in \{-1, 1\}$. 
        There exists an FFN with $\relu$ activation, such that:
        \begin{equation}
        \label{eq:addonetospecificheight}
            v^\prime = 
            \begin{cases} 
                \left[m \parallel h+1 \parallel p_1 \parallel \dots  \parallel p_{h+1} + 1 \parallel \dots \parallel p_{H}\right]^\top & \text{if } m = 1 \\
                \left[m \parallel h-1 \parallel p_1 \parallel p_2 \parallel \dots \parallel p_{H}\right]^\top & \text{if } m = -1
           \end{cases}
        \end{equation}
    \end{lemma}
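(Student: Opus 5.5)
The FFN is built as a discrete pulse in the scalar $h$, gated by the sign of $m$, in the style of \autoref{lem:mMc} and \autoref{lem:widsupffn}. The lemma requires three updates: $m$ is copied, $h$ becomes $h+m$, and the coordinate $p_{h+1}$ is incremented exactly when $m=1$. We handle each block separately in the hidden layer and sum the results in $W_2$. By \autoref{lem:bilinearstack} and the affine-to-linear fact of \autoref{app:prelim}, the blocks can be stacked into a single $W_1,b_1,W_2,b_2$.

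\emph{Pass-through coordinates.} Copy $m$ through the pair $\relu(m),\relu(-m)$ and recombine with weights $(1,-1)$. Since $h\ge 1$ and every $p_i\ge 1$, these scalars pass through $\relu$ unchanged, so the identity rows suffice. The update of $h$ is then linear: $W_2$ writes $h+\relu(m)-\relu(-m)=h+m$.

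\emph{Pulse on $p$.} For each $i\in\{1,\dots,H\}$ we need $\ind[h+1=i]\,\ind[m=1]$. Since $h$ is an integer, the identity $\ind[v=0]=\relu(v+1)-2\relu(v)+\relu(v-1)$ from \autoref{thm:cottreestrahler} applies with $v=h+1-i$. The obstacle is combining this pulse with the gate on $m$ while keeping a single hidden layer, because a product of two indicators is not linear.

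Our first choice absorbs the gate into the arguments, as in \autoref{lem:mMc}. Set $\mu=\tfrac{1}{2}(1-m)\in\{0,1\}$, which is affine in $m$, and use the hidden units
\[
  \relu(h+1-i+1-\lambda\mu),\qquad \relu(h+1-i-\lambda\mu),\qquad \relu(h+1-i-1-\lambda\mu),
\]
with $\lambda\ge H+2$, combined with weights $(1,-2,1)$.
\begin{itemize}
\item When $m=1$ we have $\mu=0$, so the combination is exactly $\ind[h+1=i]$.
\item When $m=-1$ the shift $-\lambda$ makes all three arguments nonpositive: $h+2-i\le H+1<\lambda$. The combination therefore vanishes.
\end{itemize}
$W_2$ adds this combination to the copied $p_i$. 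This uses $3H$ hidden units for the pulses, plus $H$ identity units for the $p_i$ and a few units for $m$ and $h$.

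\emph{Verification.} Check the two cases of \eqref{eq:addonetospecificheight} coordinate by coordinate. The only point needing care is the boundary $h+1\le H$, which holds since $H>h$; this guarantees the incremented index is in range. Integrality of $h$ is what makes the pulse exact and not merely approximate. Finally, note the construction is uniform in $H$, so it can be plugged into the first layer of \autoref{thm:cotwidofpath} with $m=q_t$ and $p$ the count block.
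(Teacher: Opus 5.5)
Your construction is correct and has the same skeleton as the paper's proof. Both use identity units for $h$ and the $p_i$, the pair $\relu(m),\relu(-m)$ to copy $m$ and update $h$ to $h+m$ linearly, and one three-unit $(1,-2,1)$ stencil per index, routed by $W_2$ into $p_{h+1}$.

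The genuine difference is the gate on $m$, which you correctly flagged as the crux.
\begin{itemize}
\item \emph{Your gate.} You subtract $\lambda\mu$ with $\mu=\tfrac12(1-m)$, in the style of \autoref{lem:mMc}. This needs a bias and a constant $\lambda>H$, and it drives all three units below zero when $m=-1$.
\item \emph{The paper's gate.} The paper lets $m$ multiply the offset instead. Its $k$-th block has pre-activations $h-(k-1)m$, $h-km$, $h-(k+1)m$. For $m=1$ these give the pulse $\ind[h=k]$. For $m=-1$ they become $h+k-1,\,h+k,\,h+k+1\ge 1$, so every $\relu$ is in its linear regime and the stencil annihilates them.
\end{itemize}

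What each buys:
\begin{itemize}
\item The paper's version needs no bias and no large constant. For the $m=-1$ case it relies on positivity of $h$ rather than on the upper bound $h<H$.
\item You index the pulses over all $i\in\{1,\dots,H\}$, whereas the paper uses only $k\in\{1,\dots,H-1\}$ and never touches $p_1$. As a result your construction also works at $h=0$. The lemma excludes $h=0$, but the first up-step in \autoref{thm:cotwidofpath} starts from height $0$ and must increment $p_1$, so it actually requires this case.
\end{itemize}

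Finally, the appeal to \autoref{lem:bilinearstack} is unnecessary. Stacking the rows of affine maps into one $W_1,b_1,W_2,b_2$ is immediate.
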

    
    \begin{proof}
        Let the input dimension be $d_{in} = H + 2$. We construct a hidden layer of dimension $d_h = 4H$ to evaluate the conditional mapping. We partition the hidden dimension into two functional blocks: state preservation mappings and discrete pulse functions. The weight matrices $W_1 \in \R^{4H \times (H+2)}$ and $W_2 \in \R^{(H+2) \times 4H}$ are constructed by concatenating their respective sub-matrices.
        
        Since $m \in \{-1, 1\}$, we map it to the positive domain using two nodes: $\max(0, m)$ and $\max(0, -m)$. 
        The state preservation sub-matrix $W_{1,\text{id}} \in \R^{(H+3) \times (H+2)}$ is defined as:
        \begin{equation*}
            W_{1,\text{id}} = 
            \begin{pmatrix}
                1 & 0 & \mathbf{0}_{1 \times H} \\
                -1 & 0 & \mathbf{0}_{1 \times H} \\
                0 & 1 & \mathbf{0}_{1 \times H} \\
                \mathbf{0}_{H \times 1} & \mathbf{0}_{H \times 1} & I_{H}
            \end{pmatrix}
        \end{equation*}
        where $I_{H}$ is the identity matrix. Since $h \ge 1$ and $p_i \ge 1$, the $\relu$ acts as the identity function for all these dimensions.
        
        To selectively increment $p_{h+1}$ when $m=1$, we construct an indicator for each $k \in \{1, \dots, H-1\}$ that evaluates to $1$ when $m=1 \land h=k$, and $0$ otherwise. This requires $3$ nodes per index $k$ to form a piecewise linear basis. We stack $H-1$ blocks of size $3 \times (H+2)$ to form $W_{1,\text{pls}}$. The $k$-th block is:
        \begin{equation*}
            W_{1,\text{pls}}^{(k)} = 
            \begin{pmatrix}
                -(k-1) & 1 & \mathbf{0}_{1 \times H} \\
                -k & 1 & \mathbf{0}_{1 \times H} \\
                -(k+1) & 1 & \mathbf{0}_{1 \times H}
            \end{pmatrix}
        \end{equation*}

        The second layer matrix $W_2 = \begin{bmatrix} W_{2,\text{id}} & \parallel & W_{2,\text{pls}} \end{bmatrix}$ linearly combines the hidden representations to yield $v^\prime$. 
        
        First, $W_{2,\text{id}} \in \R^{(H+2) \times (H+3)}$ recovers the unmodified states and structurally applies the $m$ increment to $h$:
        \begin{equation*}
            W_{2,\text{id}} = 
                \begin{pmatrix}
                    1 & -1 & 0 & \mathbf{0}_{1 \times H} \\
                    1 & -1 & 1 & \mathbf{0}_{1 \times H} \\
                    \mathbf{0}_{H \times 1} & \mathbf{0}_{H \times 1} & \mathbf{0}_{H \times 1} & I_{H}
                \end{pmatrix}
        \end{equation*}
        Finally, $W_{2,\text{pls}} \in \R^{(H+2) \times 3(H-1)}$ routes the linear coefficients $[1, -2, 1]$ to the corresponding coordinates for $p_2$ through $p_{H}$ (noting $p_1$ is never modified since $h \ge 1 \implies h+1 \ge 2$):
        \begin{equation*}
            W_{2,\text{pls}} = 
            \begin{pmatrix}
                \mathbf{0}_{3 \times 3} & \mathbf{0}_{3 \times 3} & \dots & \mathbf{0}_{3 \times 3} \\
                [1 \parallel -2 \parallel 1] & \mathbf{0}_{1 \times 3} & \dots & \mathbf{0}_{1 \times 3} \\
                \mathbf{0}_{1 \times 3} & [1 \parallel -2 \parallel 1] & \dots & \mathbf{0}_{1 \times 3} \\
                \vdots & \vdots & \ddots & \vdots \\
                \mathbf{0}_{1 \times 3} & \mathbf{0}_{1 \times 3} & \dots & [1 \parallel -2 \parallel 1]
            \end{pmatrix}
        \end{equation*}
        
        By construction, if $m=1$, the $k$-th discrete pulse evaluates to $\max(0, h - k + 1) - 2\max(0, h - k) + \max(0, h - k - 1)$. Because $h$ is integral, this yields exactly $1$ at $h=k$ and $0$ otherwise. 
        Conversely, if $m=-1$, the arguments to the $\relu$s become $h + k - 1$, $h + k$, and $h + k + 1$. Because $h \ge 1$ and $k \ge 1$, we are guaranteed that $h + k - 1 \ge 1 > 0$. Consequently, all arguments are strictly positive, placing the $\relu$s purely in their linear regime. The combination perfectly collapses to $(h + k - 1) - 2(h + k) + (h + k + 1) = 0$. Thus, no pointer $p_i$ is affected when $m=-1$, satisfying the lemma.
    \end{proof}

    \begin{lemma}
    \label{lem:addwidplus1}
        Let $v \in \R^{H+1}$ be a vector defined as the concatenation $v = [p_1 \parallel p_2 \parallel \dots \parallel p_H \parallel wd]^\top$, where $wd \ge 0$ and $p_i \ge 0$ for all $i \in \{1, \dots, H\}$. Assume that $p_i \le wd + 1$ for all $i$, and that $p_i = wd + 1$ for at most one index $i$. There exists an FFN with ReLU activation, such that:
        \begin{equation}
        \label{eq:addwidplus1}
            v^\prime = 
            \begin{cases} 
                [p_1 \parallel p_2 \parallel \dots \parallel p_H \parallel wd + 1]^\top
                & \text{if } \exists i \text{ s.t. } p_i = wd + 1 \\
                v & \text{otherwise}
           \end{cases}
        \end{equation}
    \end{lemma}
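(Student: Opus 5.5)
We would build the FFN from one indicator neuron pair per index $i$, aggregated linearly into the $wd$ coordinate, while the $p_i$ and $wd$ pass through unchanged. Since all coordinates of $v$ are non-negative, the pass-through is immediate: include $H+1$ hidden units computing $\relu(p_i)=p_i$ and $\relu(wd)=wd$, and route them back by identity rows in $W_2$. The task then reduces to adding the scalar $\ind[\exists i:\ p_i = wd+1]$ to the last coordinate.

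For the indicator, we will use the hypothesis $p_i \le wd+1$. Put $z_i = p_i - wd$, so $z_i \le 1$. The plan is to use the ramp difference
\[
    \relu(z_i) - \relu(z_i - 1),
\]
which equals $1$ exactly when $z_i \ge 1$, hence $z_i = 1$. This requires the values to be integers (as counts $p_i$ and widths $wd$ are in \autoref{thm:cotwidofpath}), so that $0<z_i<1$ cannot occur. It equals $0$ when $z_i \le 0$. Each index therefore contributes two hidden units, with pre-activations $p_i - wd$ and $p_i - wd - 1$, coming from rows of $W_1$ with entries $+1$ on $p_i$ and $-1$ on $wd$, and bias $0$ or $-1$ respectively. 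The last row of $W_2$ receives $+1,-1$ from each pair, so the output is
\[
    wd + \sum_{i=1}^{H}\bigl(\relu(p_i-wd)-\relu(p_i-wd-1)\bigr).
\]
The hidden dimension is $3H+1$, and $b_2 = 0$.

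The only delicate point, and the one we expect to be the main obstacle in stating the lemma cleanly, is ensuring the sum is itself an indicator rather than a count. Here the assumption that $p_i = wd+1$ for at most one $i$ is exactly what is needed: the sum is $1$ in the first case of \eqref{eq:addwidplus1} and $0$ otherwise, which verifies both cases. If integrality were not granted, we would instead argue from the usage in \autoref{thm:cotwidofpath}, where \autoref{lem:addonetospecificheight} increments one $p_i$ by one per step starting from integers. That keeps everything integral and maintains the invariant $p_i \le wd+1$ with at most one equality, since $wd = \max_i p_i$ before the increment. So the hypotheses hold along the chain of thought.
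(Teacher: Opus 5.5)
Your construction is correct and is essentially the paper's: identity pass-through of every coordinate plus per-index units $\relu(p_i - wd)$ summed into the $wd$ coordinate, with the at-most-one-equality hypothesis making the sum an indicator. Your second unit $\relu(p_i - wd - 1)$ is identically zero under $p_i \le wd + 1$, so it is redundant; this is why the paper gets by with hidden dimension $2H+1$ instead of your $3H+1$.

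Your integrality caveat is genuinely needed, not cosmetic. The paper's case analysis silently skips $wd < p_i < wd+1$. Over the reals the target map jumps at $p_i = wd+1$, so no ReLU FFN, being continuous, can realize it unless the inputs are restricted to integers, as they are in \autoref{thm:cotwidofpath}.
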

    
    \begin{proof}
        Let the input dimension be $d_{in} = H + 1$. We construct a hidden layer of dimension $d_h = 2H + 1$. We partition the hidden layer into two sets of nodes: an identity mapping to preserve the input state, and a set of $H$ temporary indicator nodes to detect the increment condition.
        
        The first weight matrix $W_1 \in \R^{(2H+1) \times (H+1)}$ is constructed as a block matrix:
        \begin{equation*}
            W_1 = 
            \begin{pmatrix}
                I_{H+1} \\
                I_H  -\mathbf{1}_H
            \end{pmatrix}
        \end{equation*}
        where $I_k$ denotes the $k \times k$ identity matrix, and $\mathbf{1}_H \in \R^{H \times 1}$ is the all-ones column vector. 
        
        Applying the $\relu$ activation yields the hidden state $h = \max(0, W_1 v) \in \R^{2H+1}$. 
        Because $p_i \ge 0$ and $wd \ge 0$, the first $H+1$ dimensions pass through the $\relu$ unchanged, preserving the original vector $v$. 
        The remaining $H$ dimensions compute the temporary variables $tmp_i = \max(0, p_i - wd)$ for $i \in \{1, \dots, H\}$. 
        
        Given the constraint $p_i \le wd + 1$, the difference $p_i - wd$ is bounded above by $1$. Consequently, the temporary variables evaluate to:
        \begin{equation*}
            tmp_i = 
            \begin{cases} 
                1 & \text{if } p_i = wd + 1 \\
                0 & \text{if } p_i \le wd
            \end{cases}
        \end{equation*}
        Since $p_i = wd + 1$ holds for at most one index, the vector $tmp = [tmp_1 \parallel \dots \parallel tmp_H]^\top$ is guaranteed to be either a one-hot vector or the zero vector.
        
        The second weight matrix $W_2 \in \R^{(H+1) \times (2H+1)}$ reconstructs the output dimension and aggregates the temporary variables into the $wd$ coordinate:
        \begin{equation*}
            W_2 = 
            \begin{pmatrix}
                I_H & \mathbf{0}_{H \times 1} & \mathbf{0}_{H \times H} \\
                \mathbf{0}_{1 \times H} & 1 & \mathbf{1}_H^\top
            \end{pmatrix}
        \end{equation*}
        
        Multiplying $h$ by $W_2$ leaves the components $p_1 \dots p_H$ strictly isolated and unmodified. The final coordinate is computed as $wd^\prime = wd + \sum_{i=1}^H tmp_i$. Because $tmp$ contains at most a single $1$ corresponding to the condition $p_i = wd + 1$, the sum evaluates to $1$ if the condition is met, and $0$ otherwise.
    \end{proof}

    \begin{theorem}
    \label{thm:cotwidofpath}
        A two-layer single-head Transformer decoder can find $\wid(\psi(w))$ for a Dyck word $w$ in exactly $n$ CoT steps, where $|w| = n$.
    \end{theorem}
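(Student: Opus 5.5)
The plan is to follow the sketch given in \autoref{sec:results} and assemble the decoder from \autoref{lem:addonetospecificheight} and \autoref{lem:addwidplus1}, with correctness inherited from \autoref{prop:widthpathtree}.

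\textbf{Embedding.} I use $d = \tfrac{3n}{2}+4$. The input token for position $\tau$ is
\[
x_\tau = [e_\tau \parallel q_\tau \parallel 0 \parallel 0^{n/2+1} \parallel 0]^\top ,
\]
where the blocks are a one-hot position, the step $q_\tau = s_{\tau+1}-s_\tau$, the height $ht$, the counts $p$, and the width $wd$. The initial token is $y_0 = [e_0 \parallel 0 \parallel 1 \parallel 1\,\mathbf 1 \parallel 1]^\top$.

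The paddings are chosen to meet the lemmas' hypotheses.
\begin{itemize}
\item The height is stored offset by one, so the root level is $ht=1$. This satisfies the requirement $h\ge 1$ of \autoref{lem:addonetospecificheight}.
\item Every count is stored offset by one, so a level with $k$ nodes has $p_l = k+1$. This satisfies $p_i \ge 1$.
\item The width starts at $wd=1$, which encodes a true width of $0$ under the same offset. The true width is recovered by subtracting one, or the offset is cancelled in $W_2^{(2)}$ as a bias.
\end{itemize}
Since the height of $\psi(w)$ is at most $n/2$, a block of $H=n/2+1$ counters suffices, and $h<H$ holds throughout.

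\textbf{Induction invariant.} After step $t$, the token $y_t$ satisfies:
\begin{itemize}
\item its position block is $e_t$;
\item its step scalar is $0$;
\item $ht = s_t+1$;
\item $p_l - 1 = |\{i\le t: s_i=l,\ q_i=+1\}|$;
\item $wd-1 = \max_l (p_l-1)$.
\end{itemize}

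\textbf{Layer 1.} The attention queries $\pi_{pos}(y_t)$ against keys $\pi_{pos}(\alpha)$. Unique hard attention picks the lowest-indexed match, which is the input token $x_t$. The value writes $q_t$ into the step slot; I may use the shift $S$ if indexing requires it.

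The FFN then applies \autoref{lem:addonetospecificheight} with $m=q_t$ and $h=ht$. It is the identity on the position block, and it advances the position by the shift, as in \autoref{lem:shiftonehot}, through the same ReLU branch trick. On a $\UU$ this yields $ht\mapsto ht+1$ and increments exactly the counter of the new level $s_{t+1}$. On a $\DD$ it yields $ht\mapsto ht-1$ and leaves the counters unchanged. Because each counter is indexed by the level reached, the invariant for the counters follows directly.

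\textbf{Layer 2.} The attention is trivial. The FFN applies \autoref{lem:addwidplus1} to $(p,wd)$, zeroes the step scalar, and passes the other blocks through unchanged.

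The hypotheses of \autoref{lem:addwidplus1} follow from the invariant. Before the update every $p_l\le wd$, and one step increments at most one counter by one. Hence after the update $p_l\le wd+1$, with equality for at most one index. So $wd$ is incremented exactly when the new count exceeds the old maximum, and $wd-1=\max_l(p_l-1)$ is preserved.

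\textbf{Conclusion.} After $n$ steps the whole word has been read. By \autoref{prop:widthpathtree}, $\max_l (p_l-1)=\wid(\psi(w))$. The count of $n$ steps matches \autoref{def:algorelationbycot}.

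\textbf{Main obstacle.} The delicate point is reconciling the positivity hypotheses of \autoref{lem:addonetospecificheight} with the zero-based quantities. The root sits at height $0$, and unreached levels have count $0$. A second concern is that the two lemmas must be applied in separate FFNs: the max update depends on the post-increment counts, so the two updates cannot be merged into one layer. This is exactly why the construction needs two layers.

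On the first issue, I would first try the uniform $+1$ offset described above and verify that \autoref{lem:addonetospecificheight} still reads the correct level index (that is, the $k$ matching $s_t$). If the lemma's internal $h\ge1$ arithmetic misfires on $\DD$ steps at the root, I would instead gate the pulses with $\relu(q_t)$ via \eqref{eq:proprelu}.
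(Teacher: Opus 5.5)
Your proposal is correct and follows the paper's proof essentially step for step: the same $\tfrac{3n}{2}+4$-dimensional embedding, an attention read of $q_t$ followed by \autoref{lem:addonetospecificheight} in the first FFN, \autoref{lem:addwidplus1} behind a trivial attention in the second, and correctness from \autoref{prop:widthpathtree}. Your uniform $+1$ offset and your explicit check of the hypotheses of \autoref{lem:addwidplus1} fix a point the paper glosses over: its $y_0$ sets $ht=0$ and $p=0$, so the pulses, which start at $k=1$, would never count up-steps leaving the root. The remaining issues are cosmetic bookkeeping: the lemma increments $p_{ht+1}=p_{s_{t+1}+1}$, so depth $l$ is tallied in $p_{l+1}$ rather than $p_l$, and $ht$ can reach $H$ at the peak of a path-shaped tree, which is harmless because no pulse fires on a $\DD$ step and the maximum does not depend on how the counters are labeled.
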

    \begin{proof}
        Given a Dyck word $w$ of length $n$, $\hgt(\psi(w))$ is bounded above by $\frac{n}{2}+1$, that is, when $\psi(w)$ is a path. The $t$\textsuperscript{th} decoding step records the width $\hgt(\psi(w))$ of the partial tree obtained after reading the prefix $w_0 \cdots w_{t-1}$, equivalently the lattice path $\{(\tau, s_\tau)\}_{\tau=0}^{t}$. The embedding dimension is $d = \frac{3n}{2}+4$, split into five blocks: the position $pos \in \R^{n}$ in one-hot form; the next position is a scalar holding the step $q_t \in \{-1,+1\}$ on an input token; the next is another scaler storing the cumulative sum of all $q_\tau$ see till the current CoT step. From the remaining block of length $\frac{n}{2}+1$, $p_i$ stores $|\{\tau \mid s_\tau = i \text{ and } q_\tau = s_{\tau} - s_{\tau-1} = +1\}|$ for $0 \le \tau \le t$ and $i \in \{1, \ldots, \frac{n}{2}\}$. And the final block $wd$ is a scalar that stores the maximum of all $p_i$, which turns out to be the width, as claimed in \autoref{prop:widthpathtree}.

        Assume $\mathcal{E}_n = \{e_0, \ldots, e_{n-1}\}$ denotes the standard basis vector on $\R^{n}$. The input tokens $x_0, \ldots, x_{n-1} \in R^d$ encodes the lattice path $\{\tau, s_\tau\}_{\tau=0}^{n}$ as follows:
        \[
            x_\tau = [e_\tau \parallel q_\tau \parallel 0 \parallel 0^{\frac{n}{2}+1} \parallel 0]^\top.
        \]
        The Dynamic is carried out by tokens $y_t \in \R^d$ with 
        \[
            y_0 = [e_0 \parallel 0 \parallel 0 \parallel 0^{\frac{n}{2}+1} \parallel 0]^\top.
        \]
        The selectors $\pi_{pos} : \R^d \to \R^n$ selects the first one-hot block that corresponds to the current position, $\pi_{q}: \R^d \to \{-1, 0, 1\}$ selects the block that records the corresponding move taken at the position for input tokens or $0$ for generated tokens, $\pi_{ht} : \R^d \to \R_{\ge 0}$ records the height, $\pi_{p_i}: \R^d \to \R_{\ge 0}$ records $p_i$ as discussed above. Finally, $\pi_{pos}: \R^d \to \R_{\ge 0}$ records the $\wid$. Analogous to \autoref{thm:cotpathstrahler}, the attention mechanism of the first layer employs a shift matrix $S$ to find the $\pi_{pos}$ and records the corresponding $\pi_{q}$ from the input tokens. Let $\alpha^\prime$ be the vector after multiplication by $W_O^{(1)}$ where every block is $0$ except $\pi_q(\alpha^\prime) = q_\tau$. After the residual connection with $y_t$, let this vector be $\tilde{y}_t$. We employ $W_1^{(1)}$ and $W_2^{(1)}$ of the following FFN layer similar to \autoref{lem:addonetospecificheight} so that $[\pi_q(\tilde{y}_t) \parallel \pi_{ht}(\tilde{y}_t) \parallel \pi_{p_1}(\tilde{y}_t) \parallel \cdots \parallel \pi_{\frac{n}{2}+1}(\tilde{y}_t)]^\top$ serves the vector $v$ and undergoes an update as in \eqref{eq:addonetospecificheight}. The remaining positions remain unchanged.

        Note that the above operation would impact only one position of all $p_1, \ldots, p_{\frac{n}{2}+1}$. For the second layer, we will utilize the FFN only. We will consider $w_O^{(1)}$ as $\mathbf{0}$. Let the vector obtained after applying the residual connection be $\tilde{y}_t^{(1)}$. The FFN layer performs updates on $[\pi_{p_1}(\tilde{y}_t^{(1)}) \parallel \cdots \parallel \pi_{\frac{n}{2}+1}(\tilde{y}_t^{(1)}) \parallel wd]^\top$ similar to \eqref{eq:addwidplus1} as shown in \autoref{lem:addwidplus1}. The linear projections do not affect any other positions of $\tilde{y}_t^{(1)}$ except that $W^{(2)}_2$ updates the the block $q$ to $0$. The resultant vector is $y_{t+1}$ and $\pi_{wd}(y_{n}) = \wid(\psi(w))$.
     \end{proof}

\end{document}